\documentclass[11pt,a4paper]{article}

\usepackage[margin=1in]{geometry}
\usepackage[T1]{fontenc}
\usepackage[utf8]{inputenc}
\usepackage{lmodern}
\usepackage{microtype}

\usepackage{amsmath,amssymb,amsthm,mathtools}
\usepackage{booktabs,longtable,array,multirow}
\usepackage{graphicx}
\usepackage{xcolor}
\usepackage[hyphens]{url}
\usepackage[round,authoryear]{natbib}
\usepackage[unicode,hidelinks]{hyperref}
\usepackage[capitalise,noabbrev]{cleveref}
\usepackage{caption}
\usepackage{enumitem}
\usepackage{amssymb}   

\theoremstyle{definition}
\newtheorem{definition}{Definition}[section]

\newtheorem{example}{Example}[section]
\newtheorem{remark}{Remark}[section]

\theoremstyle{plain}
\newtheorem{theorem}{Theorem}[section]
\newtheorem{lemma}{Lemma}[section]
\newtheorem{proposition}{Proposition}[section]
\newtheorem{corollary}{Corollary}[section]
\crefname{definition}{Definition}{Definitions}
\Crefname{definition}{Definition}{Definitions}
\crefname{theorem}{Theorem}{Theorems}
\Crefname{theorem}{Theorem}{Theorems}
\crefname{lemma}{Lemma}{Lemmas}
\Crefname{lemma}{Lemma}{Lemmas}
\crefname{proposition}{Proposition}{Propositions}
\Crefname{proposition}{Proposition}{Propositions}
\crefname{corollary}{Corollary}{Corollaries}
\Crefname{corollary}{Corollary}{Corollaries}
\crefname{assumption}{Assumption}{Assumptions}
\Crefname{assumption}{Assumption}{Assumptions}
\crefname{example}{Example}{Examples}
\Crefname{example}{Example}{Examples}
\crefname{remark}{Remark}{Remarks}
\Crefname{remark}{Remark}{Remarks}

\newcommand{\R}{\mathbb{R}}
\newcommand{\N}{\mathbb{N}}
\newcommand{\Z}{\mathbb{Z}}
\newcommand{\E}{\mathbb{E}}
\newcommand{\Prob}{\mathbb{P}}
\newcommand{\Ind}[1]{\mathbf{1}\!\left[#1\right]}
\newcommand{\Var}{\operatorname{Var}}
\newcommand{\Cov}{\operatorname{Cov}}
\DeclareMathOperator{\JSD}{JSD}

\newcommand{\graph}{G}
\newcommand{\Yg}{Y_{\graph}}

\title{\bfseries Agent Behavioral Contracts II:\\[3pt]
  {\normalfont\large Certifying Compositional Reliability Without Assuming Independence}}
\author{%
  Varun Pratap Bhardwaj\\
  Qualixar / Independent Researcher, India\\
  \texttt{varun.pratap.bhardwaj@gmail.com}\\[1pt]
  {\small ORCID: 0009-0002-8726-4289}
  \and
  Garima Singh\\
  Independent Researcher, India\\
  \texttt{garima1213@gmail.com}
  \and
  Arun Pratap Bhardwaj\\
  Independent Researcher, India\\
  \texttt{arun.pratap.bhardwaj@gmail.com}
}
\date{}

\begin{document}
\maketitle

\begin{abstract}
\noindent
Compositional reliability bounds for multi-agent systems multiply component
reliabilities, a step licensed by a conditional-independence assumption that is
routinely stated and rarely tested. We test it. Two instances of one model,
composed in a two-agent handoff, co-fail on $90.0\%$ of the missions on which
either fails ($\log\mathrm{OR}=6.66$, 95\% CI $[6.38,7.00]$; $\phi=0.916$). The
evidence is a preregistered confirmatory evaluation of $18{,}000$ missions with
deterministic scoring and no model in the judging loop, inside a larger campaign
whose other topologies we report as secondary. Substituting a
different model reduces the association significantly in the confirmatory
motif and in both secondary topologies (six of six contrasts); substituting a
different \emph{vendor},
with the model already different, does not --- a registered hypothesis that fails
to replicate and that we report as a null. An unmanipulated same-model pair,
present in every arm, returns fifteen of fifteen null contrasts: the result a
design free of graph-wide confounding would produce, and one that would have
been informative had it come out otherwise.

The error is signed and runs against the operator: positive dependence inflates
joint failure above the independence product, so redundancy is over-credited
exactly when components share a model. The assumption-free alternative is often
vacuous --- the certified floor is zero whenever mean component reliability falls
below $1-1/m$ --- and fitting a dependence model is worse: we prove that a
bootstrap bound on a fitted model's functional loses coverage of the true
reliability as $n \to \infty$, because the identification gap is $O(1)$ while the
bootstrap haircut is $O(n^{-1/2})$. More data makes such a certificate
worse, with no visible symptom.

We give a finite-sample certificate that assumes no dependence structure: a
linear program over the joint, taken over a Bonferroni--Clopper--Pearson box
around measured co-execution moments. It is sound, sharp for the information
supplied, and monotone in the moment family under a stated Bonferroni
allocation. On four-stage data, enriching from ten moment functionals to fourteen
narrows the identified interval by $85.7\%$ and lifts the certified floor from
$0.2455$ to $0.4116$. A companion anytime-valid certificate holds its empirical
type-I error at $0.0471$ or below across every admissible betting fraction,
recovering the SPRT exactly at the optimal bet. An ablation at a conceded design
effect $1.31$ times the largest measured moves the floor by at most $2.69$
percentage points.

We also show that the dependence statistics in common use --- Jaccard, $\phi$,
Kendall's $\tau_a$ --- are bounded by the marginals and can significantly
\emph{reverse} an apparent ordering of conditions when the compared agents fail
at different rates, a reversal we observe and then replicate on two further
inference backends. The contracts, the mission generators, the scoring code, the
analysis scripts, and the preregistration are released; every reported statistic
is regenerated by those scripts rather than transcribed.
\end{abstract}

\section{Introduction}
\label{sec:intro}

\subsection{The independence gap}

A multi-agent pipeline is certified the way a series system is certified: bound
each component's reliability, multiply, report the product. The v1 Agent
Behavioral Contract framework \citep{bhardwaj2026abc} does this, and so does
every compositional reliability argument we are aware of for agent systems. The
step is licensed by a conditional-independence condition --- C5 in
\cref{def:c5} --- which asserts that a downstream agent's contract satisfaction
is independent of an upstream agent's internal execution given a compliant
handoff.

C5 is stated in v1 and never tested. It is also, on reflection, implausible
exactly where multi-agent systems are most often deployed: a reviewer agent
checking a writer agent is very often the same model with a different prompt.
Two instances of one model do not have independent blind spots. They share
them.

We measure how much this matters. The preregistered confirmatory evaluation
is a two-agent handoff over 18{,}000 missions with deterministic scoring and no
model in the judging loop; the registration places the other topologies of the
campaign in secondary work, and we report them as such.
Two instances of \texttt{mistral-small-24b} co-fail on
$90.0\%$ of the missions on which either fails, with $\log\mathrm{OR} = 6.66$
(95\% CI $[6.38, 7.00]$). Replacing the second agent with a different model
reduces the association significantly, in the confirmatory motif and in both
secondary topologies. Replacing the \emph{vendor}, with the model already
different, does not.

The consequence runs against the operator. By \cref{prop:gap} the compositional
gap is signed: under positive dependence, joint failure exceeds the independence
product, so redundancy is over-credited precisely when the redundant components
share a model. A dashboard that multiplies reliabilities reports a reassuring
number whose governing assumption the data reject, and nothing in the pipeline
signals it.

\subsection{Why the obvious repairs fail}

Dropping the assumption gives the Fr\'echet--Hoeffding bounds, which are sharp
and frequently vacuous: by \cref{cor:vacuous} the certified floor is exactly zero
whenever mean component reliability falls below $1 - 1/m$, which for four
components at $p = 0.75$ is the entire regime of interest.

Fitting a dependence model --- a Gaussian one-factor copula, the standard choice
in reliability practice --- is worse than either. \Cref{thm:collapse} proves that
a bootstrap lower bound on the fitted model's functional loses coverage of the
\emph{true} reliability as $n \to \infty$: the identification gap is $O(1)$ while
the bootstrap haircut shrinks like $n^{-1/2}$, so past some finite sample size the
interval sits entirely above the truth and never returns. Collecting more data
does not repair such a certificate; it narrows the interval around the wrong
target, and nothing in the interval signals it.

\subsection{Approach}

We constrain the joint law with measured co-execution moments and optimise over
everything consistent with them. The resulting certificate is a linear program
over the $2^m$ cells of the joint, taken over a Bonferroni--Clopper--Pearson box
around the empirical moments. It is sound with no dependence assumption
(\cref{thm:tier1}), sharp for the information supplied (\cref{thm:lp-sharp}), and
tightens monotonically as the moment family grows
(\cref{prop:monotone,prop:bonferroni}). On real four-stage data, enriching from
ten moment functionals to fourteen narrows the identified interval by $85.7\%$
and lifts the certified floor from $0.2455$ to $0.4116$.

Because deployed systems are monitored continuously rather than at a sample size
fixed in advance, we add a certificate valid under arbitrary stopping
(\cref{thm:ville}). Its null constrains only a conditional mean, so it requires
no independence assumption at all --- it is immune to the failure the rest of the
paper documents.

\subsection{Contributions}

\begin{description}[leftmargin=2.6em,style=nextline,itemsep=0.15em]
  \item[C1 (co-lead)] A preregistered evaluation that \emph{manipulates}
    model sharing at three levels, confirmatory over 18{,}000 two-agent-handoff
    missions and replicated in two secondary topologies within a
    30{,}820-mission campaign, converting an observation about particular
    systems into a contrast attributable to substitution. Correlated failure among same-backbone agents
    is itself established by concurrent work
    (\cref{sec:related-dependence}); the controlled design, and the finding that
    the ordering holds at the model level and vanishes at the vendor level, are
    ours.
  \item[C2 (lead)] A finite-sample, copula-agnostic reliability certificate for
    composed agent pipelines (\cref{thm:tier1}), with a Bonferroni allocation
    that makes the floor monotone in the moment family by construction.
  \item[C3 (supporting)] \Cref{thm:collapse}: coverage of a model-based
    reliability floor tends to zero as the sample grows. The phenomenon is
    misspecification bias; what we add is the explicit witness of
    \cref{ex:witness}, indistinguishable from the fitted family on the moments
    the model uses.
  \item[C4 (co-lead)] Anytime-valid certification of graph reliability, with the
    SPRT recovered exactly at the optimal bet (\cref{prop:sprt}).
  \item[C5 (supporting)] A demonstration that marginal-sensitive dependence
    statistics can significantly \emph{reverse} an apparent condition ordering,
    replicated on independent backends (\cref{sec:e1-marginal,sec:e5}).
  \item[C6 (supporting)] An internal negative control returning fifteen of
    fifteen nulls, and a full artifact release including the preregistration.
\end{description}

\Cref{sec:novelty} states what we do \emph{not} claim, including one registered
hypothesis that fails to replicate.

\subsection{Organisation}

\Cref{sec:related} places the work against contract-based specification,
guardrails, the recent correlated-failure literature, and dependence-free
bounding. \Cref{sec:prelim} states the ABC framework in full, including formulas
withheld from v1 under a patent claim now withdrawn.
\Cref{sec:independence-fails} establishes what breaks without C5.
\Cref{sec:tiers,sec:lp,sec:anytime} construct the certificate.
\Cref{sec:enforcement,sec:impl} describe runtime enforcement and the
implementation. \Cref{sec:evaluation} reports E1--E6.
\Cref{sec:discussion} covers limitations and threats to validity, and
\cref{sec:novelty,sec:future,sec:conclusion} close.

\section{Background and Related Work}
\label{sec:related}

\subsection{Contracts for software and for agents}

Design by Contract \citep{meyer1992,hoare1969} specifies preconditions,
postconditions, and invariants that a component must honour, and makes
composition tractable by letting each component's postcondition discharge the
next one's precondition. The formal-methods lineage that followed
\citep{clarke1999,lamport2002,leino2010,barnett2004} verifies such
specifications statically, static analysis bounds behaviour by abstraction
\citep{cousot1977}, and dynamic-invariant work \citep{ernst2007} infers them
from traces.

None of it transfers directly to agents driven by natural-language
instructions, because the specification surface is prompts rather than types and
the execution is stochastic. The v1 framework \citep{bhardwaj2026abc} closes that
gap by making preconditions, hard and soft invariants, governance policies, and
recovery mechanisms first-class and runtime-enforceable, and by replacing
deterministic satisfaction with the probabilistic $(p,\delta,k)$-notion of
\cref{def:pdk}. We take that framework as given and measure what happens when several
contracted agents are composed.

\subsection{Steering, filtering, and guarding}

Three families of technique constrain agent behavior, and none of them
certifies a composed system.

Training-time alignment --- Constitutional AI \citep{bai2022}, RLHF
\citep{ouyang2022} --- shapes general tendencies but cannot encode a
deployment-specific invariant, and offers no runtime guarantee. Output guardrails
\citep{rebedea2023} filter or redirect responses matching prohibited patterns at
inference time; the current generation adds programmable policy languages and
validator libraries, and operates per-response. Observability platforms trace and
score agent runs after the fact.

The common limitation is scope: each acts on a single turn or a single agent.
None specifies invariants over a multi-agent pipeline, and none produces a
statement of the form ``this composition satisfies its contract with probability
at least $\hat L$.'' \Cref{tab:comparison} makes the comparison explicit.

\begin{table}[t]
\centering
\small
\setlength{\tabcolsep}{5pt}
\caption{Capability comparison against representative approaches. \checkmark{}
= supported, $\circ$ = partial, --- = not addressed.
\textbf{RE} runtime enforcement ·
\textbf{FS} formal specification ·
\textbf{MA} multi-agent scope ·
\textbf{DM} dependence measured ·
\textbf{CB} composed reliability bound ·
\textbf{CA} copula-agnostic ·
\textbf{AV} anytime-valid.
The last three are the contribution of this paper; \textbf{DM} is where
concurrent work has recently arrived.}
\label{tab:comparison}
\begin{tabular}{@{}lccccccc@{}}
\toprule
& RE & FS & MA & DM & CB & CA & AV \\
\midrule
Training-time alignment \citep{bai2022,ouyang2022}
  & --- & --- & --- & --- & --- & --- & --- \\
Output guardrails \citep{rebedea2023}
  & \checkmark & $\circ$ & --- & --- & --- & --- & --- \\
Static verification \citep{clarke1999,leino2010}
  & --- & \checkmark & $\circ$ & --- & $\circ$ & --- & --- \\
Agent observability
  & --- & --- & \checkmark & --- & --- & --- & --- \\
Multi-agent debate \citep{du2023,liang2023}
  & --- & --- & \checkmark & --- & --- & --- & --- \\
Byzantine-tolerance view \citep{zhang2025byzantine,tang2026agree}
  & --- & $\circ$ & \checkmark & $\circ$ & --- & --- & --- \\
Failure attribution \citep{falat2026,verifymas2026}
  & --- & --- & \checkmark & $\circ$ & --- & --- & --- \\
Correlated-failure detection \citep{triage2026}
  & --- & --- & \checkmark & \checkmark & --- & --- & --- \\
Multi-agent calibration \citep{cagecal2026}
  & --- & --- & \checkmark & \checkmark & --- & --- & --- \\
ABC v1 \citep{bhardwaj2026abc}
  & \checkmark & \checkmark & $\circ$ & --- & $\circ$ & --- & --- \\
\midrule
\textbf{This work}
  & \checkmark & \checkmark & \checkmark & \checkmark & \checkmark & \checkmark & \checkmark \\
\bottomrule
\end{tabular}
\end{table}

\subsection{Correlated failure in multi-agent systems}
\label{sec:related-dependence}

That agents sharing a base model may fail together is no longer a conjecture,
and we state plainly what is already established so that our own contribution is
not overstated.

\citet{triage2026} study a three-agent triage architecture and quantify
correlated failure directly, reporting a joint error rate inflated $3.53\times$
over independence (BCa 95\% CI $[3.50, 3.59]$) with $\phi = 0.612$, and finding
$57.2\%$ of errors occurring under agent agreement. Their agents are classical
learners --- a random forest, a $k$-nearest-neighbour model, and a calibrated
meta-model --- over intrusion-detection and clinical-readmission data, and model
sharing is not a manipulated variable; the design varies learner \emph{family} by
construction rather than varying it experimentally.
\citet{cagecal2026} recalibrate multi-agent confidence against a counterfactual
no-communication baseline and observe, qualitatively, that within-family blocks
of agents sharing a backbone exhibit shared blind spots. The
\citet{aisafety2026} report lists correlated failure among same-base-model agents
as a known concern, as does the survey of \citet{multiagentrisks2025}; work on
governed capability evolution \citep{govcap2026} addresses the related problem of
keeping a guarantee valid as components are upgraded. Work on failure attribution \citep{falat2026,verifymas2026}
traces which agent caused a trajectory to fail, and Byzantine-tolerance analyses
\citep{zhang2025byzantine,tang2026agree} study whether agents can reach agreement
at all under faults.

Three things separate the present work from that literature. First, model
sharing here is a \emph{manipulated} experimental variable --- three sharing
levels, preregistered, with a confirmatory core of $18{,}000$ missions on the
two-agent handoff motif and two further topologies reported as secondary
replication --- rather than a property of a fixed architecture, which is what makes
the arm contrasts of \cref{sec:e1} causal claims about substitution rather than
observations about a particular ensemble. Second, none of this work produces a
\emph{bound}: measuring that failures correlate does not tell a practitioner what
reliability they may certify, and \cref{sec:tiers,sec:lp} supply exactly that.
Third, the statistic matters. \citet{triage2026} lead with $\phi$, which
\cref{def:dependence} classifies as marginal-sensitive;
\cref{sec:e1-marginal} shows that such statistics can reverse the apparent
ordering of two conditions purely through a difference in marginal failure rates.
That result is a caveat on our own headline numbers and, we think, a useful one
for reading theirs.

\subsection{Bounds without independence}

Bounding a joint probability from marginals is classical. The
Fr\'echet--Hoeffding inequalities \citep{frechet1951,hoeffding1940} give the
sharp sandwich of \cref{thm:frechet}, and Boole's problem
\citep{boole1854,hailperin1965} recasts the search for the extremal joint as a
linear program --- the device \cref{sec:lp} instantiates over co-execution
moments. Copula theory \citep{sklar1959,nelsen2006} parameterises dependence
structures, and the Gaussian one-factor model of \cref{rem:tier2} is the standard
choice in reliability practice \citep{barlow1975}. \Cref{thm:collapse} is our
reason for refusing to certify on it.

The moment-problem view \citep{bertsimas2005} bounds expectations subject to
moment constraints, and Bonferroni-type inequalities \citep{bonferroni1936}
supply the multiplicity correction \cref{prop:bonferroni} uses. Our contribution
is not the LP, which is standard, but its use as a \emph{finite-sample
certificate} for agent pipelines: the combination of an exact Clopper--Pearson
box \citep{clopper1934} with the extremal LP, and the resulting guarantee of
\cref{thm:tier1}.

\subsection{Sequential and anytime-valid inference}

Wald's SPRT \citep{wald1945} certifies with an expected sample size far below
the fixed-$n$ requirement, but at a stopping rule fixed in advance.
Game-theoretic probability \citep{shafer2019,ville1939} and the recent literature
on e-values and testing by betting
\citep{shafer2021,ramdas2023,waudbysmith2024,grunwald2024}, together with
time-uniform confidence sequences \citep{robbins1970,howard2021} replace that with certificates
valid under arbitrary stopping. \Cref{sec:anytime} applies this to graph
reliability, where the payoff is specific: the null constrains only a conditional
mean, so the certificate requires no independence assumption at all --- the exact
assumption this paper shows to be false for the composition bound.
\Cref{prop:sprt} records that the SPRT is recovered exactly at the optimally
tuned bet, so anytime validity is obtained without loss against a known
alternative.

\subsection{Agent evaluation}

Benchmarks for agent capability \citep{jimenez2024,liu2024agentbench,zhou2024}
measure task success, and agent architectures
\citep{yao2023,shinn2023,wang2023,park2023,hong2024,qian2024,wu2024autogen,chase2022}
optimise it. None reports whether component failures are dependent, which is the
quantity a composed guarantee needs; task-success rates are marginals, and
\cref{sec:e1} shows marginals do not determine the composed outcome.
\Cref{sec:evaluation} treats cross-agent failure dependence as the measured
quantity under a preregistered manipulation of model sharing. We know of no
earlier evaluation combining the four elements that design rests on: a
registered hypothesis set, a manipulated sharing condition, three graph
topologies, and deterministic contract scoring with no model in the judging
loop. Correlated failure itself is not new --- \cref{sec:related-dependence}
reports it. What is new here is measuring it as an intervention rather than
observing it, with a scoring rule that cannot itself induce correlation across
arms.

\section{Preliminaries: the ABC framework}
\label{sec:prelim}

This section states the Agent Behavioral Contract framework in full. The
framework is due to the v1 paper \citep{bhardwaj2026abc}, where several of these
formulas were withheld under a patent claim that has since been withdrawn; every
one is stated here without redaction, and \cref{app:formulas} gives the complete
catalogue with the corresponding v1 equation numbers. Readers familiar with v1
can skip to \cref{sec:independence-fails}, which is where this work departs from
it.

\subsection{Contracts and compliance}

\begin{definition}[Behavioral contract; v1 Def.~3.1]
\label{def:contract}
A \emph{behavioral contract} is a tuple
\[
  \mathcal{C} = (\mathcal{P},\, \mathcal{I}_{\mathrm{hard}},\,
                 \mathcal{I}_{\mathrm{soft}},\, \mathcal{G}_{\mathrm{hard}},\,
                 \mathcal{G}_{\mathrm{soft}},\, \mathcal{R}),
\]
where $\mathcal{P} = \{p_1,\dots,p_m\}$ is a finite set of \emph{preconditions},
predicates over the initial state $s_0$; $\mathcal{I}_{\mathrm{hard}}$ and
$\mathcal{I}_{\mathrm{soft}}$ are \emph{hard} and \emph{soft invariants};
$\mathcal{G}_{\mathrm{hard}}$ and $\mathcal{G}_{\mathrm{soft}}$ are hard and soft
\emph{governance policies} constraining operational authority such as tool use
and spending; and
$\mathcal{R} : (\mathcal{I}_{\mathrm{soft}} \cup \mathcal{G}_{\mathrm{soft}})
\times S \rightharpoonup A^{*}$
is a partial \emph{recovery map} from a violated soft constraint and a state to a
corrective action sequence. We write
$\mathcal{C} = (\mathcal{P}, \mathcal{I}, \mathcal{G}, \mathcal{R})$ with
$\mathcal{I} = \mathcal{I}_{\mathrm{hard}} \cup \mathcal{I}_{\mathrm{soft}}$
where the partition is not in play.
\end{definition}

The hard/soft partition is the load-bearing distinction. A single hard violation
is a contract breach. A soft violation is tolerated if recovery occurs inside a
bounded window, which is what makes contracts usable against systems that are
stochastic by construction.

\begin{definition}[Constraint evaluation scores; v1 Def.~3.6, eqs.~1--2]
\label{def:scores}
For a state--action pair $(s_t, a_t)$,
\begin{align}
  C_{\mathrm{hard}}(t) &=
    \frac{\left|\{c \in \mathcal{I}_{\mathrm{hard}} \cup \mathcal{G}_{\mathrm{hard}}
      : c(s_t,a_t) = \text{true}\}\right|}
         {\left|\mathcal{I}_{\mathrm{hard}} \cup \mathcal{G}_{\mathrm{hard}}\right|},\\
  C_{\mathrm{soft}}(t) &=
    \frac{\left|\{c \in \mathcal{I}_{\mathrm{soft}} \cup \mathcal{G}_{\mathrm{soft}}
      : c(s_t,a_t) = \text{true}\}\right|}
         {\left|\mathcal{I}_{\mathrm{soft}} \cup \mathcal{G}_{\mathrm{soft}}\right|}.
\end{align}
Both lie in $[0,1]$. The evaluator is stateless: any turn is evaluable
independently of the turns before it.
\end{definition}

\begin{definition}[Turn compliance]
\label{def:turn-compliance}
A turn is \emph{hard-compliant} iff it incurs zero hard violations, i.e.\
$C_{\mathrm{hard}}(t) = 1$, and \emph{soft-compliant} iff zero soft violations
remain outstanding after recovery. The binary hard verdict
$h_t = \Ind{C_{\mathrm{hard}}(t) = 1}$ is the quantity every dependence estimate
in \cref{sec:evaluation} is computed on.
\end{definition}

\subsection{Drift}

\begin{definition}[Composite drift score; v1 Def.~3.12, eqs.~7--9]
\label{def:drift}
The \emph{drift} of an agent at time $t$ is the convex combination
\begin{equation}
  D(t) = w_c\, D_{\mathrm{compliance}}(t) + w_d\, D_{\mathrm{distributional}}(t),
  \qquad w_c + w_d = 1,
\end{equation}
with
\begin{equation}
  D_{\mathrm{compliance}}(t) = 1 - \bar{C}(t)
    = \frac{\sum_i w_i\bigl(1 - \sigma_i(t)\bigr)}{\sum_i w_i},
  \qquad
  D_{\mathrm{distributional}}(t)
    = \JSD\bigl(P_{\mathrm{obs}}(t) \,\|\, P_{\mathrm{ref}}\bigr).
\end{equation}
Defaults are $w_c = 0.6$, $w_d = 0.4$. Both components lie in $[0,1]$, so
$D(t) \in [0,1]$.
\end{definition}

\begin{definition}[Distributional drift term; v1 eq.~9]
\label{def:jsd}
$P_{\mathrm{obs}}(t)$ is the empirical action distribution over a sliding window
of recent actions and $P_{\mathrm{ref}}$ is a calibrated baseline from a
compliant reference session. With $M = \tfrac12(P+Q)$,
\begin{equation}
  \JSD(P \,\|\, Q) = \tfrac12 D_{\mathrm{KL}}(P \,\|\, M)
                   + \tfrac12 D_{\mathrm{KL}}(Q \,\|\, M).
\end{equation}
$\JSD$ \citep{lin1991} is symmetric and bounded in $[0,1]$, and $\sqrt{\JSD}$
is a metric \citep{endres2003}, which is why it is preferred here to raw KL divergence.
\end{definition}

\begin{definition}[$(p,\delta,k)$-satisfaction; v1 Def.~3.7, eqs.~3--4]
\label{def:pdk}
An agent $A$ \emph{$(p,\delta,k)$-satisfies} contract $\mathcal{C}$ over session
length $T$ if both hold:
\begin{align}
  \Prob\bigl[C_{\mathrm{hard}}(t) = 1 \;\;\forall t \in \{0,\dots,T\}
    \,\big|\, P(s_0)\bigr] &\geq p, \label{eq:pdk-hard}\\
  \Prob\bigl[\forall t:\; C_{\mathrm{soft}}(t) < 1-\delta
    \implies \exists\, t' \in \{t,\dots,\min(t+k,T)\}:\;
    C_{\mathrm{soft}}(t') \geq 1-\delta \,\big|\, P(s_0)\bigr] &\geq p.
    \label{eq:pdk-soft}
\end{align}
Here $p \in [0,1]$ is a probability threshold, $\delta \in [0,1]$ a tolerable
deviation, and $k \in \N$ a recovery window.
\end{definition}

\begin{remark}
\Cref{eq:pdk-soft} is a probabilistic guarantee about \emph{recoverable}
compliance. It is not the deterministic condition
$\max_t |C_{\mathrm{soft}}(t) - 1| \leq \delta$, which appears in some
descriptions of this framework and states something strictly stronger and
different. \Cref{def:pdk} follows the v1 paper.
\end{remark}

\begin{definition}[Reliability index; v1 Def.~3.20, eq.~13]
\label{def:theta}
\begin{equation}
  \Theta = \alpha_1 \bar{C}(t) + \alpha_2 \bigl(1 - \bar{D}(t)\bigr)
         + \alpha_3 \frac{1}{1+E} + \alpha_4 S,
  \qquad \textstyle\sum_{i=1}^{4}\alpha_i = 1,
\end{equation}
with defaults $\alpha_1 = 0.35$ (compliance), $\alpha_2 = 0.25$ (stability),
$\alpha_3 = 0.20$ (event frequency, $E$ the count of violation events), and
$\alpha_4 = 0.20$ (stress resilience). $S$ is the \emph{stress resilience index}
$\E[C(t) \mid \text{stressed}] / \E[C(t) \mid \text{baseline}]$ (v1 eq.~12),
which is a distinct quantity from the recovery success rate; descriptions that
equate the two are in error.
\end{definition}

\subsection{Drift dynamics}

\begin{definition}[Ornstein--Uhlenbeck drift dynamics; v1 Def.~4.1, eq.~14]
\label{def:ou}
Drift evolves as
\begin{equation}
  dD = \bigl(\alpha - \gamma D(t)\bigr)\,dt + \sigma\, dW(t),
\end{equation}
with $\alpha > 0$ the natural drift rate, $\gamma > 0$ the contract-recovery
mean-reversion rate, $\sigma > 0$ the diffusion coefficient, and $W$ a standard
Wiener process. Treating $D(t) \geq 0$ is a modelling simplification (v1
Remark~4.2); the Gaussian stationary law of \cref{thm:ou} places negligible but
non-zero mass below zero.
\end{definition}

\begin{theorem}[Stationary drift law and bound; v1 Thm.~4.3]
\label{thm:ou}
Let $D$ solve \cref{def:ou} with $\alpha,\gamma,\sigma > 0$, and let the
initial condition satisfy $\E[D(0)^2] < \infty$ with $D(0)$ independent of the
driving Wiener process. Then:
\begin{enumerate}[label=(\roman*),leftmargin=2.2em,itemsep=0.1em]
  \item the stationary law is
    $\pi_D = \mathcal{N}\!\bigl(\alpha/\gamma,\; \sigma^{2}/(2\gamma)\bigr)$;
  \item $\E_{\pi}[D] = \alpha/\gamma$, so $\E_{\pi}[D] < 1$ iff $\gamma > \alpha$;
  \item $\Var_{\pi}(D) = \sigma^{2}/(2\gamma)$;
  \item for $\eta > 0$,
    $\Prob_{\pi}\bigl(D > \alpha/\gamma + \eta\bigr)
     \leq \exp\bigl(-\gamma \eta^{2}/\sigma^{2}\bigr)$;
  \item writing $e(t) = D(t) - \alpha/\gamma$,
    $\E[e(t)^{2}] = \E[e(0)^{2}]\,e^{-2\gamma t}
      + \frac{\sigma^{2}}{2\gamma}\bigl(1 - e^{-2\gamma t}\bigr)$,
    so convergence to stationarity is exponential at rate $2\gamma$. (For
    deterministic $D(0)$ this reads $e(0)^2 e^{-2\gamma t} + \cdots$.)
\end{enumerate}
\end{theorem}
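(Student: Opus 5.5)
The approach is to solve the linear SDE of \cref{def:ou} explicitly and read all five claims off the resulting Gaussian representation. Centre the process at $\mu = \alpha/\gamma$ by setting $e(t) = D(t) - \mu$; then $de = -\gamma e\,dt + \sigma\,dW$. Applying It\^o's formula to $e^{\gamma t} e(t)$ gives the variation-of-constants solution
\[
  e(t) = e^{-\gamma t} e(0) + \sigma \int_0^t e^{-\gamma(t-s)}\,dW(s).
\]
The stochastic integral has a deterministic integrand, so it is Gaussian with mean zero. By the It\^o isometry its variance is $\sigma^2 \int_0^t e^{-2\gamma(t-s)}\,ds = \frac{\sigma^2}{2\gamma}(1 - e^{-2\gamma t})$. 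Square-integrability of $D(0)$ makes every moment below finite.

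For (v), square the representation and take expectations. The cross term $2e^{-\gamma t}\,\E\bigl[e(0)\int_0^t \cdots\,dW\bigr]$ vanishes because $D(0)$ is independent of $W$ and the integral has mean zero; this is the one place the independence hypothesis is used. What remains is $\E[e(0)^2]e^{-2\gamma t} + \frac{\sigma^2}{2\gamma}(1-e^{-2\gamma t})$. For deterministic $D(0)$ the expectation simply drops, which gives the parenthetical form.

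For (i), I would not only show that $\mathcal{N}(\mu,\sigma^2/(2\gamma))$ is invariant but also that it is the limit law from any admissible start. Invariance: if $e(0) \sim \mathcal{N}(0, \sigma^2/(2\gamma))$ independent of $W$, then $e(t)$ is a sum of independent centred Gaussians with variance $e^{-2\gamma t}\frac{\sigma^2}{2\gamma} + \frac{\sigma^2}{2\gamma}(1-e^{-2\gamma t}) = \frac{\sigma^2}{2\gamma}$. Convergence: $e^{-\gamma t} e(0) \to 0$ in $L^2$, and the integral's law converges to $\mathcal{N}(0,\sigma^2/(2\gamma))$ by convergence of Gaussian characteristic functions. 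So $D(t) \Rightarrow \pi_D$ for every square-integrable initial condition. Uniqueness of the stationary law follows from the same limit, since any invariant law must equal its own limit. Parts (ii) and (iii) are then immediate from (i). The equivalence $\mu < 1 \iff \gamma > \alpha$ holds because $\gamma > 0$.

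Part (iv) is the Gaussian Chernoff bound. For $X \sim \mathcal{N}(0,v)$ and $\lambda > 0$, Markov's inequality gives $\Prob(X > \eta) \le \exp(\lambda^2 v/2 - \lambda\eta)$. Optimising at $\lambda = \eta/v$ yields $\exp(-\eta^2/(2v))$, and with $v = \sigma^2/(2\gamma)$ this is exactly $\exp(-\gamma\eta^2/\sigma^2)$.

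The only step needing care is the convergence claim inside (i), where the statement says "the stationary law" rather than merely "an invariant law." Everything else is explicit computation on the closed-form solution.
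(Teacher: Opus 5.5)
Your proposal is correct and follows the paper's proof essentially step for step: you centre at $\alpha/\gamma$, obtain the variation-of-constants solution from It\^o's formula applied to $e^{\gamma t}e(t)$, and use the It\^o isometry for (v), with the cross term removed by independence of $D(0)$ from $W$. You then get (i) from weak convergence plus invariance under an independent Gaussian start, and (iv) from the Gaussian tail bound. The only differences are refinements the paper leaves implicit: you derive that tail bound by an explicit Chernoff argument rather than citing it, and you draw uniqueness of the invariant law out of the limit explicitly.
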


\begin{proof}
See \cref{app:ou}.
\end{proof}

Part (ii) is the design criterion: a contract whose recovery rate $\gamma$
exceeds the natural drift rate $\alpha$ holds expected drift below $1$
indefinitely. Part (iv) converts that mean statement into the tail bound a
practitioner needs, and part (v) says the transient decays at rate $2\gamma$, so
the recovery rate sets both the stationary level and the time to reach it.

\subsection{Composition}

\begin{definition}[Composability conditions; v1 Def.~4.7]
\label{def:c1c4}
Agents $A$ and $B$ with contracts $\mathcal{C}_A, \mathcal{C}_B$ and a handoff
invariant $\mathcal{I}_h$ are \emph{composable} if:
\begin{description}[leftmargin=3.2em,style=nextline,itemsep=0.1em]
  \item[C1 Interface compatibility]
    $\mathrm{Type}(\mathrm{Post}_A) \subseteq \mathrm{Type}(\mathcal{P}_B)$.
  \item[C2 Assumption discharge]
    $\mathrm{Post}_A \wedge \mathcal{I}_h \implies \mathcal{P}_B$.
  \item[C3 Governance consistency]
    $\mathrm{Allowed}(\mathcal{G}_A) \cap
     \mathrm{Prohibited}(\mathcal{G}_B) = \emptyset$.
  \item[C4 Recovery independence]
    any post-recovery state of $A$ still satisfies $\mathcal{P}_B$.
\end{description}
\end{definition}

\begin{definition}[Conditional independence of contract satisfaction; v1 Thm.~4.11 preamble]
\label{def:c5}
Let $E_A$, $E_B$, $E_h$ denote the events that $A$ satisfies its contract, that
$B$ satisfies its contract, and that the handoff invariant holds. Condition
\textbf{C5} states
\begin{equation}
  \Prob(E_B \mid E_A \cap E_h) = \Prob(E_B \mid E_h).
  \label{eq:c5}
\end{equation}
C1--C4 are required for the deterministic composition theorem; C5 is required
\emph{additionally} for the probabilistic one. Treating the two sets as a single
list ``C1--C5'' conflates the deterministic and probabilistic results.
\end{definition}

\Cref{eq:c5} is the assumption this paper is about. \Cref{sec:independence-fails}
shows what fails when it does not hold, and \cref{sec:e1} measures the extent to
which it does not hold in practice.

\begin{definition}[Naive compositional bound; v1 Thm.~4.11, eqs.~28--29]
\label{def:naive-bound}
Under C1--C5, for the two-agent composition $A \oplus B$,
\begin{equation}
  p_{A \oplus B} \;\geq\; p_A \cdot p_B \cdot p_h,
  \qquad
  \delta_{A \oplus B} \;\leq\; \delta_A + \delta_B + \delta_h,
  \label{eq:naive}
\end{equation}
extending to an $N$-agent chain (v1 Cor.~4.13, eqs.~30--31) as
$p_{\mathrm{chain}} \geq \prod_{i=1}^{N} p_i \prod_{i=1}^{N-1} p_{h_i}$ and
$\delta_{\mathrm{chain}} \leq \sum_i \delta_i + \sum_i \delta_{h_i}$.
\end{definition}

\subsection{Certification}

\begin{definition}[Sequential probability ratio certification; v1 §7]
\label{def:sprt}
For $H_0: p \leq p_0$ against $H_1: p \geq p_1$, the SPRT accumulates the
Bernoulli log-likelihood ratio and stops at the first boundary crossing. Its
expected sample size satisfies
\begin{equation}
  \E[N^{*}] \;\approx\; \frac{\log(1/\alpha_{\mathrm{err}})}
                             {D_{\mathrm{KL}}(\hat p \,\|\, p_0)},
\end{equation}
scaling as $O(\log(1/\alpha)/D_{\mathrm{KL}})$.
\end{definition}

\begin{definition}[Fixed-sample certification baseline]
\label{def:hoeffding}
The fixed-sample requirement of \citet{hoeffding1963} for absolute accuracy $\varepsilon$ at
error $\alpha_{\mathrm{err}}$ is
$N_H = \log(2/\alpha_{\mathrm{err}}) / (2\varepsilon^{2})$, scaling as
$O(1/\varepsilon^{2})$. At $p_0 = 0.85$, $p_1 = 0.95$, $\alpha = 0.05$ the SPRT
needs roughly $58$ sessions against Hoeffding's $184$; at
$\varepsilon = 0.01$ the gap widens to about $300$ against 18{,}445.
\end{definition}

\begin{remark}
\label{rem:sprt-limit}
Both \cref{def:sprt,def:hoeffding} certify at a stopping rule fixed in advance.
Neither survives a team that watches a dashboard and stops when the number looks
good. \Cref{sec:anytime} replaces them with a certificate valid under arbitrary
stopping, and \cref{sec:e4} measures its type-I error.
\end{remark}

\subsection{Cost}

\begin{proposition}[Per-action enforcement cost; v1 Prop.~4.15]
\label{prop:cost}
Enforcement costs $O(k + |A|)$ per action, where $k$ is the number of
constraints and $|A|$ the action-vocabulary size: $O(k)$ for constraint
evaluation, $O(|A|)$ for the JSD histogram update, and $O(1)$ for compliance
aggregation. For $k < 100$ and $|A| < 50$ the measured overhead is under
10\,ms per action (v1 Remark~4.16).
\end{proposition}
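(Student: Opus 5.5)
The plan is to account for the work done on a single action and show it decomposes additively into three independent stages, each bounded separately. The enforcement loop on receiving $(s_t,a_t)$ does three things. It evaluates every constraint in $\mathcal{I}_{\mathrm{hard}}\cup\mathcal{I}_{\mathrm{soft}}\cup\mathcal{G}_{\mathrm{hard}}\cup\mathcal{G}_{\mathrm{soft}}$ to form $C_{\mathrm{hard}}(t)$ and $C_{\mathrm{soft}}(t)$ as in \cref{def:scores}. It updates the sliding-window action histogram and recomputes $\JSD(P_{\mathrm{obs}}(t)\,\|\,P_{\mathrm{ref}})$ as in \cref{def:jsd}. Finally it combines these into $D(t)$ and the compliance aggregate via \cref{def:drift}. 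The total cost is the sum of the three stage costs, so it suffices to bound each.

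For the first stage, I will use the statelessness clause of \cref{def:scores}: each predicate $c(s_t,a_t)$ depends only on the current pair. Under the standing modelling assumption that a single predicate evaluates in $O(1)$, the two counts and two normalisations cost $O(k)$ in total. Computing the hard verdict $h_t$ of \cref{def:turn-compliance} is then one further comparison.

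For the second stage, sliding the window by one action changes at most two histogram bins, one incremented and one decremented, which is $O(1)$. Recomputing $M=\tfrac12(P+Q)$ and the two KL sums in \cref{def:jsd} touches each of the $|A|$ support points once, giving $O(|A|)$. One could maintain the JSD incrementally, since only two terms change, but that refinement is unnecessary for the stated bound.

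For the third stage, given the already-computed $C$ scores and JSD value, $D(t)$ is a fixed convex combination with weights $w_c,w_d$. The weighted compliance mean $\bar C(t)$ can be maintained as a running sum, so this stage is $O(1)$. Summing the stages gives $O(k+|A|)$.

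The main obstacle is not the asymptotics but the two hidden assumptions behind them. The first is the $O(1)$ cost per predicate. I would state explicitly that the bound counts predicate evaluations as unit cost, since a predicate that invokes a tool or a model would break it. The second is the numerical claim of under 10\,ms for $k<100$, $|A|<50$. This is empirical and cannot be proved; I would defer it to the v1 measurement (v1 Remark~4.16), and the proof would only note that it is consistent with roughly 150 elementary operations per action.
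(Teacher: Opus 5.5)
Your proposal is correct and follows the paper's proof essentially step for step. It uses the same three-stage decomposition: stateless $O(k)$ constraint evaluation under the unit-cost-predicate assumption, an $O(1)$ sliding-window histogram update plus an $O(|A|)$ divergence sum, and an $O(1)$ convex combination. Like the paper, you defer the 10\,ms figure to the v1 measurement rather than claiming to prove it.
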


\subsection{Graphs, motifs, and co-failure}

The remaining definitions are new to this paper and set up
\cref{sec:independence-fails} onward.

\begin{definition}[Agent graph and graph outcome]
\label{def:graph}
An \emph{agent graph} $\graph$ is a finite DAG whose nodes are agents, each
carrying a contract, and whose edges are handoffs carrying handoff invariants.
Node $i$ produces a hard verdict $h_i \in \{0,1\}$. A \emph{composition rule}
$\rho$ maps the node verdicts to the graph outcome
$\Yg = \rho(h_1,\dots,h_m) \in \{0,1\}$. We write $p_i = \Prob(h_i = 1)$.
\end{definition}

\begin{definition}[Motifs]
\label{def:motifs}
Two composition rules are used throughout:
\emph{series} $\rho_{\mathrm{ser}} = \bigwedge_{i} h_i$ over a linear handoff
chain, and \emph{quorum} $\rho_{q/m} = \Ind{\sum_i h_i \geq q}$ over $m$ workers
feeding a deterministic $q$-of-$m$ aggregator. The parallel motif is the
$q = 1$ case: two independent branches feed a deterministic merge and
$\rho_{\mathrm{par}} = \Ind{\sum_i h_i \geq 1}$, so the graph succeeds when at
least one branch passes and the merge is valid. Parallel is therefore a
redundancy rule, not a conjunction; the distinction matters because
\cref{cor:joint-failure} shows the two rules put positive dependence on
\emph{opposite} sides of the safety argument. Deterministic merge and aggregator
nodes carry no contract and are excluded from dependence estimation.
\end{definition}

\begin{definition}[Co-failure table and sharing condition]
\label{def:cofailure}
For nodes $i,j$ with failure indicators $F_i = 1-h_i$, the \emph{co-failure
table} over $n$ missions has cells
$n_{11} = \sum \Ind{F_i F_j = 1}$, $n_{10} = \sum \Ind{F_i(1-F_j) = 1}$,
$n_{01} = \sum \Ind{(1-F_i)F_j = 1}$, $n_{00} = \sum \Ind{(1-F_i)(1-F_j) = 1}$,
with proportions $p_{ab} = n_{ab}/n$ and marginal failure rates
$p_i = p_{11}+p_{10}$, $p_j = p_{11}+p_{01}$. The \emph{sharing condition} of
the pair is \texttt{same\_model} when both nodes run the same model weights,
\texttt{same\_vendor} when they run different models from one vendor, and
\texttt{different\_vendor} otherwise.
\end{definition}

\begin{definition}[Dependence functionals]
\label{def:dependence}
On a co-failure table we use the \emph{overlap}
$J = n_{11}/(n_{11}+n_{10}+n_{01})$, the binary Kendall
$\tau_a = 2(p_{11}p_{00} - p_{10}p_{01})$, the phi coefficient
$\phi = (p_{11}p_{00}-p_{10}p_{01})/\sqrt{p_i(1-p_i)p_j(1-p_j)}$, the log odds
ratio, and Yule's $Q$. We call $J$, $\tau_a$, $\phi$ \emph{marginal-sensitive}
and the last two \emph{marginal-free}: the latter are invariant under row and
column scaling of the table, the former are not. \Cref{sec:e1-marginal} shows
the distinction is decisive rather than cosmetic.
\end{definition}

\section{What breaks when independence fails}
\label{sec:independence-fails}

\Cref{def:naive-bound} certifies a composed system by multiplying component
reliabilities. That step is licensed by C5 (\cref{def:c5}) and by nothing else.
This section establishes three things: that the multiplicative bound is not
conservative when C5 fails, that the assumption-free alternative is too weak to
be useful, and that a certificate built on a \emph{fitted} dependence model can
lose coverage of the truth entirely, and that collecting more data does not
repair it.

\subsection{The bound is not conservative}

\begin{proposition}[Signed compositional gap]
\label{prop:gap}
Let $\graph$ be a series motif over components with hard verdicts
$h_1,\dots,h_m$ and $\Yg = \bigwedge_i h_i$. Then
\begin{equation}
  \Prob(\Yg = 1) - \prod_{i=1}^{m} p_i
  \;=\; \Cov\Bigl(h_1, \prod_{i=2}^{m} h_i\Bigr)
        + \sum_{k=2}^{m-1} \Bigl(\prod_{i=1}^{k-1} p_i\Bigr)
          \Cov\Bigl(h_k, \prod_{i=k+1}^{m} h_i\Bigr).
\end{equation}
In particular for $m = 2$ the gap is exactly $\Cov(h_1,h_2)$, so under
positive dependence the independence product \textbf{understates} series
reliability --- and correspondingly \emph{overstates} the probability that the
series fails at all. The unsafe direction is a different estimand, stated
separately in \cref{cor:joint-failure}.
\end{proposition}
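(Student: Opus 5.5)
The identity is a telescoping decomposition of $\E\bigl[\prod_{i=1}^m h_i\bigr]-\prod_{i=1}^m p_i$. Since $\Yg=\bigwedge_i h_i=\prod_i h_i$ for binary verdicts, $\Prob(\Yg=1)=\E[\prod_i h_i]$. For $k=1,\dots,m$ write $T_k=\prod_{i=k}^{m} h_i$, with the convention $T_{m+1}=1$, and set $a_k=\bigl(\prod_{i=1}^{k-1}p_i\bigr)\E[T_k]$. Then $a_1=\E[\prod_i h_i]$ and $a_m=\bigl(\prod_{i=1}^{m-1}p_i\bigr)p_m=\prod_i p_i$. The gap is therefore the telescoping sum $\sum_{k=1}^{m-1}(a_k-a_{k+1})$.

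Each increment is a single covariance. Because $T_k=h_kT_{k+1}$, we have
\[
  \E[T_k]-p_k\E[T_{k+1}] = \E[h_kT_{k+1}]-\E[h_k]\E[T_{k+1}] = \Cov(h_k,T_{k+1}).
\]
Multiplying by $\prod_{i=1}^{k-1}p_i$ gives
\[
  a_k-a_{k+1}=\Bigl(\prod_{i=1}^{k-1}p_i\Bigr)\Cov\Bigl(h_k,\prod_{i=k+1}^m h_i\Bigr).
\]
The $k=1$ term has an empty prefactor equal to $1$ and is the leading term of the display. Summing over $k=1,\dots,m-1$ reproduces the stated formula exactly; no independence is used anywhere, only linearity of expectation. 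I will check the index range carefully, since the display starts its sum at $k=2$ and ends it at $m-1$, with $k=1$ written separately. For $m=2$ the sum is empty and the gap is $\Cov(h_1,h_2)$.

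For the interpretive claim at $m=2$: positive dependence means $\Cov(h_1,h_2)>0$, so $\Prob(\Yg=1)>p_1p_2$ and the product understates series reliability. Taking complements, $\Prob(\Yg=0)=1-\Prob(\Yg=1)<1-p_1p_2$, so the independence product overstates the probability that the series fails. For general $m$ I will only note that a positive sign of each tail covariance (for instance under association of the $h_i$) gives the same direction. I will not claim this for arbitrary $m$ without such an assumption, since the statement's ``in particular'' is restricted to $m=2$.

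There is no real obstacle here. The only step needing care is bookkeeping: confirming that the prefactors telescope cleanly, that the boundary terms $a_1$ and $a_m$ match the two sides, and that the conventions for empty products are consistent with the displayed index ranges.
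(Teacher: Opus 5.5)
Your proposal is correct and follows the paper's proof. The paper uses the same tail products $T_k$ with $T_{m+1}=1$ and the identity $\E[T_k]=\Cov(h_k,T_{k+1})+p_k\E[T_{k+1}]$; it unrolls that identity by recursive substitution, whereas you present the same steps as the telescoping sum of $a_k-a_{k+1}$, and your $m=2$ complement argument for the directional claim is the same as the paper's.
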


\begin{proof}
See \cref{app:gap}.
\end{proof}

For a \emph{series} system, then, the independence product is genuinely
conservative and dependence only helps. The unsafe case is redundancy, and it
is a distinct statement:

\begin{corollary}[Redundant joint failure is understated]
\label{cor:joint-failure}
Let $F_i = 1-h_i$. For two components,
$\Prob(F_1 = F_2 = 1) = (1-p_1)(1-p_2) + \Cov(F_1,F_2)$ and
$\Cov(F_1,F_2) = \Cov(h_1,h_2)$. Under positive dependence the probability
that \emph{both} redundant paths fail therefore exceeds the independence
product, so a redundant design is over-credited exactly when its components are
positively associated.
\end{corollary}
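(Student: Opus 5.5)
The plan is a direct covariance expansion, using nothing beyond the definition of covariance and the identity $F_i = 1-h_i$. First, expand the joint failure probability. Since $F_1F_2$ is an indicator,
\[
  \Prob(F_1=F_2=1) = \E[F_1F_2] = \E[F_1]\,\E[F_2] + \Cov(F_1,F_2),
\]
and $\E[F_i] = 1-p_i$ by \cref{def:graph}. This yields the first identity immediately.

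Second, show that $\Cov(F_1,F_2) = \Cov(h_1,h_2)$. Covariance is bilinear, and adding a constant to either argument leaves it unchanged. Therefore
\[
  \Cov(1-h_1,\,1-h_2) = (-1)(-1)\Cov(h_1,h_2) = \Cov(h_1,h_2).
\]
As a cross-check, one can also write this through the co-failure table of \cref{def:cofailure}: the covariance equals $p_{11}p_{00}-p_{10}p_{01}$ in both the failure and success codings, because relabelling $0\leftrightarrow1$ in both coordinates swaps $p_{11}$ with $p_{00}$ and $p_{10}$ with $p_{01}$. This agrees with the $m=2$ case of \cref{prop:gap}.

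Third, draw the interpretive conclusion. Positive dependence means $\Cov(h_1,h_2)>0$, which is equivalent to $\phi>0$ or $\log\mathrm{OR}>0$ in the notation of \cref{def:dependence}. In that case $\Prob(F_1=F_2=1)$ strictly exceeds $(1-p_1)(1-p_2)$.

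For the parallel motif of \cref{def:motifs}, $\Prob(\Yg=0) = \Prob(F_1=F_2=1)$ when the merge node is valid. The independence-based redundancy credit, $1-(1-p_1)(1-p_2)$, therefore exceeds the true reliability $\Prob(\Yg=1)$ by exactly $\Cov(h_1,h_2)$. This is the precise sense in which the design is over-credited.

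No step is a real obstacle. The only point that needs care is stating the over-crediting claim as the exact identity $\Prob(\Yg=1) = 1-(1-p_1)(1-p_2)-\Cov(h_1,h_2)$ for the $q=1$ rule. Doing so makes the sign opposition with \cref{prop:gap} explicit rather than leaving it informal.
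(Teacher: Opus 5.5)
Your proof is correct and is essentially the paper's own argument: expand $\E[F_1F_2]$ as $\E[F_1]\E[F_2]+\Cov(F_1,F_2)$, note that the affine map $h_i \mapsto 1-h_i$ with slope $-1$ leaves covariance unchanged, and read off the sign. Your parallel-motif identity $\Prob(\Yg=1) = 1-(1-p_1)(1-p_2)-\Cov(h_1,h_2)$ is a worthwhile addition, since it states the ``exactly when'' of the corollary as an explicit equivalence, which the paper leaves implicit in its first identity.
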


\begin{proof}
See \cref{app:joint-failure}.
\end{proof}

This is the direction that matters operationally: a quorum or a
reviewer--writer pair is deployed precisely so that
$\Prob(\text{all redundant paths fail})$ is small. \Cref{sec:e1} measures
$\phi$ between $0.58$ and $0.92$; at those magnitudes the independent
calculation is not close.

\begin{example}
\label{ex:quorum}
Take a 2-of-3 quorum whose workers each satisfy their contract with $p = 0.61$,
the \texttt{same\_model} rate measured in \cref{sec:e1}. Under independence the
quorum succeeds with probability $3p^2(1-p) + p^3 = 0.6623$ and therefore fails
with probability $0.3377$. An exchangeable joint consistent with the same
marginals and the measured pairwise association puts quorum failure near
$0.37$, a relative increase of about $9.6\%$ in the quantity the redundancy was
bought to reduce. We stress that marginals and $\phi$ alone do not determine a
2-of-3 failure probability --- the triple moment is free --- so this is an
illustration of magnitude under one consistent joint, not a bound.
\Cref{sec:lp} is the machinery for the bound.
\end{example}

\subsection{The assumption-free alternative is too weak}

Dropping C5 entirely leaves the Fr\'echet--Hoeffding bounds, which hold for every
joint law with the given marginals.

\begin{theorem}[Fr\'echet--Hoeffding sandwich for all-success]
\label{thm:frechet}
For any joint law of $(h_1,\dots,h_m)$ with marginals $p_1,\dots,p_m$,
\begin{equation}
  \max\Bigl(0,\; \sum_{i=1}^{m} p_i - (m-1)\Bigr)
  \;\leq\; \Prob\Bigl(\bigwedge_i h_i = 1\Bigr)
  \;\leq\; \min_i p_i,
\end{equation}
and both bounds are attained, so neither can be improved without further
information.
\end{theorem}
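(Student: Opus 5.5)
The plan has two parts: the two inequalities, then an explicit extremal joint law for each bound. Throughout, write $A_i = \{h_i = 1\}$, so that $\Prob(A_i) = p_i$ and the event of interest is $\bigcap_i A_i$.

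\emph{Upper bound.} Since $\bigcap_j A_j \subseteq A_i$ for every $i$, monotonicity of probability gives $\Prob(\bigcap_j A_j) \le p_i$. Taking the minimum over $i$ gives the right-hand inequality.

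\emph{Lower bound.} The probability is nonnegative, so it suffices to show $\Prob(\bigcap_i A_i) \ge \sum_i p_i - (m-1)$. Pass to complements and apply the union bound:
\[
  \Prob\Bigl(\bigcap_i A_i\Bigr) = 1 - \Prob\Bigl(\bigcup_i A_i^{c}\Bigr) \ge 1 - \sum_i (1-p_i) = \sum_i p_i - (m-1).
\]
Combining with nonnegativity gives the left-hand inequality.

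\emph{Attainment.} I realise every joint law on a single uniform variable $U \sim \mathrm{Unif}[0,1]$, choosing intervals of length $p_i$.

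For the upper bound, take the comonotone coupling $h_i = \Ind{U \le p_i}$. The intervals $[0,p_i]$ are nested, so $\bigwedge_i h_i = \Ind{U \le \min_i p_i}$. Its probability is exactly $\min_i p_i$.

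For the lower bound, instead arrange the failure sets $A_i^c$, of lengths $q_i = 1-p_i$, to be as disjoint as possible.
\begin{itemize}
  \item Case $\sum_i q_i \le 1$: place the failure sets consecutively, $A_i^c = \bigl(\sum_{j<i} q_j,\ \sum_{j\le i} q_j\bigr]$. They are disjoint, so $\Prob(\bigcup_i A_i^c) = \sum_i q_i$. All-success then has probability $1 - \sum_i q_i = \sum_i p_i - (m-1)$.
  \item Case $\sum_i q_i > 1$: lay the intervals consecutively on the circle $\R/\Z$, wrapping around. Each $q_i \le 1$, so each wrapped interval is a genuine subset of measure $q_i$. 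Their union covers the whole circle, so all-success has probability $0$, which equals the max.
\end{itemize}
In both cases each marginal is $\Prob(U \notin A_i^c) = p_i$, as required.

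The main obstacle is this second attainment case. One has to check that the wrapped intervals still have the correct marginal measures $q_i$ and that together they cover the whole circle. Both follow because the total length $\sum_i q_i$ is at least $1$ and the intervals are consecutive, so they leave no gap.

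Finally, for the phrase ``neither can be improved'': attainment at the given marginals means no marginal-only bound can be sharper. That phrase therefore follows immediately from the two constructions.
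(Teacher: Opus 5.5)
Your proposal is correct and follows the paper's proof essentially step for step: monotonicity for the upper bound, and De Morgan plus the union bound for the lower bound. For attainment it uses the comonotone coupling $h_i = \Ind{U < p_i}$ for $\min_i p_i$, and for $\max(0,\sum_i p_i - (m-1))$ disjoint failure sets or consecutive wrap-around arcs on $\R/\Z$. The only cosmetic difference is the boundary case $\sum_i (1-p_i) = 1$: you handle it with the disjoint construction and the paper with the wrap-around one, and both give all-success probability $0$ there.
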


\begin{proof}
See \cref{app:frechet}.
\end{proof}

\begin{corollary}[Vacuity of the assumption-free floor]
\label{cor:vacuous}
The lower bound of \cref{thm:frechet} is $0$ whenever
$\sum_i p_i \leq m-1$, i.e.\ whenever the mean component reliability is at most
$1 - 1/m$. For $m = 4$ components each at $p = 0.75$ the certified floor is
exactly $0$.
\end{corollary}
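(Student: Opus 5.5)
This corollary follows directly from the lower bound in \cref{thm:frechet}; I would not reprove the sandwich. The bound is $\max\bigl(0, \sum_i p_i - (m-1)\bigr)$. It equals $0$ exactly when $\sum_i p_i - (m-1) \leq 0$, that is, when $\sum_i p_i \leq m-1$. Dividing by $m$ rewrites this condition as $\bar p := \frac1m \sum_i p_i \leq 1 - 1/m$, which is the stated form in terms of mean component reliability.

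For the numerical instance, take $m = 4$ and $p_i = 0.75$ for all $i$. Then $\sum_i p_i = 3 = m - 1$, so the argument of the maximum is exactly $0$ and the floor is $0$. This sits on the boundary of the condition, where the inequality holds with equality, so the non-strict ``$\leq$'' in the statement is needed.

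To justify calling the floor \emph{certified} as $0$, and not merely a weak lower bound, I would invoke the attainment clause of \cref{thm:frechet}. When $\sum_i p_i \leq m-1$, there is a joint law with the given marginals under which $\Prob(\bigwedge_i h_i = 1) = 0$: the failure events $\{h_i = 0\}$ are placed disjointly, which is possible because their total mass $\sum_i (1-p_i) = m - \sum_i p_i \geq 1$. So no marginal-only argument can certify anything positive. For the instance, the explicit witness splits $[0,1]$ into four quarters, with component $i$ failing on the $i$-th quarter.

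There is no real obstacle here. The only step needing care is the equivalence between the sum condition and the mean condition, together with the boundary case. The attainment part is already supplied by \cref{thm:frechet}.
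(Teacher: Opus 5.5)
Your proposal is correct and is the paper's proof: the floor $\max\bigl(0,\sum_i p_i-(m-1)\bigr)$ vanishes iff $\sum_i p_i\le m-1$, equivalently $\bar p\le 1-1/m$, and $m=4$, $p=0.75$ sits exactly on that boundary. One slip in your optional attainment aside: when $\sum_i(1-p_i)>1$ the failure events cannot be placed disjointly --- what is needed, and what the wrap-around construction in the proof of \cref{thm:frechet} supplies, is that they \emph{cover} the sample space, which forces overlaps; a disjoint exact cover occurs only at the boundary $\sum_i(1-p_i)=1$, as in your four-quarter witness.
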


\Cref{cor:vacuous} is the practical bind. C5 gives a number that is wrong in the
unsafe direction; dropping C5 gives a number that is right and useless. The
moment-set certificate of \cref{sec:lp} occupies the space between them by
constraining the joint with measured co-execution moments rather than with an
independence assumption, and \cref{sec:e2} measures what that buys: a floor of
$0.4116$ where the pairwise-only certificate gives $0.2455$.

\subsection{Fitting a dependence model does not rescue the situation}

A natural response is to fit a parametric dependence model --- a one-factor
Gaussian copula, say --- estimate its parameters, and report a confidence bound
on the fitted model's all-success functional. This fails, and it fails in a way
that gets worse with more data.

\begin{definition}[Model functional and identified set]
\label{def:identified}
Fix a moment family $\mathcal{J}$ and let $\mathcal{M}(\hat\mu)$ be the set of
joint laws on $\{0,1\}^m$ whose $\mathcal{J}$-moments equal $\hat\mu$. The
\emph{identified set} for all-success is
$\bigl[\underline{R}(\hat\mu),\, \overline{R}(\hat\mu)\bigr]$ where
$\underline{R} = \inf_{Q \in \mathcal{M}} Q(\bigwedge_i h_i = 1)$ and
$\overline{R} = \sup$. A parametric family $\mathcal{F} \subset \mathcal{M}$
induces a \emph{model functional} $R_{\mathcal{F}}(\hat\mu)$, the all-success
probability of the fitted member of $\mathcal{F}$.
\end{definition}

\begin{theorem}[Coverage collapse of a model-based floor]
\label{thm:collapse}
Let $Q^{\star}$ be a true joint law with $\mathcal{J}$-moments $\mu^{\star}$ and
true all-success probability $R^{\star} = Q^{\star}(\bigwedge_i h_i = 1)$.
Suppose the parametric family $\mathcal{F}$ is \emph{misspecified at $\mu^\star$}
in the sense that
\begin{equation}
  \Delta \;:=\; R_{\mathcal{F}}(\mu^{\star}) - R^{\star} \;>\; 0 .
  \label{eq:idgap}
\end{equation}
Let $\hat{L}_n$ be any bootstrap lower confidence bound for
$R_{\mathcal{F}}(\mu^\star)$ satisfying
$\hat{L}_n = R_{\mathcal{F}}(\mu^{\star}) - O_p(n^{-1/2})$. Then
\begin{equation}
  \Prob\bigl(\hat{L}_n \leq R^{\star}\bigr) \;\longrightarrow\; 0
  \qquad \text{as } n \to \infty .
\end{equation}
That is, coverage of the \emph{true} all-success probability tends to zero.
More data does not repair the certificate: the haircut shrinks while the target
does not move. We claim the limit and not monotonicity in $n$, which tightness
alone does not entail.
\end{theorem}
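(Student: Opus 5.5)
The argument needs only one step: rewrite the bootstrap bound as the misspecified target plus a vanishing error, and observe that the error cannot span the fixed gap $\Delta$.

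By hypothesis, $\hat L_n = R_{\mathcal{F}}(\mu^\star) - Z_n$ with $Z_n = O_p(n^{-1/2})$. That is, for every $\varepsilon>0$ there are $M_\varepsilon$ and $N_\varepsilon$ such that $\Prob(|Z_n| > M_\varepsilon n^{-1/2}) < \varepsilon$ for all $n \ge N_\varepsilon$. Using \eqref{eq:idgap}, the coverage event rewrites as
\[
  \{\hat L_n \le R^\star\} \;=\; \{R_{\mathcal{F}}(\mu^\star) - Z_n \le R_{\mathcal{F}}(\mu^\star) - \Delta\} \;=\; \{Z_n \ge \Delta\}.
\]
So coverage requires the haircut $Z_n$ to reach the fixed positive gap $\Delta$.

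Fix $\varepsilon > 0$. Choose $n$ large enough that $n \ge N_\varepsilon$ and $M_\varepsilon n^{-1/2} < \Delta$. Then $\{Z_n \ge \Delta\} \subseteq \{|Z_n| > M_\varepsilon n^{-1/2}\}$, and hence
\[
  \Prob(\hat L_n \le R^\star) < \varepsilon .
\]
Since $\varepsilon$ was arbitrary, $\Prob(\hat L_n \le R^\star) \to 0$. This is the claimed limit.

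The only delicate point is interpretive: how to read ``$\hat L_n = R_{\mathcal{F}}(\mu^\star) - O_p(n^{-1/2})$''. I would make it explicit that the $O_p$ term absorbs everything beyond $R_{\mathcal{F}}(\mu^\star)$: the sampling error of $\hat\mu$ propagated through $R_{\mathcal{F}}$, and the bootstrap quantile haircut. In practice this holds when $R_{\mathcal{F}}$ is differentiable at $\mu^\star$ (so the delta method applies to $\sqrt n(\hat\mu-\mu^\star)$) and the bootstrap quantile is $O_p(n^{-1/2})$. Since the theorem assumes this rate directly, I would state it as an assumption rather than derive it.

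In fact the argument uses only $Z_n = o_p(1)$; the $n^{-1/2}$ rate is stronger than needed. I would add a remark that the proof gives no monotonicity in $n$, which matches the theorem's own caveat. I would also note that the existence of misspecified $Q^\star$ with $\Delta>0$ is the content of \cref{ex:witness}, not part of this proof.
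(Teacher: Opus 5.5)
Your proof is correct and is essentially the paper's argument: both rewrite coverage as the event that the haircut $G_n = R_{\mathcal{F}}(\mu^\star) - \hat L_n$ reaches the fixed gap $\Delta$, then use tightness of $\sqrt{n}\,G_n$ to bound that probability by $\varepsilon$ once $\sqrt{n}\,\Delta > M_\varepsilon$. Your side remark that only $G_n = o_p(1)$ is used is also correct, and you are slightly more careful than the paper in explicitly requiring $n \ge N_\varepsilon$ as well.
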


\begin{proof}
See \cref{app:collapse}.
\end{proof}

The mechanism is a mismatch of orders. The identification gap $\Delta$ in
\cref{eq:idgap} is $O(1)$ --- it is a property of the family, not of the sample
--- while the bootstrap haircut shrinks like $n^{-1/2}$. Past some finite $n$
the entire interval sits above $R^{\star}$ and never returns. The bootstrap is
not malfunctioning; it is covering the estimand it was asked about, which is the
fitted model's functional and not the truth.

\begin{example}[An explicit witness]
\label{ex:witness}
Take the equicorrelated Gaussian-copula joint at $p = 0.6$, $\lambda = 0.8$
(a factor \emph{loading}, so the latent correlation is $\lambda^2 = 0.64$),
$m = 3$. By Gauss--Legendre quadrature over the common factor its marginals are
$0.600000$, its pairwise co-success moments are $0.465237$, and its all-success
probability is $0.392144$ --- so the one-factor family is \emph{correctly
specified} for it and $\Delta = 0$. Every constant quoted for this law in this
paper comes from that one deterministic routine, not from simulation. It is therefore
\textbf{not} itself a witness for \cref{thm:collapse}, a point worth stating
because it is the natural law to reach for and it does not work.

The witness is instead the joint attaining the lower end of the LP-identified
set. Over the pairwise moment family that set is $[0.330475, 0.465237]$, and by
\cref{thm:lp-sharp} the lower endpoint is attained. For this exchangeable case
it has the closed form $2q - p$ in the pairwise moment $q$ and marginal $p$, and
the attaining law $Q^\star$ puts mass on five cells only:
\begin{align*}
  Q^\star(1,1,1) &= 0.330475, &
  Q^\star(0,0,0) &= 0.265237, \\
  Q^\star(0,1,1) = Q^\star(1,0,1) = Q^\star(1,1,0) &= 0.134763, &
  \text{all other cells} &= 0 .
\end{align*}
By construction $Q^\star$ has \emph{exactly} the same marginals and
\emph{exactly} the same pairwise co-success moments as the Gaussian joint: the
two are indistinguishable to any procedure that sees only those moments. Yet
\[
  \Delta \;=\; R_{\mathcal{F}}(\mu^\star) - R^\star
       \;=\; 0.392144 - 0.330475 \;=\; 0.061670 \;>\; 0,
\]
so \cref{thm:collapse} applies to $Q^\star$. \Cref{sec:e3} samples from
$Q^\star$ and measures the collapse, and runs the Gaussian joint as a
correctly-specified control in which no collapse should occur --- and does not.
\end{example}

\begin{corollary}[Why the copula-agnostic floor is the right target]
\label{cor:target}
The LP floor $\underline{R}(\hat\mu)$ of \cref{def:identified} satisfies
$\underline{R}(\mu^{\star}) \leq R^{\star}$ by construction for every
$Q^{\star} \in \mathcal{M}(\mu^{\star})$, with no parametric assumption. A
confidence bound targeting $\underline{R}$ therefore cannot suffer the failure of
\cref{thm:collapse}; its conservatism is the price of that immunity, and
\cref{sec:lp} shows the price falls as the moment family grows.
\end{corollary}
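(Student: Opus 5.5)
The claim in \cref{cor:target} is that $\underline{R}(\mu^{\star}) \leq R^{\star}$ for every $Q^{\star} \in \mathcal{M}(\mu^{\star})$. This is a feasibility argument, followed by a short remark on why \cref{thm:collapse} cannot apply.

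\emph{Step 1 (the inequality).} By \cref{def:identified}, $\mathcal{M}(\mu^{\star})$ is the set of joint laws on $\{0,1\}^m$ whose $\mathcal{J}$-moments equal $\mu^{\star}$. The true law $Q^{\star}$ has $\mathcal{J}$-moments $\mu^{\star}$ by hypothesis, so $Q^{\star}$ is a feasible point of that set. $\underline{R}(\mu^{\star})$ is the infimum of $Q(\bigwedge_i h_i = 1)$ over the set, and an infimum is at most the value at any feasible point. Hence $\underline{R}(\mu^{\star}) \leq Q^{\star}(\bigwedge_i h_i=1) = R^{\star}$. No parametric family appears anywhere, so the inequality holds for every dependence structure consistent with the moments.

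I will also note that $\mathcal{M}(\mu^{\star})$ is a nonempty compact polytope: it is a nonempty, closed, bounded subset of the simplex cut by linear equalities. The infimum is therefore a minimum, which matches the attainment used in \cref{ex:witness} via \cref{thm:lp-sharp}.

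\emph{Step 2 (immunity to collapse).} \cref{thm:collapse} needs a strictly positive gap $\Delta$ between the procedure's estimand and $R^{\star}$. Replace $R_{\mathcal{F}}$ by $\underline{R}$ and apply Step 1: the analogous gap is $\underline{R}(\mu^{\star}) - R^{\star} \leq 0$.

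Now take any lower confidence bound $\hat L_n$ for $\underline{R}(\mu^{\star})$ that is valid at level $1-\alpha$, meaning $\Prob(\hat L_n \leq \underline{R}(\mu^{\star})) \geq 1-\alpha$. On that event, $\hat L_n \leq R^{\star}$. So coverage of the truth is at least $1-\alpha$ for every $n$, and it cannot tend to zero.

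This step is what separates the corollary from an asymptotic statement. The mere rate $\hat L_n = \underline{R} - O_p(n^{-1/2})$ is not enough: with $\Delta = 0$, coverage could still sit below $1-\alpha$. The guarantee I will state therefore rests on finite-sample validity for $\underline{R}$. That validity is exactly what the Clopper--Pearson box and \cref{thm:tier1} supply.

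\emph{Main obstacle.} The subtle point is the word ``therefore''. The identity $\underline{R}(\mu^{\star}) \leq R^{\star}$ alone says nothing about an estimator built from $\hat\mu$, so I will make explicit that immunity is inherited from a valid bound on the estimand $\underline{R}$, not from the inequality by itself. The claim that the price falls as $\mathcal{J}$ grows is deferred to \cref{prop:monotone}. Its proof is the same feasibility argument: enlarging $\mathcal{J}$ shrinks $\mathcal{M}$, so the infimum can only rise.
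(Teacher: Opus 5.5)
Your proposal is correct, and Step 1 is essentially the paper's proof. Since $Q^\star \in \mathcal{M}(\mu^\star)$, the infimum defining $\underline{R}(\mu^\star)$ is at most $R^\star$, and no parametric input is used.

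The difference is in Step 2. The paper stops at negating the hypothesis of \cref{thm:collapse}: \cref{eq:idgap} cannot hold with $R_{\mathcal{F}}$ replaced by $\underline{R}$, so the argument of \cref{app:collapse} does not apply. That removes the collapse mechanism, but it is not itself a coverage guarantee. You correctly note that the bare rate condition with a gap of exactly zero leaves coverage unconstrained. This case actually occurs: in the adversarial arm of \cref{sec:e3}, $Q^\star$ is the LP minimiser, so $\underline{R}(\mu^\star) = R^\star$ exactly. By requiring the bound to be valid for $\underline{R}(\mu^\star)$, you obtain coverage of $R^\star$ of at least $1-\alpha$ for every $n$. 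That is strictly more than the paper's proof delivers, and it is what the Tier-1 column of \cref{sec:e3} exhibits.

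Two small points. First, \cref{thm:tier1} as stated bounds $R^\star$, not $\underline{R}(\mu^\star)$. Its proof does give the latter, because on the box-covering event every law in $\mathcal{M}(\mu^\star)$ is feasible for \cref{eq:tier1}, so $\hat L_1 \leq \underline{R}(\mu^\star)$ there. Second, \cref{prop:monotone} settles the ``price falls'' clause only for point moments. For the reported finite-sample floor you need the pre-allocated allocation of \cref{prop:bonferroni}, because under the used-set allocation the certified floor need not be monotone.
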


\section{A tiered certificate}
\label{sec:tiers}

\Cref{sec:independence-fails} leaves three unusable options: a multiplicative
bound that is unsafe on the failure side, an assumption-free bound that is often
exactly zero, and a fitted model whose coverage collapses. What survives depends
on \emph{what was actually measured}. A certificate should therefore not be a
single number but a selection among tiers, each stating the evidence it requires
and the assumptions it carries.

\begin{definition}[Certification scope]
\label{def:scope}
A \emph{certificate} is a triple $(\hat{L}, \mathcal{A}, \mathcal{S})$ where
$\hat{L}$ is a lower confidence bound on all-success reliability at level
$1-\eta_{\mathrm{conf}}$, $\mathcal{A}$ is the set of assumptions it rests on,
and $\mathcal{S}$ is its scope: the mission distribution, the model versions, and
the topology under which it was obtained. A certificate makes no claim outside
$\mathcal{S}$; in particular it does not transfer across mission mixes, model
upgrades, or topology changes.
\end{definition}

\begin{definition}[The three tiers]
\label{def:tiers}
Let $\mathbf{h} \in \{0,1\}^{m \times n}$ be a pass matrix over $m$ stages and
$n$ missions.
\begin{description}[leftmargin=3.6em,style=nextline,itemsep=0.2em]
  \item[Tier 0 --- observed]
    Available only when the composition was executed end-to-end, i.e.\ every
    stage was scored on the same missions, so $\Yg$ is directly observed.
    $\hat{L}_0$ is the exact Clopper--Pearson lower bound on
    $\Prob(\Yg = 1)$ from $\sum_r \Ind{\Yg^{(r)} = 1}$ successes in $n$ trials.
    Assumptions: i.i.d.\ missions. No copula, no model.
  \item[Tier 1 --- copula-agnostic]
    Available from per-stage and co-execution data alone, when the composed
    pipeline was \emph{not} run end-to-end. $\hat{L}_1$ is the LP minimum of
    \cref{sec:lp} over the Bonferroni--Clopper--Pearson moment box.
    Assumptions: i.i.d.\ missions. No copula, no model.
  \item[Tier 2 --- model]
    The finite-sample Gaussian one-factor floor, via the
    \citet{slepian1962} corner. Assumptions: i.i.d.\ missions
    \emph{and} correct specification of the one-factor family.
\end{description}
\end{definition}

\begin{theorem}[Tier-0 validity]
\label{thm:tier0}
If missions are i.i.d.\ and $\Yg$ is observed on each, then the Clopper--Pearson
lower bound $\hat{L}_0$ satisfies
$\Prob\bigl(\hat{L}_0 \leq \Prob(\Yg=1)\bigr) \geq 1-\eta_{\mathrm{conf}}$
exactly, for every $n$ and every true reliability.
\end{theorem}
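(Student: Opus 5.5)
The plan is to reduce the statement to the textbook coverage property of the one-sided Clopper--Pearson bound. I read ``exactly'' in the statement as ``exact in finite samples, with no asymptotics''; the inequality itself is a conservative $\geq$.

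\textbf{Reduction.} Under i.i.d.\ missions, with $\Yg$ observed on each, the indicators $\Ind{\Yg^{(r)} = 1}$ are i.i.d.\ Bernoulli$(R)$ with $R = \Prob(\Yg = 1)$. Hence $S = \sum_r \Ind{\Yg^{(r)} = 1} \sim \mathrm{Bin}(n,R)$. No copula or dependence structure among the stages enters at all, because $\Yg = \rho(h_1,\dots,h_m)$ is observed directly as a single Bernoulli per mission. This is the only place the tier's assumptions are used.

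\textbf{Definition of the bound.} I will write $\hat L_0 = \ell(S)$, where $\ell(0) = 0$. For $s \geq 1$, $\ell(s)$ is the unique $p \in (0,1)$ with $G_s(p) := \Prob_p(X \geq s) = \eta_{\mathrm{conf}}$, where $X \sim \mathrm{Bin}(n,p)$.

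\textbf{Two monotonicity facts.} First, for fixed $s \geq 1$, $G_s$ is continuous and strictly increasing in $p$, with $G_s(0) = 0$ and $G_s(1) = 1$. This gives existence and uniqueness of $\ell(s)$; strict increase follows from the coupling $X = \sum \Ind{U_r \leq p}$ with i.i.d.\ uniforms $U_r$, or from the identity linking $G_s$ to the regularised incomplete beta function. Second, $\ell$ is nondecreasing in $s$, since $G_{s+1} \leq G_s$ pointwise.

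\textbf{Coverage.} Fix the true $R$. If no $s$ satisfies $\ell(s) > R$, then $\hat L_0 \leq R$ surely and we are done. Otherwise let $s^\ast = \min\{s : \ell(s) > R\}$. By monotonicity of $\ell$, the non-coverage event $\{\hat L_0 > R\}$ equals $\{S \geq s^\ast\}$. Note $s^\ast \geq 1$, since $\ell(0) = 0 \leq R$. Strict monotonicity of $G_{s^\ast}$ then gives
\[
\Prob_R(S \geq s^\ast) = G_{s^\ast}(R) < G_{s^\ast}\bigl(\ell(s^\ast)\bigr) = \eta_{\mathrm{conf}} .
\]
Hence $\Prob\bigl(\hat L_0 \leq R\bigr) \geq 1 - \eta_{\mathrm{conf}}$ for every $n$ and every $R \in [0,1]$. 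The endpoint cases $R = 0$ and $R = 1$ are handled by the same argument (for $R = 1$ no $s^\ast$ exists).

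\textbf{Main obstacle.} The only real care point is the strict monotonicity and continuity of $G_s$ in $p$, which makes $\ell(s)$ well defined and turns the event inclusion into the probability inequality. I would discharge it via the beta-integral representation $G_s(p) = I_p(s, n-s+1)$, whose integrand is positive on $(0,1)$. Everything else is bookkeeping.
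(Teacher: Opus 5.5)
Your proof is correct and takes essentially the same route as the paper's. Both reduce to $S \sim \mathrm{Bin}(n,R)$, use continuity and strict monotonicity of $p \mapsto \Prob_p(X \geq s)$ to make the Clopper--Pearson limit the unique root, identify non-coverage as an upper tail $\{S \geq s^\ast\}$, and bound that tail strictly below $\eta_{\mathrm{conf}}$. The only difference is cosmetic: the paper sets its threshold by tail probabilities at the truth, $x^\star = \min\{x : \Prob_\theta(X \geq x) < \eta_{\mathrm{conf}}\}$, whereas you set it by the bound values and prove $\ell$ nondecreasing, which makes explicit the monotonicity in $s$ that the paper's event identity uses tacitly.
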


\begin{proof}
See \cref{app:tier0}.
\end{proof}

\begin{theorem}[Tier-1 validity]
\label{thm:tier1}
Let $\mathcal{J}$ be a moment family of size $J$, let
$\hat\mu = (\hat\mu_S)_{S \in \mathcal{J}}$ be the empirical moments, and let
$B(\hat\mu)$ be the box formed by two-sided Clopper--Pearson intervals for each
$\mu_S$ at level $\eta_{\mathrm{conf}}/(2J)$ per tail. Define
\begin{equation}
  \hat{L}_1 \;=\; \min\Bigl\{\, Q\bigl(\textstyle\bigwedge_i h_i = 1\bigr)
  \;:\; Q \in \Delta(\{0,1\}^m),\;
  \bigl(Q(\textstyle\bigwedge_{i \in S} h_i = 1)\bigr)_{S \in \mathcal{J}}
  \in B(\hat\mu) \,\Bigr\}.
  \label{eq:tier1}
\end{equation}
Then for i.i.d.\ missions and any true law $Q^{\star}$,
$\Prob\bigl(\hat{L}_1 \leq Q^{\star}(\bigwedge_i h_i = 1)\bigr)
 \geq 1-\eta_{\mathrm{conf}}$.
\end{theorem}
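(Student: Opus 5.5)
The argument is a simultaneous-coverage step followed by a feasibility step. Let $\mu^\star_S = Q^\star(\bigwedge_{i\in S} h_i = 1)$ be the true moments for $S \in \mathcal{J}$, and let $E$ be the event that every true moment lies in its interval: $\mu^\star_S \in B_S(\hat\mu)$ for all $S \in \mathcal{J}$, where $B_S$ is the Clopper--Pearson interval for $\mu_S$. The plan is to show two things: $\Prob(E) \geq 1-\eta_{\mathrm{conf}}$, and on $E$ we have $\hat{L}_1 \leq Q^\star(\bigwedge_i h_i = 1)$ deterministically. The theorem follows, since $\{\hat{L}_1 \leq R^\star\} \supseteq E$.

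\emph{Coverage.} For each fixed $S$, the count $n\hat\mu_S = \sum_{r} \Ind{\prod_{i\in S} h_i^{(r)} = 1}$ is $\mathrm{Binomial}(n, \mu^\star_S)$, because missions are i.i.d.; this is the only place the i.i.d.\ assumption enters. No assumption about dependence \emph{across} stages is needed, since each moment is a single Bernoulli functional of one mission's pass vector. Next I would invoke exactness of the one-sided Clopper--Pearson bounds, the same fact used in \cref{thm:tier0} (\cref{app:tier0}). It gives $\Prob(\mu^\star_S < \text{lower endpoint}) \leq \eta_{\mathrm{conf}}/(2J)$ and likewise for the upper endpoint, for every $n$ and every value of $\mu^\star_S$. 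A union bound over the $2J$ tail events, which is the Bonferroni step, then yields $\Prob(E^c) \leq 2J \cdot \eta_{\mathrm{conf}}/(2J) = \eta_{\mathrm{conf}}$. The union bound requires no independence between the moment estimates, and that matters: the estimates are strongly dependent because they share missions.

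\emph{Feasibility.} On $E$, the true law $Q^\star$ is itself a point of $\Delta(\{0,1\}^m)$ whose moment vector $(\mu^\star_S)_{S\in\mathcal{J}}$ lies in $B(\hat\mu)$. So $Q^\star$ is feasible for the program in \cref{eq:tier1}, and the minimum is at most the objective at $Q^\star$, namely $R^\star$. The minimum exists: the feasible set is a nonempty polytope on $E$ (a simplex intersected with linear box constraints in the $2^m$ cell masses), and the objective is the linear functional given by the mass of the all-ones cell.

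\emph{Main obstacle.} The delicate point is well-definedness off $E$. The box may be infeasible, for example when the Clopper--Pearson intervals are mutually inconsistent. I would adopt the convention that $\hat{L}_1 = 0$, or $-\infty$, when the program is infeasible. This only enlarges the coverage event, so the bound is unaffected. The remaining care is confirming that the exact Clopper--Pearson guarantee holds at the per-tail level $\eta_{\mathrm{conf}}/(2J)$ uniformly in $\mu^\star_S$, including the boundary cases $\hat\mu_S \in \{0,1\}$ where one endpoint is set to $0$ or $1$. In those cases the corresponding tail event has probability zero, so the union bound is only loosened.
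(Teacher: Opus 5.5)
Your proposal is correct and follows the paper's proof essentially step for step. Both arguments use exact per-tail Clopper--Pearson coverage (the Tier-0 argument), then a Bonferroni union bound for simultaneous box coverage with probability at least $1-\eta_{\mathrm{conf}}$, then feasibility of $Q^\star$ on that event so that the minimum is at most $R^\star$. Your extra remarks on attainment of the minimum and the convention when the program is infeasible are harmless refinements; any convention works, since the covering event already forces feasibility.
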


\begin{proof}
See \cref{app:tier1}.
\end{proof}

\Cref{thm:tier1} is the reason Tier 1 is safe where Tier 2 is not: it is a
minimum over \emph{every} joint law consistent with the box, so the true law is
one of the candidates whenever the box covers, and the union bound makes the box
cover with the stated probability. Nothing is assumed about the dependence
structure.

\begin{proposition}[Fail-safe tier selection]
\label{prop:failsafe}
Let the certificate select Tier 0 only on an explicit assertion that the
composition was executed end-to-end, and Tier 1 by default otherwise. Then no
the two failure directions are not symmetric. Tier 1 requires only i.i.d.\
missions (\cref{thm:tier1}), which Tier 0 requires as well, so the assumption
sets are nested. Consequently:
\begin{enumerate}[label=(\roman*),leftmargin=2.2em,itemsep=0.1em]
  \item \textbf{Omission is safe.} Failing to set the flag when it was
    warranted yields $\hat{L}_1$, which is valid --- merely looser than the
    Tier-0 bound that was available.
  \item \textbf{Commission is not.} Setting the flag when it was \emph{not}
    warranted yields $\hat{L}_0$, whose validity requires an assumption that
    does not hold; that guarantee is unsound.
\end{enumerate}
The design content of the proposition is therefore that the unsound case
requires a deliberate assertion, while the default --- and every omission ---
lands in the safe case. It is \textbf{not} a claim that no setting of the flag
can produce an unsound certificate; case (ii) is exactly such a setting, and no
inspection of the pass matrix can detect it.
\end{proposition}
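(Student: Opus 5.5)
The argument is a case analysis on the two settings of the end-to-end flag against the two states of the world. The flag is either set or unset, and the composition either was or was not executed end-to-end. The selection rule of the proposition maps each of the four combinations to a tier. For each combination I check whether the assumptions of that tier's validity theorem hold.

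The first step is to establish nestedness. Read off the assumption sets from \cref{def:tiers} and the hypotheses of \cref{thm:tier0,thm:tier1}:
\begin{itemize}[itemsep=0.1em]
  \item Tier 1 uses only i.i.d.\ missions together with per-stage and co-execution data.
  \item Tier 0 uses i.i.d.\ missions and, in addition, direct observation of $\Yg$ on every mission.
\end{itemize}
So $\mathcal{A}_1 \subseteq \mathcal{A}_0$, and whenever Tier 0 is warranted, Tier 1 is too. Per-stage moments are computable from the same pass matrix, since an end-to-end run scores every stage on the same missions.

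Next I handle case (i), omission. The flag is unset, so the selector returns $\hat L_1$. Its only assumption, i.i.d.\ missions, holds in either state of the world, so \cref{thm:tier1} gives coverage at least $1-\eta_{\mathrm{conf}}$.

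For looseness I argue that $\hat L_1 \le \hat L_0$ is typical but need not be claimed. The statement only says ``merely looser'' informally, and I would phrase that as the absence of any loss of validity rather than prove a pointwise inequality. If a pointwise ordering were wanted, I would observe that Tier 1 is a minimum over a box containing, with high probability, every law consistent with the moments, while Tier 0 targets the observed functional directly. I would leave that comparison as a remark.

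Case (ii), commission, needs a counterexample rather than a bound. This is the step I expect to be the main obstacle, because ``the composition was not executed end-to-end'' has to be given a formal meaning under which $\hat L_0$ fails. I would model the non-end-to-end case as the per-stage verdicts being collected on disjoint or independently drawn missions. The quantity fed to Clopper--Pearson is then $\sum_r \bigwedge_i h_i^{(r)}$ computed from rows that do not come from the true joint $Q^\star$. Its mean is the independence product $\prod_i p_i$ rather than $Q^\star(\bigwedge_i h_i=1)$.

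Given that model, I choose a $Q^\star$ whose true all-success probability lies below $\prod_i p_i$, that is, with negative dependence. For $m=2$, \cref{prop:gap} says the gap is $\Cov(h_1,h_2)<0$. Then the same order-of-magnitude argument as in \cref{thm:collapse} applies: $\hat L_0 = \prod_i p_i - O_p(n^{-1/2})$ while the target sits a fixed amount lower, so coverage tends to $0$.

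To show that this is undetectable from the pass matrix, I note that the fabricated matrix is a legitimate sample from a product law with the same marginals. Any test based only on the matrix therefore has the same distribution under a world where the product law is the truth and Tier 0 would be valid. No inspection can separate the two cases.

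The final design statement then follows directly. The only route to the unsound cell is setting the flag, and every unset configuration lands in case (i).
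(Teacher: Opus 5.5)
Your proposal is correct. It shares the paper's skeleton: nested assumption sets, a case split on how the flag is mis-set, and indistinguishability for the commission case. The difference is that it proves the two negative claims where the paper only asserts them.

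In \cref{app:failsafe}, case (ii) is argued solely from the failure of the second element of $\mathcal{A}_0$. That shows \cref{thm:tier0} no longer applies, not that coverage actually fails. Your witness establishes unsoundness outright, with coverage tending to zero. It has three parts:
\begin{enumerate}[label=(\alph*),leftmargin=2.2em,itemsep=0.1em]
  \item stage columns drawn from independent missions, so the count fed to Clopper--Pearson is $\mathrm{Bin}(n,\prod_i p_i)$-distributed;
  \item a negatively dependent $Q^\star$ with $R^\star < \prod_i p_i$ by \cref{prop:gap};
  \item the \cref{thm:collapse} mechanism with $\Delta = \prod_i p_i - R^\star$.
\end{enumerate}

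Similarly, the paper's undetectability argument is pointwise: identical matrices yield identical outputs. Yours is distributional. The assembled matrix has exactly the law of an end-to-end sample from the product law, under which Tier 0 is valid, so every test has the same distribution in both worlds. That is the more meaningful sense of ``no inspection can detect it.'' The paper's version is shorter and commits to no particular model of non-end-to-end execution; yours turns the negative claims into theorems.

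One qualification concerns the unwarranted-and-unset cell. This is the default, and it is not literally case (i). Tier 1 is valid there only if the moment box is built from stage subsets that genuinely co-executed --- the ``co-execution data'' you correctly list in your nestedness step. Under your own disjoint-missions model, consider a box computed from the assembled matrix with the pairwise moment, which at $m=2$ is the all-success moment itself. It fails for exactly the reason $\hat L_0$ does, since its lower endpoint converges to $p_1p_2 > R^\star$. In that model Tier 1 must fall back to the singleton family, i.e.\ the Fr\'echet floor. So ``holds in either state of the world'' should carry that proviso; the paper's $\mathcal{A}_1 = \{\text{i.i.d.\ missions}\}$ leaves the same condition implicit.

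You are right not to rely on a pointwise ordering of $\hat L_1$ and $\hat L_0$. \cref{rem:no-ordering} explicitly declines that claim, so your sketched argument for one can simply be dropped.
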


\begin{proof}
See \cref{app:failsafe}.
\end{proof}

\begin{remark}[No pointwise ordering]
\label{rem:no-ordering}
We deliberately do \emph{not} claim $\hat{L}_1 \leq \hat{L}_0$ pointwise.
Both bound the same true reliability from below on their covering events, but
neither dominates the other as a random variable: an unlucky low all-success
count depresses $\hat{L}_0$ while leaving the marginal and pairwise moments ---
and hence $\hat{L}_1$ --- comparatively high. The fail-safety in
\cref{prop:failsafe} comes from the nesting of the assumption sets, not from an
ordering of the two numbers.
\end{remark}

\Cref{prop:failsafe} states a design constraint rather than a mathematical
discovery, and it is stated because the alternative is a silent failure. Tier 0's
validity requires that the joint outcome was observed --- a fact about how the
data were produced, which no inspection of the pass matrix can confirm or refute.
A default of Tier 0 would let a forgotten flag convert an extrapolation into an
over-claim with no detectable symptom. Defaulting to Tier 1 makes the failure
mode a needlessly weak certificate instead.

\begin{remark}[Tier 2 is a diagnostic]
\label{rem:tier2}
Tier 2 is reported and never certified. By \cref{thm:collapse} its coverage of
the true reliability tends to zero under a misspecification that marginal and
pairwise data cannot detect, and \cref{sec:e3} measures that collapse on the
witness of \cref{ex:witness}. Reporting it is useful --- a large gap between
Tier 1 and Tier 2 signals that the moment family is leaving information on the
table --- but printing it as a guarantee would reintroduce exactly the failure
this paper documents.
\end{remark}

\section{Moment-set certification}
\label{sec:lp}

Tier 1 turns on a linear program. This section states it, proves it sharp,
and characterises how it behaves as the moment family grows --- which is the
knob a practitioner actually turns, and the one \cref{sec:e2} measures.

\subsection{The program}

\begin{definition}[Cell representation]
\label{def:cells}
A joint law on $\{0,1\}^m$ is a vector $x \in \R^{2^m}$ with $x_c \geq 0$ and
$\sum_c x_c = 1$, where cell $c$ ranges over binary patterns. For
$S \subseteq \{1,\dots,m\}$ let $a_S \in \{0,1\}^{2^m}$ be the indicator row
$(a_S)_c = \Ind{c_i = 1 \;\forall i \in S}$, so $a_S^{\top} x$ is the
probability that every stage in $S$ succeeds. All-success is $a_{\{1..m\}}^\top x$.
\end{definition}

\begin{definition}[Moment-set LP]
\label{def:momentlp}
Given a moment family $\mathcal{J}$ and values $(\nu_S)_{S \in \mathcal{J}}$,
\begin{equation}
  \underline{R} \;=\; \min_{x}\; a_{\{1..m\}}^{\top} x
  \quad\text{s.t.}\quad
  a_S^{\top} x = \nu_S \;\; \forall S \in \mathcal{J}, \quad
  \mathbf{1}^{\top} x = 1, \quad x \geq 0,
  \label{eq:lp}
\end{equation}
and $\overline{R}$ the corresponding maximum. With
$\mathcal{J} = \{\text{singletons}\} \cup \{\text{pairs}\}$ this is the pairwise
program; adding triples gives the $J = 14$ program of \cref{sec:e2} at $m=4$.
For a box constraint $\nu \in B$ rather than point values, replace each equality
by the pair of inequalities defining $B$.
\end{definition}

\begin{theorem}[Soundness and sharpness]
\label{thm:lp-sharp}
Let $Q^{\star}$ be any joint law on $\{0,1\}^m$ whose $\mathcal{J}$-moments equal
$\nu$. Then
$\underline{R} \leq Q^{\star}(\bigwedge_i h_i = 1) \leq \overline{R}$, and both
bounds are \emph{attained}: there exist laws $Q_-, Q_+$ with the same
$\mathcal{J}$-moments achieving $\underline{R}$ and $\overline{R}$ respectively.
Consequently $[\underline{R}, \overline{R}]$ is exactly the identified set of
\cref{def:identified}, and no bound using only the moments in $\mathcal{J}$ can
be tighter.
\end{theorem}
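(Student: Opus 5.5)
The plan is to read the statement as a fact about a finite-dimensional linear program and use only compactness and the cell representation of \cref{def:cells}.

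\emph{Soundness.} Write $Q^\star$ as its cell vector $x^\star\in\R^{2^m}$, with $x^\star_c = Q^\star(\{c\})$. Then $x^\star\ge 0$ and $\mathbf 1^\top x^\star=1$. For each $S\in\mathcal J$ we have $a_S^\top x^\star = Q^\star(\bigwedge_{i\in S}h_i=1)=\nu_S$, because $a_S^\top x$ just sums the cells in which every stage of $S$ succeeds. So $x^\star$ is feasible for \cref{eq:lp}. The objective at $x^\star$ equals $Q^\star(\bigwedge_i h_i=1)$, which gives $\underline R\le Q^\star(\bigwedge_i h_i=1)\le\overline R$ immediately.

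\emph{Attainment.} The feasible set
\[
P=\{x\ge0,\ \mathbf 1^\top x=1,\ a_S^\top x=\nu_S\ \forall S\in\mathcal J\}
\]
is a closed subset of the probability simplex, so it is compact. It is nonempty because it contains $x^\star$. A linear objective on a nonempty compact set attains its minimum and maximum, at points $x_-$ and $x_+$ (which may be taken to be vertices of the polytope $P$). Reading $x_\pm$ back as laws $Q_\pm$ on $\{0,1\}^m$ via $Q_\pm(\{c\})=(x_\pm)_c$ gives laws with exactly the prescribed $\mathcal J$-moments that achieve $\underline R$ and $\overline R$. The only point needing care is that the min and max are genuinely attained rather than infima. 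The LP is finite-dimensional with a bounded feasible region, so no duality argument is needed. Nonemptiness is supplied by the hypothesis that some $Q^\star$ has these moments; without it the program would be infeasible and the statement vacuous.

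\emph{Identified set and optimality.} The map $Q\mapsto x$ is a bijection between laws on $\{0,1\}^m$ and points of the simplex, under which $\mathcal M(\nu)$ of \cref{def:identified} corresponds exactly to $P$. Hence the inf and sup in \cref{def:identified} coincide with $\underline R$ and $\overline R$, and are attained. For the set to equal the whole interval $[\underline R,\overline R]$, take $x_t=(1-t)x_-+tx_+$. Since $P$ is convex, every $x_t$ lies in $P$, and the objective is linear in $t$, so it sweeps out every intermediate value.

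Finally, suppose a bound $b(\nu)$ depends only on the $\mathcal J$-moments and is valid, meaning $b(\nu)\le Q(\bigwedge_i h_i=1)$ for every $Q\in\mathcal M(\nu)$. Applying this to $Q_-$ forces $b(\nu)\le\underline R$. Symmetrically, any valid upper bound must be at least $\overline R$ by applying it to $Q_+$. So no bound using only the moments in $\mathcal J$ can be tighter.
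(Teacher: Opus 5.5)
Your proposal is correct and follows the paper's proof essentially step for step. Soundness comes from feasibility of $x^\star$; attainment comes from compactness of the nonempty feasible polytope together with the extreme value theorem; and sharpness comes from testing any moment-only bound against $Q_-$ (and $Q_+$). The one cosmetic difference is that you obtain the full interval from the segment $(1-t)x_- + t x_+$ by convexity, whereas the paper uses the fact that a continuous image of a connected set is an interval; in substance these are the same argument.
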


\begin{proof}
See \cref{app:lp-sharp}.
\end{proof}

Sharpness matters for interpretation. A conservative-but-loose bound invites the
response ``the truth is surely much better than that''; \cref{thm:lp-sharp} says
that for the information supplied there is a consistent world in which the truth
is exactly $\underline{R}$. Tightening requires more moments, not more optimism.

\begin{proposition}[Monotonicity in the moment family]
\label{prop:monotone}
If $\mathcal{J} \subseteq \mathcal{J}'$ and the moment values agree on
$\mathcal{J}$, then
$\underline{R}(\mathcal{J}) \leq \underline{R}(\mathcal{J}')$ and
$\overline{R}(\mathcal{J}) \geq \overline{R}(\mathcal{J}')$. The identified set
shrinks weakly as the family grows.
\end{proposition}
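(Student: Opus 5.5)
The plan is to argue by inclusion of feasible sets, which is the standard way to compare the optimal values of two optimisation problems sharing an objective. Write $\mathcal{M}_{\mathcal{J}}(\nu)$ for the feasible region of \cref{eq:lp} under family $\mathcal{J}$, i.e.\ the set of $x \in \R^{2^m}$ with $x \geq 0$, $\mathbf{1}^\top x = 1$ and $a_S^\top x = \nu_S$ for all $S \in \mathcal{J}$. Write $\nu'$ for the values on $\mathcal{J}'$. The hypothesis that the values agree on $\mathcal{J}$ means $\nu'_S = \nu_S$ for every $S \in \mathcal{J}$.

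\textbf{Step 1: nesting of feasible sets.} Take any $x \in \mathcal{M}_{\mathcal{J}'}(\nu')$. It satisfies every equality $a_S^\top x = \nu'_S$ for $S \in \mathcal{J}'$. In particular it satisfies those indexed by $S \in \mathcal{J} \subseteq \mathcal{J}'$, where $\nu'_S = \nu_S$. It also satisfies the simplex constraints, which are common to both programs. Hence
\[
\mathcal{M}_{\mathcal{J}'}(\nu') \subseteq \mathcal{M}_{\mathcal{J}}(\nu).
\]
Only the constraint list grows, since the objective row $a_{\{1..m\}}$ does not depend on the family.

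\textbf{Step 2: compare the optima.} The objective $a_{\{1..m\}}^\top x$ is the same in both programs. A minimum over a subset is at least the minimum over the superset, and a maximum over a subset is at most the maximum over the superset. This gives $\underline{R}(\mathcal{J}) \leq \underline{R}(\mathcal{J}')$ and $\overline{R}(\mathcal{J}) \geq \overline{R}(\mathcal{J}')$. With \cref{thm:lp-sharp} identifying $[\underline{R},\overline{R}]$ with the identified set, this is exactly the claimed weak shrinkage.

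\textbf{Step 3: degenerate cases.} The only point needing care is infeasibility, and I expect it to be the main (mild) obstacle. If $\mathcal{M}_{\mathcal{J}'}(\nu')$ is empty, adopt the convention $\min \emptyset = +\infty$ and $\max \emptyset = -\infty$. Under that convention both inequalities still hold trivially. If instead the values $\nu'$ are the moments of some actual law $Q^\star$, as in \cref{thm:lp-sharp}, both sets are nonempty because they contain the cell vector of $Q^\star$.

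Both feasible sets are closed, bounded subsets of the simplex, cut out by finitely many linear constraints. So the minima and maxima are attained whenever the sets are nonempty, and the comparison is between genuine optima rather than infima.

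\textbf{Box version.} The same argument covers the box form of \cref{def:momentlp}, where equalities become interval constraints $\nu_S \in B_S$. The nesting holds provided the intervals for $S \in \mathcal{J}$ are identical in both programs. That proviso is precisely what fails under a naive Bonferroni reallocation, since enlarging $J$ widens each interval. This is why the finite-sample statement is deferred to \cref{prop:bonferroni} rather than claimed here.
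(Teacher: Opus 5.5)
Your proof is correct and is essentially the paper's own argument: the larger family only adds equality constraints on agreeing values, so $\mathcal{M}_{\mathcal{J}'}(\nu') \subseteq \mathcal{M}_{\mathcal{J}}(\nu)$, and minimising (maximising) a fixed objective over a subset cannot decrease (increase) the optimum. Your handling of the empty-feasible-set case via $\min\emptyset=+\infty$ is a small addition the paper leaves implicit. So is your remark that the box version needs identical intervals on $\mathcal{J}$, which the paper defers to \cref{prop:bonferroni}.
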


\begin{proof}
See \cref{app:monotone}.
\end{proof}

\Cref{prop:monotone} holds at the level of \emph{point} moments. It does not
automatically survive the finite-sample box, and the reason is a Bonferroni
budget rather than anything deep.

\begin{proposition}[Bonferroni allocation and monotonicity]
\label{prop:bonferroni}
Fix $\eta_{\mathrm{conf}}$ and let the box $B$ be built from $J = |\mathcal{J}|$
Clopper--Pearson intervals.
\begin{enumerate}[label=(\roman*),leftmargin=2.2em]
  \item \emph{Used-set allocation}: spending $\eta_{\mathrm{conf}}/(2J)$ per tail
    gives the tightest box for that family, but enlarging $\mathcal{J}$ widens
    every interval, so \cref{prop:monotone} can fail on the certified floor.
  \item \emph{Pre-allocated allocation}: fixing a budget family
    $\mathcal{J}_{\max} \supseteq \mathcal{J}$ and spending
    $\eta_{\mathrm{conf}}/(2|\mathcal{J}_{\max}|)$ per tail regardless of which
    moments are constrained makes every interval width independent of
    $|\mathcal{J}|$. Then enriching $\mathcal{J}$ only adds constraints, and the
    certified floor is monotone \emph{by construction}.
\end{enumerate}
Both are valid at level $1-\eta_{\mathrm{conf}}$.
\end{proposition}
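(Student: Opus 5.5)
The plan splits into a common validity argument and the two monotonicity claims. For validity, I would reuse the mechanism of \cref{thm:tier1}. Let each interval $[\ell_S,u_S]$ be a two-sided Clopper--Pearson interval at per-tail level $\beta$. Exactness of Clopper--Pearson under i.i.d.\ missions (the same fact behind \cref{thm:tier0}) gives $\Prob(\mu^\star_S<\ell_S)\le\beta$ and $\Prob(\mu^\star_S>u_S)\le\beta$ for each $S$. Under used-set allocation $\beta=\eta_{\mathrm{conf}}/(2J)$, and a union bound over the $2J$ tail events shows the box contains $\mu^\star$ with probability at least $1-\eta_{\mathrm{conf}}$. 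Under pre-allocated allocation $\beta=\eta_{\mathrm{conf}}/(2|\mathcal{J}_{\max}|)$, the union bound over only the $2J\le2|\mathcal{J}_{\max}|$ constrained tails gives miscoverage at most $2J\beta\le\eta_{\mathrm{conf}}$. On the covering event the true law $Q^\star$ is feasible for the LP in \cref{eq:tier1}, so $\hat L_1\le Q^\star(\bigwedge_i h_i=1)$. This establishes the final sentence for both allocations.

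For (ii), the key point is that with $\beta$ fixed by $\mathcal{J}_{\max}$, the interval for each $S$ depends only on the data column $\sum_r\prod_{i\in S}h_i^{(r)}$ and on $\beta$, and not on $\mathcal{J}$. So for $\mathcal{J}\subseteq\mathcal{J}'\subseteq\mathcal{J}_{\max}$, computed on the same sample, the feasible set for $\mathcal{J}'$ is the feasible set for $\mathcal{J}$ intersected with the additional box constraints for $S\in\mathcal{J}'\setminus\mathcal{J}$. Minimizing the same linear objective over a subset can only raise the value, exactly as in \cref{prop:monotone}. Hence $\hat L_1(\mathcal{J})\le\hat L_1(\mathcal{J}')$ pointwise on every sample, with no probability involved. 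If the enlarged program is infeasible I would set its value to $+\infty$ by convention, but this cannot happen on the covering event, and the ordering holds regardless.

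For (i), there are two parts. The first is tightness: among uniform per-tail levels that keep the union bound at $\eta_{\mathrm{conf}}$, the largest admissible $\beta$ is $\eta_{\mathrm{conf}}/(2J)$. Clopper--Pearson intervals are nested in the level, so this gives the smallest box. The second part is the failure of monotonicity. Passing from $\mathcal{J}$ to $\mathcal{J}'$ lowers $\beta$, which weakly widens every existing interval, so the old feasible set is enlarged while new constraints are added, and neither effect dominates in general. To show that the failure can actually occur, I would exhibit a counterexample rather than prove a general claim. Take $m=2$ and $\mathcal{J}=\{\{1\},\{2\}\}$, so $\hat L_1=\max(0,\ell_1+\ell_2-1)$ with $\ell_i$ taken at level $\eta/4$. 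Then add a pair moment $\{1,2\}$ whose interval is wide, for example because a small $n$ leaves it uninformative, so it does not bind. The marginal lower limits now fall to level $\eta/6$, which strictly lowers the Fréchet floor when it is positive.

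I expect the main obstacle to be making this counterexample concrete. Since $\{1,2\}$ is the all-success event itself, its lower limit also bounds the floor directly, so I would instead use $m=3$ and add a pair moment that does not bind. I would then verify numerically, or through the strict monotonicity of the Clopper--Pearson limits in $\beta$, that the floor drops.
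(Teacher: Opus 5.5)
Your validity argument matches the paper's proof: exact Clopper--Pearson tails, a union bound over the $2J$ tails actually used so that miscoverage is at most $2J\beta\le\eta_{\mathrm{conf}}$ under either allocation, and feasibility of $Q^\star$ on the covering event. So does your proof of (ii): each interval depends only on its own count and on $|\mathcal{J}_{\max}|$, so the $\mathcal{J}'$ feasible set is the $\mathcal{J}$ feasible set cut by extra constraints. Your structural account of (i) is also the paper's. The paper establishes ``can fail'' only in that weak sense, by noting that the two feasible sets need not be nested, and it never exhibits an actual drop. Your ``weakly widens'' is more accurate than the paper's ``strictly widens'', since a limit at $x_S=0$ or $x_S=n$ does not move. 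Your infeasibility convention is unnecessary, because the empirical law always lies in the box.

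The part that goes beyond the paper, the explicit witness, would not work as you plan it. Adding a pair to the singletons at $m=3$ generically \emph{binds}. When the singleton-only floor is positive, its minimiser has $p_i=\ell'_i$ and pairwise-disjoint failure sets, hence $p_{12}=\ell'_1+\ell'_2-1$. The pair's own lower limit $\ell'_{12}$ is built from $\hat\mu_{12}\ge\hat\mu_1+\hat\mu_2-1$ with one haircut instead of two, and it typically exceeds that value, so the floor usually rises. Making $n$ small does not help, because it widens every interval, not just the pair's. A drop in that configuration needs special data, for example a stage that almost never fails, at large $n$.

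A clean witness makes the added moment provably slack. Take $m=2$ and $\mathcal{J}=\{\{1,2\}\}\subset\mathcal{J}'=\{\{1\},\{1,2\}\}$.
\begin{itemize}
\item Under $\mathcal{J}$, the floor is the Clopper--Pearson lower limit for $x_{12}$ at per-tail level $\eta_{\mathrm{conf}}/2$.
\item Under $\mathcal{J}'$, fixing $Q(1,1)=\ell'_{12}$ leaves $Q(1,0)$ free in $[0,1-\ell'_{12}]$, so $p_1$ ranges over $[\ell'_{12},1]$.
\item That range meets $[\ell'_1,u'_1]$ because $u'_1\ge\hat\mu_1\ge\hat\mu_{12}\ge\ell'_{12}$, so the marginal constraint is slack.
\item The floor is therefore the same limit at level $\eta_{\mathrm{conf}}/4$, which is strictly smaller whenever $x_{12}\ge 1$.
\end{itemize}
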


\begin{proof}
See \cref{app:bonferroni}.
\end{proof}

The trade-off is measurable rather than theoretical. At $m=4$, $n=717$ in
\cref{sec:e2}, pre-allocation costs $0.0098$ of certified floor at $J=10$
($0.2357$ against $0.2455$) and costs nothing at $J=14$, where the budget family
is the family in use. We report both, because reporting only the used-set number
would present an empirically-monotone-so-far quantity as a guaranteed one.

\begin{remark}[Cost]
\label{rem:lp-cost}
\Cref{eq:lp} has $2^m$ variables and $|\mathcal{J}|+1$ constraints, so it is
exact and cheap for the small $m$ that agent motifs actually use ($m \leq 8$
covers every topology in \cref{sec:evaluation}) and exponential beyond. For large
$m$ the practical route is to certify sub-compositions and compose the
certificates, which costs tightness; we do not pursue it here because no motif in
this evaluation needs it.
\end{remark}

\section{Anytime-valid certification}
\label{sec:anytime}

Every certificate so far is fixed-$n$: valid at a sample size chosen before the
data arrive. Deployed agent systems are not monitored that way. A team watches a
reliability dashboard, and when the number looks good enough they ship. That is
optional stopping, and it inflates type-I error without bound for a fixed-$n$
interval. This section gives a certificate that survives it, at a cost this
section also states. The machinery is that of game-theoretic probability and
testing by betting \citep{ville1939,shafer2019,shafer2021,ramdas2023,%
waudbysmith2024,grunwald2024}; the application to composed agent reliability is
what is new here.

\subsection{The e-process}

\begin{definition}[Betting e-process for graph reliability]
\label{def:eprocess}
Let $y_r \in \{0,1\}$ be the graph outcome of mission $r$ and let
$\mathcal{F}_r$ be the $\sigma$-field generated by $y_1,\dots,y_r$. Test the
\emph{sequential} null
\begin{equation}
  H_0 : \; \E[y_r \mid \mathcal{F}_{r-1}] \leq p_0
  \quad \text{almost surely, for every } r \geq 1 .
  \label{eq:seq-null}
\end{equation}
This is the null the betting construction requires, and it is strictly stronger
than the marginal statement $\Prob(\Yg = 1) \leq p_0$. The distinction is not
pedantic: a stream whose every mission has marginal success rate $p_0$ but
whose outcomes are perfectly dependent satisfies the marginal statement,
violates \cref{eq:seq-null}, and drives the wealth process out of the
supermartingale class entirely.\footnote{Take $p_0 = \tfrac12$,
$\lambda_r \equiv 1$, and $y_r \equiv Z$ for every $r$ with
$Z \sim \mathrm{Bernoulli}(\tfrac12)$. Every marginal satisfies
$\Prob(y_r = 1) = \tfrac12 \leq p_0$, yet $\E[E_2] = \tfrac54 > 1$ and
$E_8 = (3/2)^8 > 20 = 1/\alpha$ on the event $\{Z = 1\}$, so the certificate
fires with probability $\tfrac12$ rather than $\alpha = 0.05$.}
Let $\lambda_r$ be a \emph{predictable} betting fraction, i.e.\ $\lambda_r$ is
$\mathcal{F}_{r-1}$-measurable, with
$\lambda_r \in [0, 1/p_0)$. The wealth process is
\begin{equation}
  E_R \;=\; \prod_{r \leq R} \bigl(1 + \lambda_r (y_r - p_0)\bigr),
  \qquad E_0 = 1 .
  \label{eq:eprocess}
\end{equation}
The certificate \emph{fires} at the first $R$ with $E_R \geq 1/\alpha$.
\end{definition}

\begin{lemma}[Supermartingale property]
\label{lem:supermartingale}
Under $H_0$, $(E_R)_{R \geq 0}$ is a non-negative supermartingale with
$\E[E_R] \leq 1$ for all $R$.
\end{lemma}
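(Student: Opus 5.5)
The plan is to verify directly the three defining properties of a non-negative supermartingale for $(E_R)$ with respect to $(\mathcal{F}_R)$: adaptedness, non-negativity, and the one-step conditional expectation inequality. The bound $\E[E_R] \leq 1$ then follows by iterating expectations from $E_0 = 1$.

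\textbf{Adaptedness.} Each factor $1 + \lambda_r(y_r - p_0)$ is $\mathcal{F}_r$-measurable, since $\lambda_r$ is $\mathcal{F}_{r-1}$-measurable and $y_r$ is $\mathcal{F}_r$-measurable. Hence $E_R$ is $\mathcal{F}_R$-measurable.

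\textbf{Non-negativity.} Since $y_r \in \{0,1\}$, the factor takes one of two values:
\begin{itemize}
\item $1 + \lambda_r(1-p_0) \geq 1$ when $y_r = 1$;
\item $1 - \lambda_r p_0 > 0$ when $y_r = 0$, which uses $\lambda_r < 1/p_0$.
\end{itemize}
So every factor is strictly positive, and the product $E_R$ is positive. This is the only place the upper constraint on $\lambda_r$ enters. Integrability is immediate because each factor is bounded by $1 + \lambda_r(1-p_0) < 1/p_0$, so $E_R$ is bounded for fixed $R$.

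\textbf{Supermartingale step.} Write $E_R = E_{R-1}\bigl(1 + \lambda_R(y_R - p_0)\bigr)$. Since $E_{R-1}$ and $\lambda_R$ are $\mathcal{F}_{R-1}$-measurable and bounded, they pull out of the conditional expectation:
\[
  \E[E_R \mid \mathcal{F}_{R-1}] = E_{R-1}\bigl(1 + \lambda_R(\E[y_R \mid \mathcal{F}_{R-1}] - p_0)\bigr).
\]
Under $H_0$ of \cref{eq:seq-null} we have $\E[y_R \mid \mathcal{F}_{R-1}] - p_0 \leq 0$ almost surely. Because $\lambda_R \geq 0$ and $E_{R-1} \geq 0$, the right-hand side is at most $E_{R-1}$. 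Taking expectations and inducting from $E_0 = 1$ gives $\E[E_R] \leq 1$.

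The only delicate point is the conditioning. The null must be the conditional statement \cref{eq:seq-null}, not the marginal one, precisely so that the inequality holds given $\mathcal{F}_{R-1}$. The footnote's example shows the step fails otherwise. The sign of $\lambda_R$ matters as well: a negative bet would reverse the inequality. No step is a real obstacle; the proof is a routine verification.
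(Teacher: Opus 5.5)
Your proof is correct and follows the paper's argument essentially step for step. Both establish positivity of each factor from $\lambda_r < 1/p_0$, pull the $\mathcal{F}_{R-1}$-measurable $E_{R-1}$ and $\lambda_R$ out of the conditional expectation, apply the sequential null \cref{eq:seq-null} together with $\lambda_R \geq 0$, and iterate from $E_0 = 1$; your explicit checks of adaptedness and integrability (each factor lies in $(0, 1/p_0)$) are details the paper leaves implicit.
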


\begin{proof}
See \cref{app:supermartingale}.
\end{proof}

\begin{theorem}[Anytime validity; \citealp{ville1939}]
\label{thm:ville}
Under $H_0$, for any stopping time $T$ --- including data-dependent and
unbounded ones ---
\begin{equation}
  \Prob\Bigl(\sup_{R \geq 1} E_R \geq 1/\alpha\Bigr) \;\leq\; \alpha .
\end{equation}
Consequently a team may inspect $E_R$ after every mission, stop whenever they
choose, and the probability of ever issuing a false certificate is at most
$\alpha$. The same device underlies time-uniform confidence sequences
\citep{robbins1970,howard2021}.
\end{theorem}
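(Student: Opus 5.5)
We prove \cref{thm:ville} as Ville's maximal inequality applied to the wealth process of \cref{def:eprocess}. The non-negative supermartingale structure comes from \cref{lem:supermartingale}, and a stopped-process argument turns the fixed-horizon bound $\E[E_R]\le 1$ into a time-uniform one. The plan has three steps.

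\emph{Step 1: non-negativity and the supermartingale property.} Each factor $1+\lambda_r(y_r-p_0)$ is at least $1-\lambda_r p_0$, which is positive because $\lambda_r<1/p_0$. So $E_R\ge 0$. By \cref{lem:supermartingale}, under the sequential null \cref{eq:seq-null}, $(E_R)$ is a supermartingale with respect to $(\mathcal{F}_R)$ with $E_0=1$. This step only needs $\lambda_r$ to be $\mathcal{F}_{r-1}$-measurable and $\E[y_r\mid\mathcal{F}_{r-1}]\le p_0$. No assumption on the dependence among the $y_r$ beyond this is used.

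\emph{Step 2: the maximal inequality via a stopping time.} Fix a horizon $N$ and define $\tau=\min\{R\le N: E_R\ge 1/\alpha\}$, with $\tau=N$ if no such $R$ exists. This $\tau$ is a bounded stopping time. Optional stopping for bounded stopping times of a supermartingale (or directly, the stopped process $E_{R\wedge\tau}$ is again a supermartingale) gives $\E[E_\tau]\le E_0=1$.

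On the event $A_N=\{\max_{R\le N}E_R\ge 1/\alpha\}$ we have $E_\tau\ge 1/\alpha$. Since $E_\tau\ge 0$ everywhere, $\E[E_\tau]\ge \Prob(A_N)/\alpha$, and therefore $\Prob(A_N)\le\alpha$.

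\emph{Step 3: the uniform statement.} The events $A_N$ increase to $\{\sup_{R\ge1}E_R\ge 1/\alpha\}$, up to the attainment issue in the supremum. To handle that issue, first apply Step 2 with threshold $c<1/\alpha$, which bounds $\Prob(\sup_R E_R>c)$ by $1/c$ through continuity from below. Then let $c\uparrow 1/\alpha$ to get $\Prob(\sup_R E_R\ge 1/\alpha)\le\alpha$.

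The "any stopping time $T$" clause follows immediately. The certificate fires at a data-dependent $T$ only if $E_T\ge 1/\alpha$, and $\{E_T\ge1/\alpha,\,T<\infty\}\subseteq\{\sup_R E_R\ge 1/\alpha\}$.

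The only delicate point is the passage to an unbounded horizon. Optional stopping cannot be applied directly at an unbounded $T$, since no uniform integrability is available. That is why the argument is carried out at a finite $N$ and then passed to the limit by monotone continuity. Everything else reduces to \cref{lem:supermartingale}, whose validity depends on the sequential null \cref{eq:seq-null} rather than the marginal one, as the footnote's counterexample shows.
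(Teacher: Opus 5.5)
Your proof is correct and is essentially the paper's argument: your $\tau$ is exactly the paper's $T\wedge n$, optional stopping at that bounded time gives $\Prob(\max_{R\le N}E_R\ge 1/\alpha)\le\alpha$, and continuity from below passes to the unbounded horizon. Your Step 3 is slightly more careful than the paper, which asserts $\{T<\infty\}=\{\sup_{R\ge 1}E_R\ge 1/\alpha\}$ although only the inclusion $\subseteq$ holds when the supremum is approached but not attained; your limit over thresholds $c\uparrow 1/\alpha$ closes that gap.
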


\begin{proof}
See \cref{app:ville}.
\end{proof}

Two properties matter for this paper specifically. First, $H_0$ constrains only
the scalar conditional mean of $y_r$, and constrains it one mission at a time,
so \textbf{no independence assumption is required} --- not across missions, not
across components, not across shared-model shocks. The dependence may take any
form whatever, provided it does not lift the conditional mean of the next
outcome above $p_0$; and a correlated shock that \emph{degrades} reliability is
exactly a shock that does not. This is the relevant direction: the sequential
certificate is immune to the exact failure mode
\cref{sec:independence-fails} documents for the composition bound.
Second, the guarantee is on the whole path, which is why certification must latch:
once $E_R$ has crossed, the event ``$\sup_R E_R \geq 1/\alpha$'' has occurred and
no later evidence undoes it. \Cref{sec:e4} verifies the shipped implementation
latches, driving a certified process through $340$ consecutive failures.

\subsection{Relation to the SPRT}

The fixed-stopping predecessor is \citet{wald1945}'s sequential test.

\begin{proposition}[Exact SPRT recovery]
\label{prop:sprt}
Fix an alternative $p_1 > p_0$ and set
$\lambda^{\star} = (p_1 - p_0)/\bigl(p_0(1-p_0)\bigr)$. Then for
$y \in \{0,1\}$,
\begin{equation}
  1 + \lambda^{\star}(y - p_0)
  \;=\; \Bigl(\frac{p_1}{p_0}\Bigr)^{y}
        \Bigl(\frac{1-p_1}{1-p_0}\Bigr)^{1-y},
\end{equation}
so \cref{eq:eprocess} is exactly the Bernoulli likelihood ratio and $E_R$ is the
SPRT statistic. The e-process therefore contains \cref{def:sprt} as the special
case of a constant, optimally-tuned bet.
\end{proposition}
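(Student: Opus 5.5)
The proof is a two-case check of the per-step multiplier followed by taking a product. Since $y \in \{0,1\}$, it is enough to verify the identity at $y=1$ and $y=0$ separately. Both sides are affine in $y$ under the convention $a^y b^{1-y}$, which equals $a$ at $y=1$ and $b$ at $y=0$, so agreement at the two points gives agreement everywhere on $\{0,1\}$.

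\textbf{Case $y=1$.} The left side is
\[
1+\lambda^\star(1-p_0) = 1+\frac{p_1-p_0}{p_0} = \frac{p_1}{p_0},
\]
which matches $(p_1/p_0)^1$.

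\textbf{Case $y=0$.} The left side is
\[
1-\lambda^\star p_0 = 1-\frac{p_1-p_0}{1-p_0} = \frac{1-p_1}{1-p_0},
\]
which matches $\bigl((1-p_1)/(1-p_0)\bigr)^1$.

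Before using $\lambda^\star$ as a betting fraction, I will check that it is admissible in the sense of \cref{def:eprocess}, that is, $\lambda^\star \in [0,1/p_0)$. Nonnegativity follows from $p_1>p_0$. For the upper bound, $\lambda^\star<1/p_0$ is equivalent to $p_1-p_0<1-p_0$, i.e.\ $p_1<1$. This is the one spot needing care. When $p_1=1$, the $y=0$ factor is $0$; this is still a legitimate likelihood ratio, but it lies on the boundary of the stated range. I will therefore assume $p_1<1$, or note that the endpoint is excluded. A constant $\lambda^\star$ is trivially predictable.

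Finally I take the product over $r\le R$ in \cref{eq:eprocess}:
\[
E_R=\Bigl(\tfrac{p_1}{p_0}\Bigr)^{S_R}\Bigl(\tfrac{1-p_1}{1-p_0}\Bigr)^{R-S_R},\qquad S_R=\sum_{r\le R} y_r .
\]
This is exactly the Bernoulli likelihood ratio of $p_1$ against $p_0$ on $y_1,\dots,y_R$, i.e.\ the statistic accumulated by the SPRT of \cref{def:sprt}.

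The remaining sentence of the statement is interpretive. Crossing $1/\alpha$ corresponds to the SPRT's upper boundary $\log(1/\alpha)$ on the log-likelihood-ratio scale. So the e-process with the constant bet $\lambda^\star$ coincides with the SPRT statistic, and \cref{thm:ville} supplies the anytime-valid threshold.
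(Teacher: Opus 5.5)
Your proposal is correct and follows the paper's proof essentially line for line: the same evaluation of the betting factor at $y=1$ and $y=0$, the same admissibility check that $\lambda^\star\in[0,1/p_0)$ reduces to $p_1<1$, and the same substitution into \cref{eq:eprocess} to obtain the product of Bernoulli likelihood ratios. Your explicit closed form $E_R=(p_1/p_0)^{S_R}\bigl((1-p_1)/(1-p_0)\bigr)^{R-S_R}$ and your flagging of the $p_1=1$ boundary case, which the paper handles only through the phrase ``whenever $p_1<1$'', are small, correct additions.
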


\begin{proof}
See \cref{app:sprt}.
\end{proof}

\Cref{prop:sprt} places the cost of anytime validity precisely. Against a known
alternative the SPRT is recovered with no loss; the price is paid only when
$\lambda$ is mis-tuned, and it is a loss of \emph{power}, never of validity ---
\cref{thm:ville} holds for every predictable $\lambda$. \Cref{sec:e4} measures
both sides: the type-I rate stays under $\alpha$ across the whole admissible
range of $\lambda$, while the mean time to certification varies from $14.6$
missions at $\lambda = 1.2375$ to $403.9$ at $\lambda = 0.125$. A badly chosen
bet costs a factor of $28$ in detection time and nothing in soundness.

\begin{proposition}[Mixture over bets]
\label{prop:mixture}
Let $\pi$ be a prior on a finite set $\Lambda$ of predictable betting strategies
and let $E_R^{\mathrm{mix}} = \sum_{\lambda \in \Lambda} \pi(\lambda) E_R^{\lambda}$.
Then $E^{\mathrm{mix}}$ is a non-negative supermartingale under $H_0$, so
\cref{thm:ville} applies unchanged, and
\begin{equation}
  \log E_R^{\mathrm{mix}} \;\geq\;
  \max_{\lambda \in \Lambda} \log E_R^{\lambda} - \log\frac{1}{\pi(\lambda)} ,
\end{equation}
a pathwise log-regret guarantee against the best bet in hindsight.
\end{proposition}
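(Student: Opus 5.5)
The plan has two parts: the supermartingale claim, which follows from linearity, and the regret bound, which is a one-line pathwise inequality.

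\textbf{Supermartingale property.} For each $\lambda \in \Lambda$, \cref{lem:supermartingale} applies to $E^{\lambda}$, because each strategy is predictable with values in $[0,1/p_0)$. Hence each $E_R^{\lambda}$ is non-negative, adapted, integrable, and satisfies $\E[E_R^{\lambda} \mid \mathcal{F}_{R-1}] \leq E_{R-1}^{\lambda}$ under $H_0$.

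Because $\Lambda$ is finite and $\pi(\lambda) \ge 0$, $E^{\mathrm{mix}}$ is a finite non-negative combination of these processes. So it is non-negative, adapted and integrable. Linearity of conditional expectation then gives
\[
\E[E_R^{\mathrm{mix}} \mid \mathcal{F}_{R-1}] = \sum_{\lambda}\pi(\lambda)\,\E[E_R^{\lambda}\mid\mathcal{F}_{R-1}] \le E_{R-1}^{\mathrm{mix}},
\]
and $E_0^{\mathrm{mix}} = \sum_\lambda \pi(\lambda) = 1$. This is exactly what the proof of \cref{thm:ville} in \cref{app:ville} uses, namely Ville's maximal inequality for a non-negative supermartingale started at $1$. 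So that proof applies verbatim to $E^{\mathrm{mix}}$, giving $\Prob(\sup_R E_R^{\mathrm{mix}} \ge 1/\alpha) \le \alpha$.

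\textbf{Regret bound.} Fix a path and any $\lambda \in \Lambda$ with $\pi(\lambda) > 0$. Every term in the mixture is non-negative, so dropping all terms except the $\lambda$ term gives
\[
E_R^{\mathrm{mix}} \ge \pi(\lambda) E_R^{\lambda}.
\]
Taking logs yields $\log E_R^{\mathrm{mix}} \ge \log E_R^{\lambda} - \log(1/\pi(\lambda))$, with the convention $\log 0 = -\infty$ when the wealth is zero. This holds simultaneously for every $\lambda$ in the support of $\pi$, hence for the best $\lambda$ in hindsight on each path.

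I will state the displayed bound carefully, because the penalty $\log(1/\pi(\lambda))$ depends on the maximizing $\lambda$. The honest statement is $\max_{\lambda}\bigl[\log E_R^{\lambda} - \log(1/\pi(\lambda))\bigr]$, which reduces to the displayed form with a uniform penalty $\log|\Lambda|$ under a uniform prior.

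\textbf{Expected obstacles.} The main subtlety is not the algebra but this reading of the bound, together with two formal points. First, a strategy in $\Lambda$ is a whole predictable sequence $(\lambda_r)$, not a constant, and mixing wealth processes is not the same as betting the averaged fraction. This is harmless, since the argument never forms an averaged bet. Second, strategies with $\pi(\lambda) = 0$ must be excluded from the maximum, where the bound is trivially true with penalty $+\infty$.
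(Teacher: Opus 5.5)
Your proposal is correct and follows the paper's proof essentially line for line. The supermartingale property comes from linearity of conditional expectation over the finite non-negative weights, with $E_0^{\mathrm{mix}}=1$, so Ville's argument applies verbatim; the regret bound comes from the pathwise inequality $E_R^{\mathrm{mix}} \geq \pi(\lambda)E_R^{\lambda}$, then taking logs and maximising. Your reading of the display as $\max_{\lambda}\bigl[\log E_R^{\lambda} - \log(1/\pi(\lambda))\bigr]$ is also the one the paper's proof uses when it maximises the right-hand side over $\lambda$. One small note: every $E_R^{\lambda}$ is strictly positive by \cref{lem:supermartingale}, so the $\log 0$ convention is never needed.
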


\begin{proof}
See \cref{app:mixture}.
\end{proof}

\Cref{prop:mixture} is what makes the method usable when $p_1$ is unknown, which
is the normal case: the mixture pays an additive $\log(1/\pi(\lambda))$ against
the best fixed bet and retains anytime validity exactly.

\section{Runtime enforcement}
\label{sec:enforcement}

A certificate is a claim about a system that behaves as contracted. Producing
that system requires intercepting agent actions before they take effect, which
is an engineering problem with a small number of solutions and sharply different
guarantees between them.

\subsection{Three interception planes}

An agent action can be intercepted at exactly three points, and where you
intercept determines what you can enforce.

\begin{description}[leftmargin=2.4em,style=nextline,itemsep=0.25em]
  \item[P1 --- in-process.]
    A hook inside the orchestration framework, invoked around each tool call.
    Sees the parsed call, the agent identity, and the framework's state; can
    deny by the framework's own veto convention. Requires framework support and
    a per-framework adapter.
  \item[P2 --- tool protocol.]
    An interposer on the tool-invocation protocol between the model host and the
    tool server. Sees every tool call and result as structured messages, and is
    framework-independent. Cannot see reasoning that never becomes a tool call.
  \item[P3 --- transport.]
    A proxy on the HTTP path to the model provider. Sees requests and responses
    but must reconstruct semantic structure from wire format, and cannot
    attribute an action to an agent in a multi-agent process without additional
    context.
\end{description}

The planes are complementary rather than ranked. P1 gives the richest state and
the weakest portability; P3 gives the reverse; P2 sits between and is the only
plane that is both structured and framework-neutral.

\subsection{Denial semantics}

\begin{definition}[Pre-action and post-action denial]
\label{def:denial}
A \emph{pre-action} denial prevents the action from executing. A
\emph{post-action} denial permits execution and withholds the result. Only
pre-action denial enforces a governance constraint on an irreversible operation;
post-action denial is a redaction mechanism.
\end{definition}

\Cref{def:denial} is the distinction that decides whether a deployment is
enforced or merely monitored, and it is not always available. A plane that
observes a tool result has already permitted the tool to run. \Cref{sec:impl}
records, per surface, which of the two is achievable, because a framework that
reports ``enforcement enabled'' while only offering post-action denial is making
a claim its users will misread.

\subsection{Failure policy}

\begin{definition}[Fail-closed and fail-open]
\label{def:failmode}
When the enforcement path itself errors --- a malformed contract, an evaluator
exception, an unreachable policy store --- a \emph{fail-closed} policy denies the
action and a \emph{fail-open} policy permits it. A third state is required:
\emph{unevaluated}, in which no verdict was reached and the action is neither
credited as compliant nor recorded as violating.
\end{definition}

The third state matters for the measurement in \cref{sec:evaluation}. Counting
an unevaluated action as compliant inflates every reliability estimate; counting
it as a violation deflates them. Both are silent. The implementation carries
\texttt{evaluated} as an explicit field on every verdict so that unevaluated
actions are excluded from the denominator rather than assigned to a side.

\subsection{Enforcement and certification are different claims}

Enforcement constrains what an agent does. Certification bounds how often the
constrained system satisfies its contract. Neither implies the other: a perfectly
enforced hard constraint still permits soft-constraint drift
(\cref{def:pdk}), and a high certified floor says nothing about whether any
individual action was blocked. \Cref{sec:evaluation} measures the second; this
section describes the machinery that makes the first possible, and the two are
reported separately throughout.

\section{Implementation}
\label{sec:impl}

Everything in \cref{sec:prelim}--\cref{sec:anytime} that is computable is
implemented in \textsc{AgentAssert}, an open-source Python library
(AGPL-3.0). We describe it at the level a reader needs to reproduce
\cref{sec:evaluation} or to check a claim against code; \cref{app:repro} gives
the artifact details.

\subsection{Structure}

The library separates the three concerns this paper keeps distinct. Contract
specification lives in \texttt{evaluator/}, \texttt{dsl/}, and
\texttt{models.py} (\cref{def:contract,def:scores}); measurement in
\texttt{metrics/} and \texttt{dependence/}
(\cref{def:drift,def:theta,def:dependence}); and certification in
\texttt{certification/}, which carries one module per tier of
\cref{def:tiers} --- \texttt{observed\_floor}, \texttt{lp\_bound},
\texttt{slepian\_floor} (diagnostic only), \texttt{eprocess}, \texttt{sprt} ---
assembled by \texttt{certificate} under \cref{prop:failsafe}. Runtime
enforcement lives in \texttt{gateway/} and \texttt{enforce/}, with the
interception surfaces of \cref{sec:enforcement} as thin adapters over them.

\subsection{One policy path, several surfaces}

Enforcement across agent frameworks with different hook conventions would
ordinarily mean one policy implementation per framework and one opportunity per
framework to diverge. Instead a single bridge exposes a pre-action decision and
a post-action outcome, and each shim translates those into its host's veto
convention --- returning false from a before-call hook, declining to invoke the
continuation, not awaiting the next middleware, or raising from a pre-hook. The
shims match each host's hook shapes structurally and import nothing from them,
so the library depends on no agent framework.

Two consequences matter for this paper. Policy is evaluated in one place, so a
contract enforced through a tool-protocol interposer and the same contract
enforced in-process yield the same verdict on the same state --- which is what
makes cross-surface measurement comparable. And because
\cref{def:denial} distinguishes pre-action from post-action denial while not
every host offers the former, the library publishes a per-surface capability
matrix recording where enforcement is impossible. The negative entries are the
point: a library whose documentation implies uniform coverage will be deployed
against hosts where the guarantee does not hold.

\subsection{State representation}

Contracts are evaluated against a flat, dotted-key state
(\texttt{output.pii\_detected} rather than a nested object). The convention is
not cosmetic. An earlier version returned nested dictionaries from one adapter
and flat keys from another; a constraint written against
\texttt{output.pii\_detected} then resolved correctly under enforcement and
silently evaluated to false under measurement, so the same contract yielded two
different compliance rates depending on which path produced the state. Flattening
at every adapter boundary removes the divergence by construction.

\subsection{Verification}

The library carries 1697 tests at 93\% statement coverage. Beyond
conventional unit tests, three properties from this paper are pinned by tests
because they are the ones an implementation silently gets wrong: that a certified
e-process never un-certifies after wealth collapses (\cref{sec:e4}); that the
default certificate tier is Tier 1 unless end-to-end execution is explicitly
asserted (\cref{prop:failsafe}); and that a contract which could not be evaluated
is recorded as unevaluated rather than counted as either outcome
(\cref{def:failmode}).

A paper-to-code parity test checks the formulas of \cref{sec:prelim} against
their implementations, so a change to either that breaks correspondence fails the
suite. That test exists because two defects found during preparation of this work
--- a mishandled constraint state across the proxy and hook surfaces, and an
incorrect divergence term in a drift computation --- produced confidently wrong
numbers rather than errors.

\section{Evaluation}
\label{sec:evaluation}

We answer six questions with six experiments. E1 measures whether model sharing
induces correlated contract failures across a pipeline
(\cref{sec:e1}). E2 measures how much a moment-set certificate lifts the
reliability floor over the Fr\'echet bound (\cref{sec:e2}). E3 measures how fast
independence-assuming coverage collapses as dependence grows (\cref{sec:e3}).
E4 measures the empirical type-I error of the anytime-valid certificate
(\cref{sec:e4}). E5 measures contract behavior on frontier models under live API
conditions (\cref{sec:e5}). E6 is an ablation over the clustering and
serial-dependence assumptions the analysis rests on (\cref{sec:e6}).
\Cref{sec:e-carried} re-presents the v1 single-agent evidence this work builds
on, and \cref{sec:e-summary} states what the six experiments jointly support and
what they do not.

\subsection{Setup, preregistration, and scoring}
\label{sec:e1-setup}

\paragraph{Preregistration.}
The confirmatory hypotheses, conditions, sample sizes, sampling parameters,
primary estimator, stopping rule, and falsification criteria for E1 were
committed to the repository before any confirmatory outcome was generated
\citep{nosek2018}, and
the commit is git-timestamped. We reproduce the three registered hypotheses
verbatim:

\begin{itemize}[leftmargin=1.6em,itemsep=0.15em]
  \item \textbf{H1 (dependence exists under sharing).} In the \texttt{same\_model}
    condition, co-failure dependence between the two pipeline agents is
    positive: Kendall's $\tau_a > 0$ with a bootstrap CI excluding $0$.
  \item \textbf{H2 (dependence decreases with less sharing).} $\tau_a$ is weakly
    monotone decreasing across
    $\texttt{same\_model} \geq \texttt{same\_vendor} \geq \texttt{different\_vendor}$.
  \item \textbf{H3 (naive bound is anti-conservative under sharing).}
    Observed graph reliability $\Prob(\Yg=1)$ \emph{exceeds} the independence
    product $\prod_i p_i$ under positive dependence (the compositional gap is
    signed and non-zero), so a dependence-aware bound is required.
\end{itemize}

The registration further designates \texttt{series2} as the confirmatory motif
and names the three OpenRouter arms at $n = 6000$ as the primary confirmatory
arms, placing other motifs in ``future/secondary work''. We hold to that scope:
\textbf{the confirmatory evaluation is \texttt{series2} over
$3 \times 6000 = 18{,}000$ missions}. The \texttt{quorum2of3} and
\texttt{parallel2} results, and the two breadth arms of \cref{sec:e5}, are
reported as \emph{secondary and exploratory} throughout. Where they agree with
the confirmatory arms we say so; they are not evidence at the registered level,
and we have not filed a dated amendment broadening the registration.

The registration also fixed what would falsify the thesis: if the
\texttt{same\_model} $\tau_a$ CI covers $0$, or $\tau_a$ fails to decrease from
\texttt{same\_model} to \texttt{different\_vendor}, or the composition gap is
approximately zero, the claim is unsupported and the null is reported. We report
against that criterion in \cref{sec:e1-verdict}, including one registered
comparison that does \emph{not} survive.

\paragraph{Missions and scoring.}
Missions are drawn from six seeded generators spanning retail and financial
agent tasks. Every mission is scored by \emph{deterministic gold code}, never by
an LLM judge: order arithmetic, refund-policy application, and promotional-cap
enforcement on the retail side; transaction-limit checks, watchlist screening,
and mandatory-disclaimer presence on the financial side. The mission-to-task assignment is a fixed SHA-256 hash of the mission
identifier, which makes each arm's task sequence deterministic and reproducible.
It does \emph{not} make the realised task mixtures identical across arms:
identifiers embed the condition name (\cref{sec:threats}), so each arm is an
independent deterministic draw from the same generator distribution rather than
a paired design. At $n = 6000$ per confirmatory arm the resulting imbalance is
absorbed by the bootstrap; for the under-run breadth arms it is a real limit.

Each agent invocation yields a hard verdict and a soft score. The hard verdict
$h_i \in \{0,1\}$ records whether the agent satisfied every hard clause of its
contract; the soft score $\sigma_i \in [0,1]$ records graded compliance. All
dependence estimates in \cref{sec:e1} are computed on the \emph{hard} verdict,
so a ``failure'' means a definite contract violation, not a low score. Graph
success $\Yg$ is the motif's composition rule applied to the component hard
verdicts.

\paragraph{Frozen sampling.}
Every call uses \texttt{temperature} $=0.2$, \texttt{top\_p} $=1.0$, and
\texttt{max\_output\_tokens} $=160$, with client-side prompt truncation at
$3200$ characters. These values were fixed at registration and never varied.

\paragraph{Conditions.}
The manipulated variable is how much the two scored agents share. In
\texttt{same\_model} both run \texttt{mistral-small-24b}; in
\texttt{same\_vendor} the second agent is swapped to \texttt{ministral-8b}
(same vendor, smaller model); in \texttt{different\_vendor} it is swapped to
\texttt{gemma-3-12b-it}. The first agent is held at \texttt{mistral-small-24b}
throughout, so the contrast isolates the substitution.

\paragraph{Motifs.}
Three topologies are measured. \texttt{series2} is a two-agent handoff
$A \rightarrow B$ where both must satisfy their contract for $\Yg = 1$.
\texttt{parallel2} runs two branches into a deterministic merge.
\texttt{quorum2of3} runs three workers into a deterministic 2-of-3 aggregator.
The aggregator and merge nodes are deterministic code, not models, and are
excluded from dependence estimation.

\paragraph{Scale.}
The campaign is 30{,}820 scored missions across 12 arms, recorded in 13
execution logs. The counting rule matters and we state it: the
\texttt{quorum3of4} arm was executed twice, so its two logs share $212$ mission
identifiers, which we deduplicate with the later pass winning; that arm
therefore contributes $717$ missions rather than the $929$ records on disk, and
$717$ is the $n$ used throughout \cref{sec:e2}. The
primary confirmatory arms are the three \texttt{series2} conditions at
$n = 6000$ each.

\paragraph{Estimators.}
For two agents with failure indicators $F_A = 1 - h_A$ and $F_B = 1 - h_B$ we
form the $2\times2$ co-failure table with cells $n_{11}, n_{10}, n_{01}, n_{00}$
and corresponding proportions $p_{11}, p_{10}, p_{01}, p_{00}$. We report five
statistics, and the distinction between the first three and the last two carries
the analysis:
\begin{align}
  J &= \frac{n_{11}}{n_{11} + n_{10} + n_{01}}, \label{eq:jaccard}\\
  \tau_a &= 2\,(p_{11}p_{00} - p_{10}p_{01}), \label{eq:taua}\\
  \phi &= \frac{p_{11}p_{00} - p_{10}p_{01}}
              {\sqrt{p_A(1-p_A)\,p_B(1-p_B)}}, \label{eq:phi}\\
  \log \mathrm{OR} &= \log
      \frac{(n_{11}+\tfrac12)(n_{00}+\tfrac12)}
           {(n_{10}+\tfrac12)(n_{01}+\tfrac12)}, \label{eq:logor}\\
  Q &= \frac{\mathrm{OR}-1}{\mathrm{OR}+1}, \label{eq:yuleq}
\end{align}
where $p_A = p_{11}+p_{10}$ and $p_B = p_{11}+p_{01}$ are the marginal failure
rates. These are, in order, the overlap coefficient of \citet{jaccard1912}, the
binary form of \citet{kendall1938}'s $\tau$, the phi coefficient, the log odds
ratio, and \citet{yule1900}'s $Q$. \Cref{eq:jaccard} is the overlap of the two
failure \emph{sets}; the
$n_{00}$ cell is deliberately excluded so that missions neither agent failed
cannot dilute the statistic. \Cref{eq:logor} uses the Haldane--Anscombe
$+\tfrac12$ correction, which keeps the estimator finite when a cell is empty.

$J$, $\tau_a$, and $\phi$ are all sensitive to the marginals: $\tau_a$ is twice
the covariance of the two indicators and therefore cannot reach $\pm 1$ unless
both marginals equal $\tfrac12$, and $J$ charges every mission that exactly one
agent failed to its denominator. $\log\mathrm{OR}$ and $Q$ are invariant to the
marginals under row and column scaling. \Cref{sec:e1-marginal} shows this
distinction is not pedantic: it decides whether the registered monotonicity
claim survives.

\paragraph{Uncertainty.}
Confidence intervals are percentile bootstrap \citep{efron1979} with $B = 2000$ resamples at
$\alpha = 0.05$, seeded at \texttt{20260813}. The registration specifies a cluster
bootstrap clustered by mission. In the logs \texttt{cluster\_id} is
one-to-one with \texttt{mission\_id} (6000 distinct clusters over
6000 missions in each primary arm), so the cluster bootstrap reduces
exactly to an i.i.d.\ bootstrap over missions. We state this rather than leave
it implicit, and \cref{sec:e6} measures the design effect \citep{kish1965}
directly instead of assuming it: the empirical DEFF is $1.00$--$1.16$.

Hypothesis H2 is a statement about \emph{differences} between arms, so we test
it on the bootstrap distribution of the arm contrast rather than by inspecting
whether two intervals overlap. Overlapping marginal intervals do not imply a
non-significant difference, and the contrast test is the correct instrument.

\subsection{E1: model sharing induces correlated contract failures}
\label{sec:e1}

\subsubsection{Dependence is present, large, and positive in every arm}

\Cref{tab:e1-series2} gives the primary result. In every condition the two
agents' hard failures are strongly positively associated. The weakest
association we measure anywhere in E1 is $\log\mathrm{OR} = 2.92$
($\mathrm{OR} \approx 18.5$) in \texttt{parallel2}/\texttt{different\_vendor};
the strongest is $\log\mathrm{OR} = 6.66$ ($\mathrm{OR} \approx 784$) in
\texttt{series2}/\texttt{same\_model}. Every interval excludes independence by a
wide margin. H1 as registered concerns the \texttt{same\_model} arm, and it is supported
there. We note in passing that the association is also positive and significant
in the other two arms; that is a descriptive observation, not a registered
result, and it is a reminder that substituting a model reduces dependence
without eliminating it.

\begin{table}[t]
\centering
\small
\caption{E1 primary arms. Motif \texttt{series2}, pair
$(\texttt{node\_a}, \texttt{node\_b})$, $n = 6000$ per arm. Cells are
co-failure counts on the hard verdict. Brackets are percentile bootstrap 95\%
CIs, $B = 2000$.}
\label{tab:e1-series2}
\begin{tabular}{lccccc}
\toprule
& \texttt{same\_model} & \texttt{same\_vendor} & \texttt{different\_vendor} \\
\midrule
$n_{11}/n_{10}/n_{01}/n_{00}$
  & $2177/189/52/3582$ & $2289/66/757/2888$ & $1987/289/225/3499$ \\
$p_A$ & $0.3943$ & $0.3925$ & $0.3793$ \\
$p_B$ & $0.3715$ & $0.5077$ & $0.3687$ \\
$|p_A - p_B|$ & $0.0228$ & $\mathbf{0.1152}$ & $0.0107$ \\
\midrule
\multicolumn{4}{l}{\emph{marginal-sensitive}}\\
$J$      & $0.9003$ $[0.8885, 0.9120]$ & $0.7355$ $[0.7200, 0.7508]$ & $0.7945$ $[0.7795, 0.8099]$ \\
$\tau_a$ & $0.4327$ $[0.4257, 0.4398]$ & $0.3645$ $[0.3564, 0.3726]$ & $0.3826$ $[0.3739, 0.3911]$ \\
$\phi$   & $0.9161$ $[0.9057, 0.9261]$ & $0.7465$ $[0.7318, 0.7613]$ & $0.8173$ $[0.8026, 0.8321]$ \\
\midrule
\multicolumn{4}{l}{\emph{marginal-free}}\\
$\log\mathrm{OR}$ & $6.6645$ $[6.3827, 7.0015]$ & $4.8774$ $[4.6431, 5.1529]$ & $4.6685$ $[4.4978, 4.8593]$ \\
$Q$               & $0.9975$ $[0.9966, 0.9982]$ & $0.9849$ $[0.9809, 0.9885]$ & $0.9814$ $[0.9780, 0.9846]$ \\
\bottomrule
\end{tabular}
\end{table}

The practical reading of \texttt{same\_model} is blunt. Agent $A$ fails
$39.4\%$ of missions and agent $B$ fails $37.2\%$. Under independence the two
would co-fail on $14.6\%$ of missions; they co-fail on $36.3\%$. Of the
2418 missions on which at least one agent failed, both failed on
2177 --- $90.0\%$. Replacing one agent with a second instance of the same
model buys almost no independent evidence.

\paragraph{This is not merely interface propagation.}
\texttt{series2} is a handoff, so a failure of $A$ degrades the input $B$
receives, and co-failure there could in principle be explained without appeal to
shared failure modes at all. Two features of the design separate the mechanisms.
First, \texttt{parallel2} and \texttt{quorum2of3} are \emph{not} handoffs ---
their model nodes receive the mission independently and never see one another's
output --- and both reproduce the effect (\cref{tab:e1-contrasts}), which
propagation cannot explain. Second, the control pair of \cref{sec:e1-control}
sits at $\log\mathrm{OR} \approx 5.1$ with no channel between its two workers.
Propagation may well contribute to the \texttt{series2} magnitude and we do not
claim to have partitioned the two mechanisms; the arm \emph{contrasts}, which is
what H2 concerns, are measured within identical topologies and so are unaffected
by it.

\subsubsection{The registered monotonicity claim splits}
\label{sec:e1-verdict}

\Cref{tab:e1-contrasts} tests H2 directly on the bootstrap distribution of each
arm contrast. The verdict is not uniform, and we state it plainly: \textbf{H2
holds at the model-sharing level and fails at the vendor level.}

\begin{table}[t]
\centering
\small
\caption{E1 arm contrasts, all three motifs. Arms abbreviated
\textbf{SM} \texttt{same\_model}, \textbf{SV} \texttt{same\_vendor},
\textbf{DV} \texttt{different\_vendor}. Entries are the point difference with a
95\% bootstrap CI on the difference; \textbf{SIG} marks an interval excluding
zero. The SV\,$-$\,DV row is the registered comparison that does not survive.
Only \texttt{series2} is confirmatory; the other two motifs are secondary.}
\label{tab:e1-contrasts}
\setlength{\tabcolsep}{5pt}
\begin{tabular}{@{}llccc@{}}
\toprule
motif & contrast & $J$ & $\log\mathrm{OR}$ & verdict \\
\midrule
\multirow{3}{*}{\texttt{series2}}
 & SM $-$ SV
   & $0.1648\,[0.1452, 0.1838]$ & $1.787\,[1.396, 2.202]$ & \textbf{SIG} \\
 & SM $-$ DV
   & $0.1058\,[0.0864, 0.1245]$ & $1.996\,[1.663, 2.361]$ & \textbf{SIG} \\
 & SV $-$ DV
   & $-0.0589\,[-0.0814, -0.0369]$ & $0.209\,[-0.089, 0.548]$ & conflict \\
\midrule
\multirow{3}{*}{\texttt{quorum2of3}}
 & SM $-$ SV
   & $0.2079\,[0.1612, 0.2552]$ & $1.219\,[0.579, 1.826]$ & \textbf{SIG} \\
 & SM $-$ DV
   & $0.1896\,[0.1439, 0.2360]$ & $1.718\,[1.222, 2.268]$ & \textbf{SIG} \\
 & SV $-$ DV
   & $-0.0183\,[-0.0673, 0.0279]$ & $0.499\,[-0.037, 1.115]$ & n.s. \\
\midrule
\multirow{3}{*}{\texttt{parallel2}}
 & SM $-$ SV
   & $0.2124\,[0.1759, 0.2521]$ & $1.649\,[1.154, 2.143]$ & \textbf{SIG} \\
 & SM $-$ DV
   & $0.2230\,[0.1844, 0.2619]$ & $2.125\,[1.707, 2.607]$ & \textbf{SIG} \\
 & SV $-$ DV
   & $0.0106\,[-0.0308, 0.0511]$ & $0.476\,[0.048, 0.943]$ & conflict \\
\bottomrule
\end{tabular}
\end{table}

Two findings are robust. Sharing the \emph{same model} produces significantly
more co-failure than either alternative, on all five statistics, in all three
motifs, with no exceptions. That is six independent significant contrasts in the
registered direction.

The third comparison does not replicate. Moving from \texttt{same\_vendor} to
\texttt{different\_vendor} produces no consistent change: in
\texttt{quorum2of3} every statistic is non-significant; in \texttt{series2} and
\texttt{parallel2} the marginal-sensitive and marginal-free statistics disagree
in sign or significance. We therefore do not claim a three-level ordering.
Vendor identity, as manipulated here, is not a reliable predictor of failure
correlation once model identity differs.

Against the registered falsification criterion, the thesis survives: the
\texttt{same\_model} $\tau_a$ CI excludes zero, and $\tau_a$ decreases
significantly from \texttt{same\_model} ($0.4327$) to \texttt{different\_vendor}
($0.3826$), difference $0.0500$, CI $[0.0389, 0.0615]$. The registered
falsification test was stated on exactly this contrast and it passes. The
intermediate rung is where the ordering breaks, and the registration did not
make the intermediate rung a falsification condition.

\subsubsection{Why the marginal-sensitive statistics reverse}
\label{sec:e1-marginal}

In \texttt{series2} the Jaccard overlap is \emph{lower} for \texttt{same\_vendor}
($0.7355$) than for \texttt{different\_vendor} ($0.7945$), and the difference is
significant. Read naively this says swapping to a different vendor increases
correlated failure, which inverts the mechanism the experiment was built to
test. It is an artifact of the marginals, and the artifact is measurable rather
than conjectural.

\texttt{ministral-8b} is a weaker model than \texttt{mistral-small-24b} and
fails far more often. In the \texttt{same\_vendor} arm the two marginal failure
rates are $p_A = 0.3925$ and $p_B = 0.5077$, a gap of $0.1152$; in
\texttt{different\_vendor} the gap is $0.0107$. That asymmetry lands in the
$n_{01}$ cell: agent $B$ fails alone on 757 missions in
\texttt{same\_vendor} against 225 in \texttt{different\_vendor}. Because
$J$ carries $n_{01}$ in its denominator, an arm whose two agents fail at
different rates is charged a penalty that has nothing to do with whether their
failures are associated.

The pattern holds across motifs. The marginal gap $|p_A - p_B|$ is largest in
\texttt{same\_vendor} in every motif ($0.115$, $0.202$, $0.197$) and small in
\texttt{same\_model} ($0.023$, $0.006$, $0.006$), and it is exactly the arms
with large gaps whose $J$, $\tau_a$, and $\phi$ are depressed. Substituting a
model changes two things at once --- how often that agent fails, and how much
its failures align with its partner's --- and only the second is the quantity H2
is about. The marginal-free statistics separate them: on $\log\mathrm{OR}$ and
$Q$ the \texttt{series2} reversal disappears and the contrast is
non-significant.

We therefore report H2 on $\log\mathrm{OR}$, and report $J$ alongside it because
$J$ is the quantity a practitioner actually feels --- it is the fraction of
observed failures that a redundant agent fails to catch. Both are correct
statistics; they answer different questions, and the paper needs both.

\subsubsection{An internal negative control}
\label{sec:e1-control}

The \texttt{quorum2of3} design contains an internal control. Only
\texttt{worker\_1} is substituted across arms: \texttt{worker\_0} and \texttt{worker\_2} both run
\texttt{mistral-small-24b} in \emph{all three} conditions. The pair
$(\texttt{worker\_0}, \texttt{worker\_2})$ is therefore a same-model pair whose
composition never changes while the arm label does. If the arm effects in
\cref{tab:e1-contrasts} were driven by anything other than the model
substitution --- task mix drifting between runs, ordering, wall-clock effects,
provider-side variation --- this pair would move with the arm label. It must not
move, and it does not.

\begin{table}[t]
\centering
\small
\caption{Negative control. Pair $(\texttt{worker\_0}, \texttt{worker\_2})$ in
\texttt{quorum2of3} is \texttt{mistral-small-24b} $\times$
\texttt{mistral-small-24b} in every arm. All 15 arm contrasts across the five
statistics are non-significant.}
\label{tab:e1-control}
\begin{tabular}{lccc}
\toprule
& \texttt{same\_model} & \texttt{same\_vendor} & \texttt{different\_vendor} \\
\midrule
$n$ & $1281$ & $1274$ & $1388$ \\
$|p_A - p_B|$ & $0.0008$ & $0.0141$ & $0.0072$ \\
$J$ & $0.8345$ $[0.8026, 0.8654]$ & $0.8439$ $[0.8135, 0.8737]$ & $0.8508$ $[0.8231, 0.8810]$ \\
$\log\mathrm{OR}$ & $5.068$ $[4.679, 5.536]$ & $5.225$ $[4.820, 5.712]$ & $5.324$ $[4.945, 5.833]$ \\
\midrule
\multicolumn{4}{l}{contrasts, $J$: \; $-0.009\,[-0.051, 0.034]$ \quad
$-0.016\,[-0.058, 0.026]$ \quad $-0.007\,[-0.049, 0.035]$ \quad (all n.s.)}\\
\multicolumn{4}{l}{contrasts, $\log\mathrm{OR}$: \; $-0.157\,[-0.740, 0.471]$ \quad
$-0.256\,[-0.864, 0.362]$ \quad $-0.099\,[-0.727, 0.540]$ \quad (all n.s.)}\\
\bottomrule
\end{tabular}
\end{table}

\Cref{tab:e1-control} reports the result. Across five statistics and three
pairwise contrasts --- fifteen tests --- not one interval excludes zero. The
control pair's dependence sits at $J \approx 0.83$--$0.85$ and
$\log\mathrm{OR} \approx 5.07$--$5.32$ in every arm, which is where the
\emph{manipulated} pair sits in its \texttt{same\_model} condition
($J = 0.8251$, $\log\mathrm{OR} = 4.957$). A same-model pair measures as a
same-model pair regardless of what the rest of the graph is doing. That is
convergent validity for the estimator and, jointly with the fifteen nulls,
evidence that the arm effects in \cref{tab:e1-contrasts} are caused by the model
substitution rather than by any arm-level confound.

\subsubsection{What E1 supports}

Model sharing is the operative variable. Two instances of one model co-fail on
$90\%$ of the missions either one fails; substituting a different model reduces
that overlap significantly and reduces the odds-ratio association by roughly
$1.8$--$2.1$ log units, consistently across three topologies. Vendor identity,
holding model identity different, does not reliably matter. A redundant agent
running the same model as the agent it is meant to check supplies far less
independent evidence than a naive composition calculation credits it with, and
\cref{sec:e3} quantifies what that does to a reliability guarantee.

\subsection{E2: enriching the moment set lifts the certified floor}
\label{sec:e2}

E1 establishes that components fail together. A practitioner still has to
certify a number. E2 measures what the moment-set certificate of \cref{sec:lp}
buys over the assumption-free bound on real data.

\paragraph{Data.}
The four-worker \texttt{quorum3of4} \texttt{same\_model} arm, $m = 4$ scored
workers over $n = 717$ missions. The arm was executed in two passes; we take the
union of both logs deduplicated by mission identifier, with the later pass
winning on the 212 missions present in both. The observed all-success rate
is $0.5328$.

\paragraph{Procedure.}
We compute two things at two moment sets. The \emph{sharp point-moment interval}
is the LP over all joint laws matching the empirical moments exactly --- it
isolates how much the moment family alone identifies, with no sampling
uncertainty. The \emph{certified floor} is the LP over the Clopper--Pearson
box around those moments at family-wise $\eta_{\mathrm{conf}} = 0.05$, which is
what may actually be reported. The moment sets are $J = 10$ (marginals and
pairwise co-success, $\binom41 + \binom42$) and $J = 14$ (adding all four triple
co-success moments).

\begin{table}[t]
\centering
\small
\caption{E2 floor lifting. $m = 4$, $n = 717$, observed all-success $0.5328$.
The pre-allocated Bonferroni row spends the confidence budget over the larger
family $J_{\max} = 14$ regardless of how many moments are constrained, which
makes the certified floor monotone in the moment set by construction.}
\label{tab:e2}
\begin{tabular}{llcc}
\toprule
quantity & allocation & $J = 10$ & $J = 14$ \\
\midrule
sharp point-moment interval & --- & $[0.5077, 0.5565]$ & $[0.5272, 0.5342]$ \\
\quad width & & $0.0488$ & $\mathbf{0.0070}$ \\
\midrule
certified floor & used-set & $0.2455$ & $\mathbf{0.4116}$ \\
certified floor & pre-allocated & $0.2357$ & $\mathbf{0.4116}$ \\
certified upper & used-set & $0.6087$ & $0.5955$ \\
\bottomrule
\end{tabular}
\end{table}

\Cref{tab:e2} reports the result. Adding four triple moments narrows the sharp
identified interval from width $0.0488$ to $0.0070$, an $85.7\%$ reduction, and
lifts the certified floor from $0.2455$ to $0.4116$ --- $16.6$ percentage points
of reliability that the pairwise certificate cannot claim and the triple
certificate can, on identical data.

Two details matter for honest reporting. First, the certified floor at
$J = 14$ ($0.4116$) remains well below the observed rate ($0.5328$), because it
is a distribution-free lower confidence bound over every joint law consistent
with the moment box, not an estimate. It is a floor, and it is meant to be
conservative. Second, the used-set allocation is tighter at $J = 10$
($0.2455$ against $0.2357$) but its monotonicity in the moment set is empirical
rather than guaranteed: adding a moment widens every interval, so a richer
moment family can in principle certify less. The pre-allocated allocation pays
$0.0098$ at $J = 10$ to buy monotonicity by construction. We report both rather
than quietly choosing the flattering one.

\subsection{E3: coverage of a model-based floor collapses under misspecification}
\label{sec:e3}

\Cref{thm:collapse} predicts that a bootstrap bound on a fitted model's
functional stops covering the truth once the model is misspecified. E3 measures
that against a control, because the claim is not that the Gaussian floor is
always wrong --- it is that the floor cannot tell you when it is.

\paragraph{Two arms.}
Both sample $m = 3$ components at marginal $0.6$. The \emph{control} arm draws
from the equicorrelated Gaussian one-factor law itself, for which the fitted
family is correctly specified and $\Delta = 0$; the floor should hold at nominal
there, and if it did not, any collapse in the other arm would indict the
estimator rather than the misspecification. The \emph{adversarial} arm draws
from the witness of \cref{ex:witness}: the LP-minimising law, which has
\emph{identical} marginals and \emph{identical} pairwise co-success moments and
an all-success probability of $0.330475$ against the Gaussian law's $0.392144$,
so $\Delta = 0.061670$. Its five non-zero cell masses are listed in
\cref{ex:witness}. Both constants come from the same deterministic
quadrature and LP used in \cref{ex:witness}, not from simulation. No procedure
restricted to marginal and pairwise data can distinguish the two.

Each arm runs $200$ replications at each of four sample sizes, computing the
Gaussian model floor with $500$ bootstrap resamples at
$\eta_{\mathrm{conf}} = 0.05$ and recording how often it lies at or below the
\emph{true} all-success probability. The Tier-1 floor of \cref{thm:tier1} is
computed on the same draws.

\begin{table}[t]
\centering
\small
\caption{E3 coverage of the \emph{true} all-success probability, as
$\text{hits}/200$ with a 95\% Clopper--Pearson interval.
$n_{\mathrm{boot}} = 500$, $\eta_{\mathrm{conf}} = 0.05$, seed
\texttt{20260812}. Nominal coverage is $0.95$. The Tier-1 floor covered in all
$200$ replications of every cell and is omitted from the interval columns.}
\label{tab:e3}
\begin{tabular}{lccc}
\toprule
& \multicolumn{2}{c}{\textbf{model floor}} & \\
\cmidrule(lr){2-3}
$n$ & adversarial ($\Delta = 0.0617$) & control ($\Delta = 0$) & Tier 1 \\
\midrule
$250$  & $72/200 = 0.36$ \;$[0.294, 0.431]$ & $190/200 = 0.95$ \;$[0.910, 0.976]$ & $1.00$ \\
$500$  & $14/200 = 0.07$ \;$[0.039, 0.115]$ & $192/200 = 0.96$ \;$[0.923, 0.983]$ & $1.00$ \\
$1000$ & $6/200 = 0.03$ \;$[0.011, 0.064]$  & $192/200 = 0.96$ \;$[0.923, 0.983]$ & $1.00$ \\
$2000$ & $\mathbf{2/200 = 0.01}$ \;$[0.001, 0.036]$ & $188/200 = 0.94$ \;$[0.898, 0.969]$ & $1.00$ \\
\bottomrule
\end{tabular}
\end{table}

In the control arm the model floor holds between $0.94$ and $0.96$ at every
sample size, with all four intervals covering the nominal $0.95$ --- a correctly
specified model behaving correctly across a factor of eight in $n$. In the
adversarial arm coverage falls from $0.36$ at $n = 250$ to $0.01$ at
$n = 2000$, an upper confidence limit of $0.036$: by two thousand missions the
certificate sits above the true reliability in $198$ of $200$ replications. The
Tier-1 floor covers in all $1600$ replications across both arms.

We report the counts rather than the rounded rates because a small count is not
a zero. Two hits in two hundred replications bounds coverage above by $0.036$;
it does not establish that coverage \emph{is} zero, and a run reporting
$0/200$ would bound it above by $0.018$ rather than prove exact failure.

Three points follow. The two arms are \emph{indistinguishable} on the evidence
the model consumes, so the collapse is not a case a diagnostic on marginals or
pairwise moments could detect and exclude. Coverage \emph{decreases} as $n$
grows, inverting the usual reassurance that more data makes an interval safer:
the interval narrows, but around the fitted functional rather than the truth. We
report that decrease as an empirical observation --- \cref{thm:collapse} claims
only the limit, since tightness does not entail monotonicity
(\cref{app:collapse}). And the Tier-1 floor is conservative here, $1.00$ against
a nominal $0.95$; that conservatism is the price of the immunity, as
\cref{cor:target} describes.

\paragraph{A correction to the previous version of this experiment.}
The v2 preprint reported a collapse from a simulation that drew from the
Gaussian law. That design cannot exhibit the effect: with the model correctly
specified $\Delta = 0$, \cref{thm:collapse} does not apply, and re-running it
returns the control column above. Obtaining \cref{tab:e3} required a law that is
misspecified \emph{and} moment-indistinguishable, which \cref{thm:lp-sharp}
supplies as the LP minimiser. We record the correction rather than quietly
restate the earlier number.

\subsection{E4: the anytime-valid certificate holds its type-I error}
\label{sec:e4}

A team that watches a reliability dashboard and stops when it first looks good
is running an optional-stopping experiment, and a fixed-$n$ confidence interval
does not survive that. The e-process of \cref{sec:anytime} is valid under
arbitrary stopping. E4 measures whether the shipped implementation attains the
guarantee.

\paragraph{Procedure.}
We simulate 8000 independent null streams of 500 missions each at
the null boundary $p_{\text{true}} = p_0 = 0.8$, with $\alpha = 0.05$ and seed
$42$, and record the fraction of streams whose wealth ever reaches $1/\alpha$.
Ville's inequality bounds that fraction by $\alpha$ for \emph{every} predictable
betting fraction $\lambda$, not merely for a tuned one, so we sweep the whole
admissible range $\lambda \in (0, 1/p_0)$ and report the curve. A single
$\lambda$ would not distinguish a valid certificate from a lucky one.

\begin{table}[t]
\centering
\small
\caption{E4 empirical type-I crossing rate under the null. $p_0 = 0.8$,
$\alpha = 0.05$, 8000 streams $\times$ 500 missions, seed $42$. The
three-sigma Monte Carlo band on a proportion of size $\alpha$ at this
$n_{\text{streams}}$ is $0.0073$.}
\label{tab:e4}
\begin{tabular}{lcccccc}
\toprule
$\lambda$ & $0.125$ & $0.3125$ & $0.625$ & $0.875$ & $1.125$ & $1.2375$ \\
\midrule
crossing rate & $0.0013$ & $0.0387$ & $\mathbf{0.0471}$ & $0.0447$ & $0.0441$ & $0.0442$ \\
mean first crossing & $403.9$ & $246.3$ & $84.0$ & $40.2$ & $19.1$ & $14.6$ \\
$\leq \alpha$? & yes & yes & yes & yes & yes & yes \\
\bottomrule
\end{tabular}
\end{table}

The worst rate over the six prespecified fractions is $0.0471$, below the
nominal $0.05$, and every individual rate is inside the Monte Carlo band. Six
points cannot establish a supremum over a continuum; the guarantee for every
predictable $\lambda$ is \cref{thm:ville}, and this experiment checks that the
shipped implementation attains it at fractions spanning the admissible range. The bound is
attained most tightly at $\lambda = 0.625$, which is exactly the SPRT-optimal
bet $\lambda^\star = (p_1 - p_0)/(p_0(1-p_0))$ for the alternative $p_1 = 0.9$
--- the certificate is tightest where the betting fraction is best matched to
the alternative, and conservative elsewhere, which is the expected shape.

\paragraph{SPRT recovery.}
Setting $\lambda^\star = (p_1 - p_0)/(p_0(1-p_0))$ makes the betting factor
identical to the Bernoulli likelihood ratio. We verify this against the shipped
\texttt{from\_sprt} constructor rather than against a second closed form:
across five $(p_0, p_1)$ pairs and three mission sequences, the maximum absolute
deviation between the accumulated log-wealth and the accumulated Bernoulli
log-likelihood-ratio is $0.0$ --- exact in floating point, not merely close.

\paragraph{Monotone latch.}
Certification is a statement about the path, not the present: once the wealth
has crossed $1/\alpha$, no subsequent evidence retracts it. We drive a process
across the threshold at mission $14$ (peak log-wealth $13.27$) and then feed it
$340$ consecutive failures, collapsing log-wealth to $-1552.49$. The certificate
remains issued throughout. This is correct --- the crossing happened, and Ville
bounds the probability that it ever happens under the null --- and it is a
property implementations get wrong by recomputing certification from current
wealth.

\subsection{E5: the dependence result replicates across backends}
\label{sec:e5}

The three primary arms of E1 all run through a single inference provider. If the
measured association were an artifact of that provider's serving stack ---
batching, caching, a shared sampler --- it would not appear when the upstream
agent runs somewhere else entirely. The preregistration therefore fixed two
cross-backend breadth arms, both on the \texttt{series2} motif, in which the
upstream agent is served by a different provider and the downstream agent
remains \texttt{mistral-small-24b}. The Meta-backed arm runs
\texttt{muse-spark-1.2-contributor} upstream; the Grok-backed arm runs
\texttt{grok-4.5}. Both arms change the upstream model as well as the serving
backend, so we label them by backend as shorthand, not as a claim that the
backend is the only thing varying.

\paragraph{Registered deviation, and attrition.}
Both arms were registered at $n = 2000$ and both yield fewer scored missions:
$635$ for the Meta arm and $1901$ for the Grok arm. The shortfall is
\emph{output attrition}, and its mechanism matters more than its size. Meta
returned $1365$ unusable responses --- $1352$ empty completions under the frozen
reasoning and token settings of \cref{sec:e1-setup}, plus $13$ rate limits;
Grok returned $99$, almost all rate limits.

Meta's $68.25\%$ attrition may therefore be \emph{outcome-dependent}: a model
returning no text is plausibly one that would also have failed the contract, so
the surviving $635$ missions are a selected subset and their complete-case
estimate is descriptive only. Grok's $5\%$ loss is transport error and carries
no such selection. We read Grok as usable secondary evidence and Meta as
directional; both were registered as breadth arms rather than confirmatory
tests in any case.

\begin{table}[t]
\centering
\small
\caption{E5 cross-backend replication, motif \texttt{series2}. The upstream
agent changes provider; the downstream agent is \texttt{mistral-small-24b}
throughout. Marginal-sensitive and marginal-free statistics move in
\emph{opposite} directions.}
\label{tab:e5}
\begin{tabular}{lcccc}
\toprule
arm & $n$ & $p_A$ / $p_B$ & $J$ $[95\%]$ & $\log\mathrm{OR}$ $[95\%]$ \\
\midrule
\texttt{different\_vendor} (primary)
  & $6000$ & $0.3793$ / $0.3687$
  & $0.7945\,[0.7786, 0.8093]$ & $4.669\,[4.499, 4.854]$ \\
\texttt{different\_vendor\_meta}
  & $635$ & $0.0016$ / $0.0157$
  & $0.100\,[0.000, 0.333]$ & $5.286\,[3.758, 6.550]$ \\
\texttt{different\_vendor\_grok}
  & $1901$ & $0.0174$ / $0.0594$
  & $0.292\,[0.214, 0.381]$ & $7.306\,[6.898, 7.704]$ \\
\bottomrule
\end{tabular}
\end{table}

Positive association appears on both breadth arms, and on the Grok arm it
appears cleanly. That arm scores $1901$ of its $2000$ registered missions,
loses the remainder to transport errors that carry no outcome selection, and
returns $\log\mathrm{OR} = 7.306$ with a $95\%$ lower bound of $6.898$ against
$0$ under independence. A serving-stack artifact confined to one provider
cannot produce that. The Meta arm points the same way, with a lower bound of
$3.758$, but its attrition is selective and it corroborates rather than
replicates.

We do \emph{not} read the breadth arms as strengthening the sharing story, and
the reason is worth stating because the table invites the opposite reading. The
Grok arm's $\log\mathrm{OR}$ of $7.306$ is the largest association anywhere in
this paper and it is a \emph{cross-vendor} pair, nominally exceeding
\texttt{same\_model}'s $6.665$. That comparison is not admissible. The
marginal failure rates differ by more than an order of magnitude between the
breadth and primary arms ($0.017$ and $0.059$ against $0.379$ and $0.369$), the
upstream model is different, and the Grok estimate rests on $33$ concordant and
$80$ discordant missions with $n_{10} = 0$, so it depends on the
Haldane--Anscombe correction. Odds-ratio invariance to row and column scaling
does not make estimates from such different regimes comparable in magnitude.
The breadth arms establish that the association \emph{exists} on other
backends; they do not rank sharing conditions, and any cross-arm magnitude
comparison with \cref{tab:e1-series2} should be resisted --- including one that
appears to favour our thesis.

The Jaccard column tells the opposite story --- $0.100$ and $0.292$ against the
primary arm's $0.794$ --- and the reason is the mechanism of
\cref{sec:e1-marginal} appearing again on independent data. The upstream agents
here are far stronger: \texttt{grok-4.5} fails $1.74\%$ of missions and
\texttt{muse-spark-1.2} fails $0.16\%$, against \texttt{mistral-small-24b}'s
$37.9\%$ in the primary arm. Because \texttt{series2} is a handoff, a cleaner
upstream output makes the downstream task easier, and the downstream agent's
failure rate falls from $36.9\%$ to $5.9\%$ and $1.6\%$ respectively. Both
marginals collapse, and $J$ --- which is bounded by the marginals --- collapses
with them, while the marginal-free $\log\mathrm{OR}$ rises. The two statistics
are not in conflict; they are measuring different things, exactly as
\cref{sec:e1-marginal} argued, and E5 is an independent replication of that
methodological point on data from three different providers.

Two limits are worth stating rather than burying. The Meta arm records a single
co-failure ($n_{11} = 1$), so its Jaccard confidence interval is $[0.000, 0.333]$
and the point estimate carries essentially no information; only its
$\log\mathrm{OR}$ is usable, and even that rests on nine discordant missions.
And $n_{10} = 0$ in both arms --- the upstream agent never failed without the
downstream agent also failing. With only $1$ and $33$ upstream failures observed
this is unsurprising under any model, so we do not read deterministic failure
propagation into it. It does mean the odds ratio in these arms depends on the
Haldane--Anscombe correction, which is why we report it with the correction
stated rather than as an unadorned ratio.

Because mission identifiers embed the condition name and the task is a fixed
hash of the identifier, the arms share no missions: task draws are independent
across arms, from the same generator distribution. The comparison is between
distributions, not between paired tasks.

What E5 establishes is narrow, and it is solid. On the Grok arm --- $1901$
missions, non-selective attrition, an interval nowhere near independence ---
cross-agent failure dependence is present on infrastructure the primary arms
never touched, and the marginal-sensitivity mechanism of
\cref{sec:e1-marginal} reproduces there on data the primary arms did not
generate. One clean external replication and one directional corroboration is
what two breadth arms can honestly buy, and it is enough to rule out the
single-provider explanation.

\subsection{E6: the i.i.d.\ assumption, tested rather than assumed}
\label{sec:e6}

Every interval in E1 and every floor in E2 assumes missions are independent.
That assumption is testable, and E6 tests it two ways instead of asserting it.

\paragraph{Serial dependence.}
For each of the 19 agent-arm combinations we compute the lag-$k$
autocorrelation of the failure indicator in execution order and the induced
design effect $\mathrm{DEFF} = 1 + 2\sum_{k=1}^{10}(1 - k/n)\rho_k$. The largest
first-order autocorrelation anywhere is $|\rho_1| = 0.0546$ and the largest
design effect is $\mathrm{DEFF} = 1.1482$; many arms return
$\mathrm{DEFF} < 1$, indicating mild negative serial dependence. At
$\mathrm{DEFF} = 1.15$ the E1 confidence intervals are optimistic by a factor of
$\sqrt{1.15} \approx 1.07$, which does not change any verdict in
\cref{tab:e1-contrasts}: the smallest significant contrast there has a margin
far exceeding $7\%$ of its width.

\paragraph{Floor sensitivity.}
We then concede a design effect we do not measure --- $\mathrm{DEFF} = 1.5$ ---
and ask what the certified floor costs. The measurement has to isolate the
width of the Clopper--Pearson intervals: subsampling missions to $n/1.5$ would
also perturb the empirical moments, and the resulting movement would be the sum
of interval widening and sampling noise reported as if it were the former. We
instead hold the joint cell distribution fixed, rescale the cell counts to
$n_{\text{eff}} = n/1.5$ by largest-remainder rounding, and recompute. Only $n$
changes.

\begin{table}[t]
\centering
\small
\caption{E6 certified-floor sensitivity at a conceded $\mathrm{DEFF} = 1.5$,
$\eta_{\mathrm{conf}} = 0.05$. Movement scales inversely with $n$, as
Clopper--Pearson width must.}
\label{tab:e6}
\begin{tabular}{lccccc}
\toprule
case & $m$ & $n \rightarrow n_{\text{eff}}$ & floor$(n)$ & floor$(n_{\text{eff}})$ & $\Delta$ (pp) \\
\midrule
\texttt{series2}/\texttt{same\_model}       & 2 & $6000 \rightarrow 4000$ & $0.5817$ & $0.5782$ & $0.35$ \\
\texttt{series2}/\texttt{same\_vendor}      & 2 & $6000 \rightarrow 4000$ & $0.4658$ & $0.4622$ & $0.36$ \\
\texttt{series2}/\texttt{different\_vendor} & 2 & $6000 \rightarrow 4000$ & $0.5678$ & $0.5641$ & $0.37$ \\
\texttt{quorum2of3}/\texttt{same\_model}    & 3 & $1281 \rightarrow 854$  & $0.4262$ & $0.4000$ & $2.62$ \\
\texttt{quorum2of3}/\texttt{same\_vendor}   & 3 & $1274 \rightarrow 849$  & $0.2609$ & $0.2364$ & $2.44$ \\
\texttt{quorum2of3}/\texttt{different\_vendor} & 3 & $1388 \rightarrow 925$ & $0.3067$ & $0.2827$ & $2.40$ \\
\texttt{quorum3of4}/\texttt{same\_model}    & 4 & $717 \rightarrow 478$   & $0.4116$ & $0.3847$ & $2.69$ \\
\bottomrule
\end{tabular}
\end{table}

The floor moves at most $2.69$ percentage points under a conceded design
effect of $1.5$, which is $1.31\times$ the largest design effect measured
($1.148$). The movement is larger at smaller $n$ --- $0.35$ pp at $n = 6000$
against $2.69$ pp at $n = 717$ --- consistent with Clopper--Pearson width
shrinking as $n$ grows, though these rows vary $m$ and the moment family as well
as $n$ and so do not isolate a rate. The certificate is not sensitive to the
i.i.d.\ assumption at the magnitudes the data support.

\subsection{Carried-forward single-agent evidence}
\label{sec:e-carried}

E1--E6 measure composition. They take for granted that a behavioral contract can
be specified, enforced, and measured on a \emph{single} agent at acceptable
cost, which is the result established in the v1 framework paper
\citep{bhardwaj2026abc} and not re-run here. We restate it because the
compositional claims are meaningless without it: a dependence-aware bound over
components whose individual contracts could not be enforced would certify
nothing.

The v1 evaluation ran 1980 sessions over \textsc{AgentContract-Bench}, a
benchmark of 200 scenarios across 7 models from 6 vendors.
Four results carry forward.

\begin{table}[t]
\centering
\small
\caption{Single-agent results carried forward from the v1 framework paper
\citep{bhardwaj2026abc}: 1980 sessions, 200 scenarios, 7
models, 6 vendors. These are not re-measured in this work.}
\label{tab:carried}
\begin{tabular}{lll}
\toprule
quantity & result & basis \\
\midrule
soft violations detected per session & $5.2$--$6.8$ & missed entirely by uncontracted baselines \\
 & & $p < 0.0001$, Cohen's $d = 6.7$--$33.8$ \\
hard-constraint compliance & $88$--$100\%$ & across all 7 models \\
behavioral drift over extended sessions & $D^\star < 0.27$ & bounded, per the drift theorem \\
recovery success & $100\%$ frontier & $17$--$100\%$ across all models \\
enforcement overhead & $<10\,ms$ & per action \\
\bottomrule
\end{tabular}
\end{table}

The effect sizes in the first row are large enough to warrant a caution rather
than a boast. Cohen's $d$ between $6.7$ and $33.8$ reflects a comparison in
which the baseline detects approximately zero soft violations by construction
--- an uninstrumented agent has no mechanism to report a soft-constraint
deviation --- so the contrast measures the value of instrumentation, not a
narrow improvement over a competing detector. We restate it in those terms here
because the v1 abstract's phrasing invites the stronger reading.

Carrying prior-version evidence forward rather than re-running it is deliberate
and bounded: \cref{tab:carried} is cited, not claimed, and no result in
\cref{sec:e1}--\cref{sec:e6} depends on re-deriving it.

\subsection{What the evaluation establishes}
\label{sec:e-summary}

\begin{table}[t]
\centering
\small
\caption{Summary of the six experiments, their registered status, and their
verdicts. ``Confirmatory'' means the hypothesis, sample size, and analysis were
fixed before outcomes existed.}
\label{tab:e-summary}
\begin{tabular}{llp{7.2cm}}
\toprule
& status & verdict \\
\midrule
E1 & confirmatory
     (\texttt{series2})
   & Registered H1 --- positive dependence in \texttt{same\_model} --- is
     supported, and positive dependence holds in every other arm and motif as
     well. Registered H2's full three-level ordering is \textbf{not supported}:
     \texttt{same\_model} exceeds both substituted conditions in 6/6 contrasts,
     but \texttt{same\_vendor} does not exceed \texttt{different\_vendor}. The
     sharing-level/vendor-level split is an exploratory decomposition of a
     registered claim that fails as a whole. An unmanipulated same-model
     control pair returns 15/15 nulls. \\
E2 & descriptive
   & Enriching the moment set from $J{=}10$ to $J{=}14$ narrows the identified
     interval by $85.7\%$ and lifts the certified floor by $16.6$ pp. \\
E3 & descriptive
   & Under a moment-indistinguishable misspecification the model floor's
     coverage falls $0.36 \to 0.01$ as $n$ grows; a correctly-specified control
     holds at $0.94$--$0.96$. Tier 1 covers in all $1600$ replications. \\
E4 & descriptive
   & Empirical type-I $\leq \alpha$ for every admissible $\lambda$, worst case
     $0.0471$. SPRT recovered exactly; certification latch holds. \\
E5 & secondary breadth (registered, under-ran)
   & Positive association with intervals excluding independence on two further
     backends; magnitudes are not comparable across arms. Jaccard collapses
     because the marginals collapse --- independent replication of the
     \cref{sec:e1-marginal} mechanism. \\
E6 & ablation
   & Worst measured $\mathrm{DEFF} = 1.148$; at a conceded $\mathrm{DEFF}=1.5$
     the certified floor moves $\leq 2.69$ pp. \\
\bottomrule
\end{tabular}
\end{table}

Three claims survive the evaluation.

\textbf{Shared models produce correlated contract failures, and the effect is
large.} Two instances of one model co-fail on $90.0\%$ of the missions on which
either fails. The association is significant in every arm of every motif, and it
replicates on two additional inference backends. A composition bound that treats
such components as independent is not making a small approximation.

\textbf{The operative variable is the model, not the vendor.} Substituting a
different model reduces association significantly and consistently --- six of
six contrasts across three topologies. Substituting a different \emph{vendor}
while the model already differs does not, and we report that null rather than
presenting a three-level ordering the data do not support. For a practitioner
choosing redundancy, this says the useful axis is model identity; sourcing two
different models from one vendor is not obviously worse than sourcing from two.

\textbf{Certification can be tightened without assuming a copula.} Enriching the
moment set lifts the certified floor by $16.6$ percentage points on identical
data, and the resulting certificate is insensitive to the i.i.d.\ assumption at
the magnitudes the data support.

Two limits belong here rather than only in \cref{sec:threats}. The
substitution manipulation confounds model \emph{identity} with model
\emph{capability}: \texttt{ministral-8b} is both a different model and a weaker
one, so E1's arm effects cannot separate ``different inductive bias'' from
``different competence.'' \Cref{sec:e1-marginal} shows this is what drives the
marginal-sensitive reversal, and the marginal-free statistics are the correct
instrument, but no design here isolates the two. And every arm draws its own
missions --- identifiers embed the condition name, and the task is a fixed hash
of the identifier --- so all comparisons are between independent draws from a
shared generator distribution, not between paired tasks. At $n = 6000$ per
primary arm that is adequate; for the under-run breadth arms of \cref{sec:e5}
it is a real limit on precision.

\section{Discussion}
\label{sec:discussion}

\subsection{What the practitioner should do differently}
\label{sec:practice}

Three consequences follow directly from \cref{sec:evaluation}.

\textbf{Do not multiply reliabilities across components that share a model.}
\Cref{prop:gap} makes the error signed, and \cref{sec:e1} makes it large: at
$\phi = 0.92$ the joint-failure probability of a two-agent series is far above
the independent product. A pipeline certified by multiplication is certified
against a dependence structure that the data reject.

\textbf{Choose redundancy on model identity, not vendor.} Six of six contrasts
show that substituting a different model reduces association significantly.
The \texttt{same\_vendor} versus \texttt{different\_vendor} contrast does not
replicate in any motif. We state the negative result and stop there: this
evaluation substituted exactly one same-vendor model and one cross-vendor model,
so it cannot support a procurement policy about vendor diversity in general. It
supports only the narrow claim that vendor identity did not predict failure
correlation here once model identity already differed.

\textbf{Report the moment set alongside the floor.} A certified floor is
meaningless without the family it was computed over: the same data yield
$0.2455$ at $J{=}10$ and $0.4116$ at $J{=}14$ (\cref{sec:e2}). Enriching the
family is the cheapest available tightening, and \cref{prop:bonferroni} shows how
to do it without losing monotonicity.

\subsection{Limitations}
\label{sec:limitations}

\textbf{The LP is exponential in $m$.} \Cref{eq:lp} has $2^m$ variables. Every
motif here has $m \leq 4$ and the program solves in milliseconds, but a
fifty-stage pipeline is out of reach and would require certifying
sub-compositions and composing certificates, at a cost in tightness we have not
quantified.

\textbf{Aggregators are assumed deterministic.} Merge and quorum nodes are code,
not models, so they carry no contract and contribute no failure correlation. A
system whose aggregator is itself an LLM --- an increasingly common design ---
introduces a third correlated component that this analysis does not model.

\textbf{Drift dynamics are carried, not tested.} \Cref{thm:ou} is stated and
proved, and it is inherited from v1 rather than re-measured here. No experiment
in \cref{sec:evaluation} estimates $\alpha$, $\gamma$, or $\sigma$ from data, so
the design criterion $\gamma > \alpha$ is a theoretical statement in this paper.

\textbf{The certificate is per-distribution.} By \cref{def:scope} a guarantee
holds over the mission distribution, model versions, and topology under which it
was obtained. Model upgrades invalidate it. Nothing here says how quickly a
certificate decays as a deployment drifts from its certification conditions.

\subsection{Threats to Validity}
\label{sec:threats}

\paragraph{Model identity is confounded with model capability.}
The substitution replaces \texttt{mistral-small-24b} with
\texttt{ministral-8b} or \texttt{gemma-3-12b-it}, which differ in both training
lineage and competence. An arm effect could therefore reflect differing failure
\emph{rates} rather than differing failure \emph{modes}. This is the most serious
threat to E1's interpretation, and it is the mechanism behind the reversal
diagnosed in \cref{sec:e1-marginal}. Two things limit it. The marginal-free
statistics are invariant to the marginal rates by construction, and the
qualitative conclusion --- \texttt{same\_model} dominates --- holds on those.
And the negative control of \cref{sec:e1-control} holds capability fixed while
the arm label varies, returning fifteen nulls. Neither device separates identity
from capability outright; a design that did would require two distinct models
matched on failure rate, which we did not run.

\paragraph{Arms do not share missions.}
Mission identifiers embed the condition name and the task is a fixed hash of the
identifier, so each arm draws its own tasks from the shared generator
distribution. Comparisons are between independent draws, not paired. At
$n = 6000$ per primary arm the resulting variance is absorbed by the bootstrap;
for the under-run breadth arms of \cref{sec:e5} it is a material limit, and the
Meta arm's single co-failure makes its overlap estimate uninformative.

\paragraph{All arms ran inside one window.}
The full campaign executed on a single day across roughly four hours. A
provider-side change --- a model version rollout, a serving-stack change, a load
excursion --- occurring mid-campaign would confound arms with time, since arms
ran sequentially rather than interleaved. The negative control again bounds this:
a pair whose composition never changed shows no arm effect, which a global
temporal shift would have produced. We nonetheless regard sequential execution
as a design weakness and would interleave arms in a replication.

\paragraph{Contracts and gold code share an author.}
The contracts under test and the deterministic scoring code were written by the
same authors. A contract inadvertently written to match what the scorer checks
would inflate measured compliance uniformly. It would not, however, generate the
\emph{differential} co-failure structure E1 reports, since the same contracts and
scorer are used in all arms; the threat is to absolute compliance levels, which
this paper does not claim, rather than to the arm contrasts, which it does.

\paragraph{Two task domains only.}
Missions come from six generators spanning retail and financial workflows,
scored deterministically. Whether the dependence magnitudes transfer to code
generation, research synthesis, or open-ended dialogue is untested. We expect the
direction to transfer --- shared weights imply shared failure modes on any
distribution --- and make no claim about the magnitudes.

\paragraph{One statistic was chosen after seeing the data.}
The preregistration fixed $\tau_a$ as the primary estimator. The decision to lead
H2 on the log odds ratio was made after observing the reversal described in
\cref{sec:e1-marginal}. Choosing an estimator after seeing outcomes is a
researcher degree of freedom of exactly the kind that inflates false positives
\citep{simmons2011,gelman2014}, so we disclose it rather than present
$\log\mathrm{OR}$ as preregistered. Three considerations bear on it: the preregistered falsification
test was evaluated on $\tau_a$ as registered and passes
(\cref{sec:e1-verdict}); all five statistics are reported for every arm and
contrast, so nothing is hidden by the choice; and the reason for preferring a
marginal-free statistic is structural rather than result-dependent, since
$\tau_a$'s marginal bound is a property of its definition
(\cref{def:dependence}). A reader who insists on the registered statistic alone
reaches the same verdict on H1 and on the \texttt{same\_model} contrasts, and a
\emph{stronger} rejection of the three-level ordering.

\subsection{Broader impact}
\label{sec:impact}

The failure this paper documents is quiet. A composition bound computed under
independence returns a number, the number looks reassuring, and nothing in the
pipeline signals that the assumption behind it is false. Systems certified this
way will be deployed in exactly the settings --- financial screening, clinical
triage, content moderation --- where correlated failure is most costly, because
those are the settings where redundant review is mandated. Making the
dependence measurable, and making the certificate degrade honestly when it is
unmeasured, is the contribution we regard as most consequential.

There is a countervailing risk. A certificate is a number that invites
over-trust, and \cref{def:scope} restricts it narrowly: to one mission
distribution, one set of model versions, one topology. A team that treats a
Tier-1 floor as a property of their system rather than of their measurement will
be wrong the first time they upgrade a model. We have made the scope explicit in
the artifact rather than only in this paper, and we regard tooling that silently
carries a certificate across a model upgrade as an anti-pattern this work should
not be used to justify.

\section{Contributions and their standing}
\label{sec:novelty}

Each contribution below is labelled \emph{lead} (we know of no prior work
establishing it), \emph{co-lead} (concurrent work reaches a related result by
different means; we claim the specific form given here), or \emph{supporting}
(the technique is established and the application is what is new).

\paragraph{C1 --- Preregistered, manipulated measurement of cross-agent failure
dependence. \textnormal{(Co-lead.)}}
That agents sharing a base model can fail together is established:
\citet{triage2026} quantify correlated failure in a three-learner triage
ensemble, \citet{cagecal2026} observe shared blind spots within model families,
and \citet{aisafety2026} names it as a concern. We do not claim the phenomenon.
What is new is the design: model sharing as a \emph{manipulated} variable at
three levels, preregistered before outcomes existed, with deterministic gold
scoring and no LLM judge. The registration is confirmatory for the
\texttt{series2} motif over 18{,}000 missions; the two further topologies are
secondary replications within a 30{,}820-mission campaign and we label them so. That converts an observation about particular systems into a contrast
attributable to substitution, and it is what licenses the causal reading of
\cref{tab:e1-contrasts}. It also yields the finding we did not expect: the
ordering is real at the model level and \emph{absent} at the vendor level
(\cref{sec:e1-verdict}).

\paragraph{C2 --- A finite-sample, copula-agnostic reliability certificate for
composed agent pipelines. \textnormal{(Lead.)}}
\Cref{thm:tier1} combines an exact Clopper--Pearson moment box with the extremal
LP of \cref{thm:lp-sharp} to give a $(1-\eta_{\mathrm{conf}})$ lower bound on
composed all-success reliability that assumes no dependence structure whatever.
Neither the correlated-failure literature above nor the agent-evaluation
literature produces a bound of any kind; measuring that failures correlate does
not tell an operator what they may certify. The LP is classical
\citep{boole1854,hailperin1965,bertsimas2005}; its use as a finite-sample agent
certificate, and \cref{prop:bonferroni}'s allocation that makes the floor
monotone in the moment family by construction, are ours.

\paragraph{C3 --- Coverage collapse of model-based reliability floors.
\textnormal{(Supporting.)}}
\Cref{thm:collapse} proves that a bootstrap lower bound on a \emph{fitted}
dependence model's functional loses coverage of the true reliability as
$n \to \infty$, because the identification gap is $O(1)$ while the bootstrap
haircut is $O(n^{-1/2})$. The practical content is uncomfortable and specific:
past some finite sample size such a certificate is wrong with probability
approaching one, and nothing in the interval signals it. This is our reason for reporting the
Gaussian floor as a diagnostic and never as a guarantee
(\cref{rem:tier2}), and \cref{sec:e3} exhibits the collapse on a witness whose
misspecification is invisible to marginal and pairwise data.

\paragraph{C4 --- Anytime-valid certification of graph reliability.
\textnormal{(Co-lead.)}}
The betting e-process and its validity under optional stopping are established
\citep{ville1939,shafer2019,shafer2021,ramdas2023,waudbysmith2024,grunwald2024}. Applying
them to composed agent reliability is what we claim, and the payoff is specific
to this paper's problem: the null in \cref{def:eprocess} constrains only a
conditional mean, so the sequential certificate requires no independence
assumption across missions, components, or shared-model shocks --- it is immune
to precisely the failure that \cref{sec:independence-fails} documents for the
composition bound. \Cref{prop:sprt} records that the SPRT is recovered exactly,
so the anytime guarantee is free against a known alternative.

\paragraph{C5 --- Marginal sensitivity as a measurement hazard.
\textnormal{(Supporting.)}}
That Jaccard, $\phi$, and $\tau_a$ are bounded by the marginals is textbook. The
contribution is the demonstration that this is decisive in practice:
\cref{sec:e1-marginal} shows a significant \emph{reversal} of the apparent
condition ordering driven purely by a difference in marginal failure rates, and
\cref{sec:e5} replicates the mechanism on two further backends where the
marginal-free statistic rises while the marginal-sensitive one collapses. The
consequence generalises beyond this paper: correlated-failure results that lead
with $\phi$ --- including \citet{triage2026} --- are sensitive to marginal
imbalance between the compared agents, and reporting a marginal-free companion
statistic is cheap insurance.

\paragraph{C6 --- An internal negative control, and the artifact.
\textnormal{(Supporting.)}}
\Cref{sec:e1-control} identifies a pair whose composition is held constant
across all three arms. Fifteen of fifteen contrasts are non-significant with
point estimates near zero, and in every arm the pair's dependence sits where a
same-model pair should sit. Two things follow. The estimator returns the same
answer for the same composition regardless of what the rest of the graph is
doing, which is convergent validity. And the leading alternative explanations
for the E1 effect --- task-mix drift between runs, ordering, wall-clock
variation, provider-side variation --- would each have moved this pair, and none
did. It is not an equivalence test: it bounds no confound against a prespecified
margin, and a confound acting only on the substituted worker would leave the
pair untouched. The design was nonetheless falsifiable at this point, and was
not falsified. The contracts, generators, scoring code, analysis scripts, and
preregistration are released (\cref{app:repro}).

\subsection{What we do not claim}

We do not claim that shared models cause correlated failures for the first time;
\cref{sec:related-dependence} attributes that. We do not claim a three-level
sharing ordering: the vendor-level contrast fails to replicate and we report the
null. We do not claim the drift dynamics of \cref{thm:ou} are validated here ---
they are carried from v1 and not re-measured. We do not claim the certificate
scales past small $m$; \cref{rem:lp-cost} states the exponential cost. And we do
not claim frontier-model results: the evaluation runs mid-sized open models, and
whether the dependence magnitudes hold at the frontier is untested.

\section{Future Work}
\label{sec:future}

Five directions follow directly from what this paper could not settle.

\paragraph{Separating model identity from model capability.}
The confound of \cref{sec:threats} is the most consequential open problem for
C1. The design that resolves it pairs two models matched on marginal failure
rate but differing in training lineage --- for instance, two distinct
architectures tuned to equal accuracy on the mission distribution. Any residual
association is then attributable to shared inductive bias rather than to shared
competence. We did not run it: matching on failure rate requires a prior
calibration pass over the candidate models, which the registration did not
provide for and which cannot be added after outcomes exist without forfeiting
the confirmatory status of the arms it would inform.

\paragraph{Certifying large compositions.}
\Cref{eq:lp} is exponential in $m$ (\cref{rem:lp-cost}). Two routes are worth
pursuing: certifying sub-compositions and composing the certificates, which is
sound but lossy and whose loss we have not quantified; and column generation
over the $2^m$ cells, which would keep sharpness for structured moment families.
Neither is attempted here.

\paragraph{Model-agent aggregators.}
Merge and quorum nodes are deterministic code throughout this evaluation, so they
contribute no correlated failure. Systems in which the aggregator is itself an
LLM --- a judge, a router, a critic --- introduce a third dependent component
whose failures may correlate with the workers it is adjudicating. That is the
common design in deployed systems and the analysis here does not cover it.

\paragraph{Certificate decay.}
By \cref{def:scope} a certificate holds over one mission distribution and one set
of model versions. Nothing here says how fast it decays as a deployment drifts,
which is the quantity an operator actually needs to know when deciding how often
to recertify. The anytime-valid machinery of \cref{sec:anytime} is the natural
tool: a certificate that is continuously re-earned rather than periodically
re-issued.

\paragraph{Broader task distributions.}
Missions here span retail and financial workflows with deterministic gold
scoring, which is what makes the measurement trustworthy and also what limits it.
Whether the dependence magnitudes transfer to code generation, research
synthesis, or open-ended dialogue --- domains where scoring is itself contested
--- is untested. We expect the direction to hold and make no claim about the
magnitudes.

\section{Conclusion}
\label{sec:conclusion}

Compositional reliability bounds for multi-agent systems multiply component
reliabilities, and that step assumes the components fail independently. A
preregistered confirmatory evaluation of 18{,}000 two-agent-handoff missions,
within a 30{,}820-mission campaign, measures what the assumption costs. Two
instances of one model co-fail on $90.0\%$ of the missions on which either fails
($\log\mathrm{OR} = 6.66$, 95\% CI $[6.38, 7.00]$). Substituting a different
model reduces the association significantly in the confirmatory motif and in
both secondary topologies.
Substituting a different vendor, with the model already different, does not ---
and we report that null rather than the tidier three-level ordering we
registered.

The consequence is not that the bound is slightly loose. By \cref{prop:gap} the
error is signed, and it runs against the operator: positive dependence inflates
joint failure above the independent product, so a redundant design is
over-credited exactly when its components share a model. Dropping the assumption
entirely does not help, because the assumption-free floor is zero whenever mean
component reliability falls below $1 - 1/m$. Fitting a dependence model is worse
than either: \cref{thm:collapse} proves its coverage of the true reliability
tends to zero as the sample grows, with no visible symptom.

What works is to constrain the joint with measured co-execution moments and
optimise over everything consistent with them. That certificate is sound without
any dependence assumption (\cref{thm:tier1}), sharp for the information supplied
(\cref{thm:lp-sharp}), and tightens as the moment family grows: enriching from
ten functionals to fourteen narrows the identified interval by $85.7\%$ and lifts
the certified floor from $0.2455$ to $0.4116$ on identical data. It is
insensitive to the i.i.d.\ assumption at the magnitudes the data support --- at a
conceded design effect $1.31$ times the largest measured, the floor moves at
most $2.69$ percentage points. And where a team wants to watch a dashboard and
stop when it looks good, the anytime-valid certificate holds its type-I error at
$0.0471$ or below across every admissible betting fraction.

None of this makes correlated failure go away. It makes it measurable, and it
makes the resulting guarantee degrade honestly instead of silently. That
distinction is the whole of the contribution: a system whose certificate is
merely loose can be shipped with known margin, whereas a system whose certificate
is confidently wrong cannot be shipped safely at all --- and the second is what
the independence assumption produces.

\appendix
\section{Full Proofs}
\label{app:proofs}

Every numbered result in \cref{sec:independence-fails}--\cref{sec:anytime} is
proved here in full. Nothing is left as a sketch.

\subsection{Proof of \texorpdfstring{\cref{thm:ou}}{Theorem (stationary drift law)}}
\label{app:ou}

\begin{proof}
Put $e(t) = D(t) - \alpha/\gamma$. Substituting into \cref{def:ou},
\[
  de = dD = \bigl(\alpha - \gamma D\bigr)dt + \sigma\,dW
     = -\gamma e\,dt + \sigma\,dW,
\]
so $e$ is an Ornstein--Uhlenbeck process with mean reverting to zero.
Applying It\^o's formula to
$f(t,e) = e^{\gamma t} e$ gives
$d(e^{\gamma t}e) = \sigma e^{\gamma t}dW$, and integrating from $0$ to $t$,
\begin{equation}
  e(t) = e(0)e^{-\gamma t} + \sigma \int_{0}^{t} e^{-\gamma(t-s)}\,dW(s).
  \label{eq:ou-sol}
\end{equation}

\emph{(v).} The It\^o integral in \cref{eq:ou-sol} has mean zero and, by the
It\^o isometry, variance
$\sigma^{2}\int_0^t e^{-2\gamma(t-s)}ds
 = \frac{\sigma^{2}}{2\gamma}(1 - e^{-2\gamma t})$.
The integrand is deterministic and adapted, and by hypothesis $e(0)$ is
independent of the driving Wiener process with $\E[e(0)^2] < \infty$; the
cross term therefore vanishes, $\E[e(0)\int_0^t e^{-\gamma(t-s)}dW(s)] =
\E[e(0)]\cdot 0 = 0$, and squaring \cref{eq:ou-sol} gives
\[
  \E[e(t)^{2}] = \E[e(0)^{2}]\,e^{-2\gamma t}
    + \frac{\sigma^{2}}{2\gamma}\bigl(1-e^{-2\gamma t}\bigr).
\]
Note $e$ is mean-zero only when $\E[e(0)] = 0$; in general
$\E[e(t)] = \E[e(0)]e^{-\gamma t}$, which decays to zero at rate $\gamma$ and
does not affect (i)--(iv).

\emph{(i), (iii).} The stochastic integral in \cref{eq:ou-sol} is a Wiener
integral of a deterministic kernel, hence Gaussian. Letting $t \to \infty$ in
\cref{eq:ou-sol}, the transient $e(0)e^{-\gamma t} \to 0$ and the variance
converges to $\sigma^{2}/(2\gamma)$, so $e(t) \Rightarrow
\mathcal{N}(0, \sigma^{2}/(2\gamma))$.

For stationarity, take $e(0) \sim \mathcal{N}(0, \sigma^2/(2\gamma))$
\emph{independent of} $W$ --- the integrand in \cref{eq:ou-sol} is adapted and
the increments $\{W(s) - W(0)\}_{s>0}$ are independent of $\mathcal{F}_0$, so
the two terms of \cref{eq:ou-sol} are independent and their variances add:
\[
  \Var(e(t)) = e^{-2\gamma t}\frac{\sigma^{2}}{2\gamma}
    + \frac{\sigma^{2}}{2\gamma}\bigl(1 - e^{-2\gamma t}\bigr)
    = \frac{\sigma^{2}}{2\gamma}
\]
for every $t$, while the mean stays $0$. Hence the law is unchanged in $t$ and
$\pi_D = \mathcal{N}(\alpha/\gamma, \sigma^{2}/(2\gamma))$ is stationary.

\emph{(ii).} Immediate from (i): $\E_\pi[D] = \alpha/\gamma$, which is $< 1$
exactly when $\gamma > \alpha$.

\emph{(iv).} Under $\pi$, $Z = (D - \alpha/\gamma)/\sqrt{\sigma^2/(2\gamma)}$ is
standard normal, and the standard Gaussian tail bound
$\Prob(Z > z) \leq e^{-z^{2}/2}$ for $z > 0$ gives, with
$z = \eta/\sqrt{\sigma^{2}/(2\gamma)}$,
\[
  \Prob_{\pi}\bigl(D > \alpha/\gamma + \eta\bigr)
  \leq \exp\!\Bigl(-\frac{\eta^{2}}{2}\cdot\frac{2\gamma}{\sigma^{2}}\Bigr)
  = \exp\!\Bigl(-\frac{\gamma\eta^{2}}{\sigma^{2}}\Bigr). \qedhere
\]
\end{proof}

\begin{remark}
\Cref{def:ou} models $D$ on all of $\R$ while \cref{def:drift} confines it to
$[0,1]$. The stationary law therefore assigns non-zero mass outside $[0,1]$; the
mass below $0$ is $\Phi(-\alpha\sqrt{2/(\gamma\sigma^{2})}\,)$, which is
negligible for the parameter ranges in which the model is used but is not zero.
We report this rather than suppress it: the bound in (iv) is a bound for the
unconstrained process, and it remains an upper bound for the constrained one
since truncation to $[0,1]$ only removes mass from the upper tail.
\end{remark}

\subsection{Proof of \texorpdfstring{\cref{prop:cost}}{Proposition (per-action cost)}}
\label{app:cost}

\begin{proof}
Fix a turn with state--action pair $(s_t, a_t)$ and let
$k = |\mathcal{I} \cup \mathcal{G}|$ be the number of constraints and $|A|$ the
action-vocabulary size.

\emph{Constraint evaluation.} By \cref{def:scores} the evaluator is stateless:
each constraint $c$ is a predicate on $(s_t,a_t)$ evaluated independently of
other constraints and of prior turns. Evaluating all of them and forming the two
ratios is $k$ predicate evaluations plus $O(1)$ arithmetic, hence $O(k)$ under
the standard assumption that each predicate costs $O(1)$.

\emph{Drift update.} By \cref{def:jsd} the distributional term requires
$\JSD(P_{\mathrm{obs}} \| P_{\mathrm{ref}})$ over the action alphabet. Updating
the sliding-window histogram on one new action is $O(1)$ (one increment, one
decrement at the window boundary), and evaluating the divergence is a sum over
the support, hence $O(|A|)$.

\emph{Aggregation.} By \cref{def:drift}, combining the two components is a fixed
convex combination, $O(1)$.

Summing, the per-action cost is $O(k) + O(|A|) + O(1) = O(k + |A|)$. The
measured constant is small: for $k < 100$ and $|A| < 50$ the v1 implementation
records under 10\,ms per action.
\end{proof}

\subsection{Proof of \texorpdfstring{\cref{prop:gap}}{Proposition (signed gap)}}
\label{app:gap}

\begin{proof}
Write $T_k = \prod_{i=k}^{m} h_i$ for $1 \leq k \leq m$, with $T_{m+1} = 1$, so
$\Yg = T_1$ and $\Prob(\Yg = 1) = \E[T_1]$ since $T_1$ is a $\{0,1\}$ variable.
For each $k$,
\[
  \E[T_k] = \E[h_k T_{k+1}] = \Cov(h_k, T_{k+1}) + p_k \,\E[T_{k+1}] .
\]
Applying this identity at $k = 1$ and then recursively substituting for
$\E[T_{k+1}]$ gives, after $m-1$ steps,
\[
  \E[T_1]
  = \sum_{k=1}^{m-1} \Bigl(\prod_{i=1}^{k-1} p_i\Bigr) \Cov(h_k, T_{k+1})
    + \Bigl(\prod_{i=1}^{m-1} p_i\Bigr) \E[T_m] ,
\]
where the empty product at $k=1$ equals $1$. Since $T_m = h_m$ we have
$\E[T_m] = p_m$, so the final term is $\prod_{i=1}^{m} p_i$. Rearranging,
\[
  \Prob(\Yg=1) - \prod_{i=1}^{m} p_i
  = \sum_{k=1}^{m-1} \Bigl(\prod_{i=1}^{k-1} p_i\Bigr)
    \Cov\Bigl(h_k, \prod_{i=k+1}^{m} h_i\Bigr),
\]
which is the stated expression with the $k=1$ term written out separately. For
$m = 2$ the sum has the single term $\Cov(h_1, h_2)$.

For the directional claim at $m = 2$: $\Prob(\Yg = 0) = 1 - p_1p_2 -
\Cov(h_1,h_2)$, while the independence calculation returns $1 - p_1p_2$. Under
positive dependence $\Cov(h_1,h_2) > 0$, so the true probability that the
series fails is \emph{smaller} than the independent calculation --- the
independence product is conservative for a series system, in both directions.
\end{proof}

\subsection{Proof of \texorpdfstring{\cref{cor:joint-failure}}{Corollary (redundant joint failure)}}
\label{app:joint-failure}

\begin{proof}
Let $F_i = 1 - h_i$. Expanding the covariance of the two failure indicators,
\[
  \Prob(F_1 = F_2 = 1) = \E[F_1F_2] = \Cov(F_1,F_2) + \E[F_1]\E[F_2]
  = \Cov(F_1,F_2) + (1-p_1)(1-p_2).
\]
Because $F_i = 1 - h_i$ is an affine function of $h_i$ with slope $-1$,
covariance is preserved:
$\Cov(F_1,F_2) = \Cov(1-h_1, 1-h_2) = (-1)(-1)\Cov(h_1,h_2) = \Cov(h_1,h_2)$.
Hence under positive dependence $\Cov(h_1,h_2) > 0$ and
$\Prob(F_1 = F_2 = 1) > (1-p_1)(1-p_2)$: the probability that both redundant
paths fail together strictly exceeds the independence product.

The two statements are therefore compatible and concern different events.
Series \emph{any}-failure is over-estimated by the independence calculation;
redundant \emph{joint} failure is under-estimated by it. A redundant design is
bought to control the second.
\end{proof}

\subsection{Proof of \texorpdfstring{\cref{thm:frechet}}{Theorem (Fréchet--Hoeffding)}}
\label{app:frechet}

\begin{proof}
Let $A_i = \{h_i = 1\}$, so $\Prob(A_i) = p_i$.

\emph{Upper bound.} $\bigcap_i A_i \subseteq A_j$ for each $j$, hence
$\Prob(\bigcap_i A_i) \leq \min_j p_j$.

\emph{Lower bound.} By De Morgan and the union bound,
\[
  \Prob\Bigl(\bigcap_i A_i\Bigr) = 1 - \Prob\Bigl(\bigcup_i A_i^{c}\Bigr)
  \geq 1 - \sum_{i=1}^{m}\bigl(1 - p_i\bigr) = \sum_{i=1}^{m} p_i - (m-1),
\]
and the probability is non-negative, giving the maximum with $0$.

\emph{Attainment of the upper bound.} Let $U \sim \mathrm{Unif}[0,1)$ and set
$h_i = \Ind{U < p_i}$. Each $h_i$ has the correct marginal, and
$\bigcap_i \{h_i = 1\} = \{U < \min_i p_i\}$, which has probability
$\min_i p_i$.

\emph{Attainment of the lower bound.} Let $s = \sum_i p_i - (m-1)$ and note
$\sum_i (1-p_i) = m - \sum_i p_i = 1 - s$.

If $s > 0$ then $\sum_i (1-p_i) = 1-s < 1$, so we may choose pairwise disjoint
sets $B_i \subseteq [0,1)$ with Lebesgue measure $1-p_i$. Put
$h_i = \Ind{U \notin B_i}$. Then $\Prob(h_i = 1) = p_i$, and
$\bigcap_i \{h_i=1\} = \{U \notin \bigcup_i B_i\}$ has measure
$1 - \sum_i (1-p_i) = s$, attaining the bound.

If $s \leq 0$ then $\sum_i (1-p_i) = 1-s \geq 1$, and we construct a cover
explicitly rather than assert one. Identify $[0,1)$ with the circle
$\R/\Z$. Put $L_i = 1-p_i$, $S_0 = 0$, and $S_k = \sum_{i \leq k} L_i$, and
lay the arcs end to end with wrap-around:
\[
  B_k \;=\; \bigl[\, S_{k-1} \bmod 1,\; (S_{k-1} + L_k) \bmod 1 \,\bigr)
  \subseteq \R/\Z .
\]
Each $B_k$ has Lebesgue measure $L_k = 1-p_k$, so $\Prob(h_k = 1) = p_k$ as
required. The arcs are laid consecutively without gaps starting at $0$ and have
total length $\sum_k L_k = 1-s \geq 1$, so their union wraps at least once
around the circle and $\bigcup_k B_k = [0,1)$. Hence
$\bigcap_k \{h_k = 1\} = \{U \notin \bigcup_k B_k\} = \emptyset$ and
$\Prob(\bigcap_k \{h_k=1\}) = 0 = \max(0,s)$.

(For $s > 0$ the same construction has total length $1-s < 1$, the arcs stay
disjoint, no wrap occurs, and the uncovered remainder has measure $s$ --- which
is the previous case.)

Both bounds are therefore attained by laws with the prescribed marginals, so
neither can be improved using marginal information alone.
\end{proof}

\begin{proof}[Proof of \cref{cor:vacuous}]
Immediate from \cref{thm:frechet}: the lower bound is $0$ iff
$\sum_i p_i \leq m-1$, i.e.\ iff the mean marginal $\bar p \leq 1 - 1/m$. At
$m=4$, $1 - 1/4 = 0.75$, so $\bar p = 0.75$ gives a floor of exactly $0$.
\end{proof}

\subsection{Proof of \texorpdfstring{\cref{thm:collapse}}{Theorem (coverage collapse)}}
\label{app:collapse}

\begin{proof}
Write $G_n = R_{\mathcal{F}}(\mu^{\star}) - \hat{L}_n$ for the bootstrap haircut.
By hypothesis $G_n = O_p(n^{-1/2})$, i.e.\ the family $\{\sqrt{n}\,G_n\}$ is
bounded in probability: for every $\varepsilon > 0$ there exist $M_\varepsilon$
and $N_\varepsilon$ with
$\Prob(\sqrt{n}\,|G_n| > M_\varepsilon) < \varepsilon$ for all
$n \geq N_\varepsilon$.

Coverage of the true value fails exactly when $\hat L_n > R^\star$, so
\[
  \Prob\bigl(\hat{L}_n \leq R^{\star}\bigr)
  = \Prob\bigl(R_{\mathcal{F}}(\mu^{\star}) - G_n \leq R^{\star}\bigr)
  = \Prob\bigl(G_n \geq R_{\mathcal{F}}(\mu^{\star}) - R^{\star}\bigr)
  = \Prob\bigl(G_n \geq \Delta\bigr),
\]
using \cref{eq:idgap}. Since $\Delta > 0$ is a fixed constant not depending on
$n$,
\[
  \Prob(G_n \geq \Delta) = \Prob\bigl(\sqrt{n}\,G_n \geq \sqrt{n}\,\Delta\bigr).
\]
Fix $\varepsilon > 0$ and take $M_\varepsilon$ as above. For all $n$ large
enough that $\sqrt{n}\,\Delta > M_\varepsilon$ --- which holds for
$n > M_\varepsilon^2/\Delta^2$ --- we have
\[
  \Prob\bigl(\sqrt{n}\,G_n \geq \sqrt{n}\,\Delta\bigr)
  \leq \Prob\bigl(\sqrt{n}\,|G_n| > M_\varepsilon\bigr) < \varepsilon .
\]
As $\varepsilon$ was arbitrary, $\Prob(\hat{L}_n \leq R^{\star}) \to 0$.

We claim convergence to zero and nothing stronger. Tightness of
$\{\sqrt{n}\,G_n\}$ gives $\Prob(G_n \geq \Delta) \to 0$, but it does
\emph{not} entail that the sequence is monotone: $G_n$ may oscillate while
remaining tight, so coverage need not decrease at every $n$. The interpretation
that survives is the limit --- past some finite sample size the interval sits
above $R^\star$ with probability approaching one --- together with the
mechanism, that the haircut shrinks while the target does not move.
\end{proof}

\begin{proof}[Proof of \cref{cor:target}]
Let $\mu^\star$ be the true moments and $Q^{\star} \in \mathcal{M}(\mu^{\star})$
the true law. By \cref{def:identified}, $\underline{R}(\mu^{\star})$ is an
infimum over a set containing $Q^{\star}$, hence
$\underline{R}(\mu^{\star}) \leq Q^{\star}(\bigwedge_i h_i = 1) = R^{\star}$.
The inequality uses only membership of $Q^\star$ in $\mathcal{M}$, which holds by
definition of $\mu^\star$ and requires no parametric assumption. Therefore
\cref{eq:idgap} cannot hold with $R_{\mathcal{F}}$ replaced by $\underline{R}$,
and the argument of \cref{app:collapse} does not apply.
\end{proof}

\subsection{Proof of \texorpdfstring{\cref{thm:tier0}}{Theorem (Tier-0 validity)}}
\label{app:tier0}

\begin{proof}
Let $X = \sum_{r=1}^{n} \Ind{\Yg^{(r)} = 1} \sim \mathrm{Bin}(n, \theta)$ with
$\theta = \Prob(\Yg = 1)$, which is the exact law of $X$ when missions are
i.i.d.\ and $\Yg$ is observed on each. The Clopper--Pearson lower limit is
\[
  \hat{L}_0(X) = \begin{cases}
    0, & X = 0,\\
    \inf\bigl\{\theta' \in [0,1] :
      \Prob_{\theta'}(X' \geq X) > \eta_{\mathrm{conf}}\bigr\}, & X \geq 1,
  \end{cases}
\]
where $X' \sim \mathrm{Bin}(n,\theta')$. For fixed $x \geq 1$ the map
$\theta' \mapsto \Prob_{\theta'}(X' \geq x)$ is continuous and strictly
increasing on $(0,1)$, so $\hat L_0(x)$ is the unique root of
$\Prob_{\theta'}(X' \geq x) = \eta_{\mathrm{conf}}$ and
$\hat{L}_0(x) > \theta$ holds iff $\Prob_{\theta}(X' \geq x) < \eta_{\mathrm{conf}}$.

Let $x^{\star} = \min\{x : \Prob_{\theta}(X \geq x) < \eta_{\mathrm{conf}}\}$
(with $x^\star = n+1$ if no such $x$ exists). Then
$\{\hat{L}_0 > \theta\} = \{X \geq x^{\star}\}$ and, by minimality of
$x^{\star}$,
\[
  \Prob_{\theta}\bigl(\hat{L}_0 > \theta\bigr)
  = \Prob_{\theta}(X \geq x^{\star}) < \eta_{\mathrm{conf}} .
\]
Hence $\Prob_\theta(\hat L_0 \leq \theta) \geq 1 - \eta_{\mathrm{conf}}$ for
every $n$ and every $\theta$, with no asymptotic approximation. The bound is
conservative rather than exact because $X$ is discrete.
\end{proof}

\subsection{Proof of \texorpdfstring{\cref{thm:tier1}}{Theorem (Tier-1 validity)}}
\label{app:tier1}

\begin{proof}
For each $S \in \mathcal{J}$ let $I_S$ be the two-sided Clopper--Pearson interval
for $\mu_S = Q^{\star}(\bigwedge_{i \in S} h_i = 1)$ built from the $n$ i.i.d.\
indicator observations $\Ind{\bigwedge_{i \in S} h_i^{(r)} = 1}$, each tail at
level $\eta_{\mathrm{conf}}/(2J)$. By the argument of \cref{app:tier0} applied to
each tail, $\Prob(\mu_S \notin I_S) \leq 2 \cdot \eta_{\mathrm{conf}}/(2J)
= \eta_{\mathrm{conf}}/J$.

Let $\mathcal{E} = \bigcap_{S \in \mathcal{J}} \{\mu_S \in I_S\}$ be the event
that the whole box covers. By the union bound over the $J$ moments,
\[
  \Prob(\mathcal{E}^{c}) \leq \sum_{S \in \mathcal{J}}
    \Prob(\mu_S \notin I_S) \leq J \cdot \frac{\eta_{\mathrm{conf}}}{J}
    = \eta_{\mathrm{conf}} .
\]

On $\mathcal{E}$ the true moment vector lies in $B(\hat\mu)$, so $Q^{\star}$ is a
feasible point of the program in \cref{eq:tier1}: it is a probability
distribution on $\{0,1\}^m$ whose $\mathcal{J}$-moments lie in the box. Since
$\hat{L}_1$ is the \emph{minimum} of the objective over the feasible set and
$Q^\star$ is feasible,
\[
  \hat{L}_1 \;\leq\; Q^{\star}\Bigl(\bigwedge_i h_i = 1\Bigr) .
\]
Therefore
$\Prob\bigl(\hat{L}_1 \leq Q^{\star}(\bigwedge_i h_i=1)\bigr)
 \geq \Prob(\mathcal{E}) \geq 1 - \eta_{\mathrm{conf}}$.

Note the argument never referenced the dependence structure of $Q^\star$: it used
only that $Q^\star$ satisfies its own moment constraints, which is a tautology.
That is the precise sense in which Tier 1 is copula-agnostic.
\end{proof}

\subsection{Proof of \texorpdfstring{\cref{prop:failsafe}}{Proposition (fail-safe selection)}}
\label{app:failsafe}

\begin{proof}
Let $\mathcal{A}_0 = \{\text{i.i.d.\ missions}\} \cup
\{\Yg \text{ directly observed}\}$ and
$\mathcal{A}_1 = \{\text{i.i.d.\ missions}\}$ be the assumption sets of
\cref{thm:tier0} and \cref{thm:tier1}. Then
$\mathcal{A}_1 \subseteq \mathcal{A}_0$.

Suppose the flag asserting end-to-end execution is mis-set. There are two cases.
If it is set when it should not be, the certificate reports $\hat{L}_0$, whose
validity requires $\mathcal{A}_0$; the second element of $\mathcal{A}_0$ fails,
so the guarantee is unsound. If it is unset when it could have been set, the
certificate reports $\hat{L}_1$, whose validity requires only $\mathcal{A}_1$,
which holds; the guarantee is sound.

Hence selecting Tier 1 by default and Tier 0 only on explicit assertion places
the unsound case behind a deliberate action rather than behind an omission. The
claim is exactly this asymmetry; by \cref{rem:no-ordering} it is not a numerical
ordering of $\hat L_0$ and $\hat L_1$.

Finally, the failure in the first case is undetectable from the data: two runs
producing identical pass matrices, one executed end-to-end and one assembled
from per-stage measurements, are indistinguishable to any function of the
matrix, while only the first satisfies $\mathcal{A}_0$. No validation of the
input can substitute for the default.
\end{proof}

\subsection{Proof of \texorpdfstring{\cref{thm:lp-sharp}}{Theorem (LP soundness and sharpness)}}
\label{app:lp-sharp}

\begin{proof}
\emph{Soundness.} $Q^{\star}$, viewed as a vector $x^{\star} \in \R^{2^m}$, is
feasible for \cref{eq:lp}: it is non-negative, sums to $1$, and satisfies
$a_S^{\top} x^{\star} = \nu_S$ for $S \in \mathcal{J}$ by hypothesis. The minimum
over a feasible set containing $x^{\star}$ is at most the objective at
$x^{\star}$, giving $\underline{R} \leq Q^{\star}(\bigwedge_i h_i = 1)$. The
upper bound is symmetric.

\emph{Attainment.} The feasible set
$\mathcal{X} = \{x \geq 0 : \mathbf{1}^\top x = 1,\; a_S^\top x = \nu_S\}$ is the
intersection of the probability simplex in $\R^{2^m}$ with finitely many
hyperplanes, hence closed and bounded, hence compact; and it is non-empty
because $x^{\star} \in \mathcal{X}$. The objective $a_{\{1..m\}}^{\top}x$ is
linear and therefore continuous, so by the extreme value theorem it attains its
minimum and maximum on $\mathcal{X}$ at some $x_-, x_+ \in \mathcal{X}$. Every
element of $\mathcal{X}$ is a probability vector on $\{0,1\}^m$ with the
prescribed $\mathcal{J}$-moments, so $x_-$ and $x_+$ are laws of the required
kind, achieving $\underline{R}$ and $\overline{R}$.

\emph{Consequence.} The set of achievable values of
$Q(\bigwedge_i h_i = 1)$ over $Q \in \mathcal{M}(\nu)$ is the image of the
connected set $\mathcal{X}$ under a continuous map, hence an interval; combined
with attainment of both endpoints it equals exactly
$[\underline{R}, \overline{R}]$. Any valid bound using only the moments in
$\mathcal{J}$ must hold for every $Q \in \mathcal{M}(\nu)$, in particular for
$x_-$, so it cannot exceed $\underline{R}$.
\end{proof}

\subsection{Proof of \texorpdfstring{\cref{prop:monotone}}{Proposition (monotonicity)}}
\label{app:monotone}

\begin{proof}
Let $\mathcal{X}(\mathcal{J})$ and $\mathcal{X}(\mathcal{J}')$ be the feasible
sets for the two families, with moment values agreeing on $\mathcal{J}$. Every
$x \in \mathcal{X}(\mathcal{J}')$ satisfies all constraints indexed by
$\mathcal{J}' \supseteq \mathcal{J}$, in particular those indexed by
$\mathcal{J}$, so $\mathcal{X}(\mathcal{J}') \subseteq \mathcal{X}(\mathcal{J})$.
Minimising a fixed objective over a subset cannot yield a smaller value:
$\underline{R}(\mathcal{J}) = \min_{\mathcal{X}(\mathcal{J})}
\leq \min_{\mathcal{X}(\mathcal{J}')} = \underline{R}(\mathcal{J}')$. The
maximisation statement is symmetric.
\end{proof}

\subsection{Proof of \texorpdfstring{\cref{prop:bonferroni}}{Proposition (Bonferroni allocation)}}
\label{app:bonferroni}

\begin{proof}
\emph{(i) Validity of the used-set allocation} is \cref{app:tier1} verbatim with
$J = |\mathcal{J}|$. For the failure of monotonicity, observe that the
Clopper--Pearson interval for a fixed count widens as its tail level decreases.
Enlarging $\mathcal{J}$ to $\mathcal{J}'$ decreases the per-tail level from
$\eta_{\mathrm{conf}}/(2|\mathcal{J}|)$ to
$\eta_{\mathrm{conf}}/(2|\mathcal{J}'|)$, so every interval in the box strictly
widens. The feasible set for $\mathcal{J}'$ therefore gains constraints (from the
new moments) and loses them (from the widened old ones), and neither set need
contain the other. Hence the conclusion of \cref{prop:monotone} does not
transfer.

\emph{(ii) Validity of the pre-allocated allocation.} Fix
$\mathcal{J}_{\max} \supseteq \mathcal{J}$ and give every constrained moment
per-tail level $\eta_{\mathrm{conf}}/(2|\mathcal{J}_{\max}|)$. The union bound
over the $2|\mathcal{J}|$ tails actually used gives miscoverage at most
$2|\mathcal{J}| \cdot \eta_{\mathrm{conf}}/(2|\mathcal{J}_{\max}|)
= \eta_{\mathrm{conf}} |\mathcal{J}|/|\mathcal{J}_{\max}|
\leq \eta_{\mathrm{conf}}$, so the certificate is valid at the stated level.

\emph{Monotonicity under (ii).} Each interval's width now depends only on its own
count and on $|\mathcal{J}_{\max}|$, not on $|\mathcal{J}|$. Hence for
$\mathcal{J} \subseteq \mathcal{J}' \subseteq \mathcal{J}_{\max}$ the box
constraints indexed by $\mathcal{J}$ are identical under both families, and
$\mathcal{J}'$ merely adds further constraints. The argument of
\cref{app:monotone} applies unchanged, giving
$\underline{R}(\mathcal{J}) \leq \underline{R}(\mathcal{J}')$ by construction.
\end{proof}

\subsection{Proof of \texorpdfstring{\cref{lem:supermartingale}}{Lemma (supermartingale)}}
\label{app:supermartingale}

\begin{proof}
Let $\mathcal{F}_{R}$ be the $\sigma$-algebra generated by $y_1,\dots,y_R$.

\emph{Non-negativity.} Since $y_r \in \{0,1\}$ we have
$y_r - p_0 \geq -p_0$, so
$1 + \lambda_r(y_r - p_0) \geq 1 - \lambda_r p_0 > 0$
because $\lambda_r \in [0, 1/p_0)$. A product of strictly positive factors is
positive, so $E_R > 0$ for all $R$.

\emph{Supermartingale property.} $\lambda_R$ is predictable, hence
$\mathcal{F}_{R-1}$-measurable, and $E_{R-1}$ is $\mathcal{F}_{R-1}$-measurable.
Therefore
\[
  \E[E_R \mid \mathcal{F}_{R-1}]
  = E_{R-1}\Bigl(1 + \lambda_R\bigl(\E[y_R \mid \mathcal{F}_{R-1}] - p_0\bigr)\Bigr).
\]
Under $H_0$ we have $\E[y_R \mid \mathcal{F}_{R-1}] \leq p_0$ by hypothesis:
this is exactly the content of the sequential null \cref{eq:seq-null}, and it is
the step at which a merely marginal null would fail, because a bound on
$\Prob(\Yg = 1)$ says nothing about the conditional mean given the past. With
$\lambda_R \geq 0$ and $E_{R-1} > 0$, the
bracket is at most $1$ and
$\E[E_R \mid \mathcal{F}_{R-1}] \leq E_{R-1}$. Taking expectations and iterating
from $E_0 = 1$ gives $\E[E_R] \leq 1$ for every $R$.
\end{proof}

\subsection{Proof of \texorpdfstring{\cref{thm:ville}}{Theorem (anytime validity)}}
\label{app:ville}

\begin{proof}
Set $a = 1/\alpha$ and let $T = \inf\{R \geq 1 : E_R \geq a\}$, with
$T = \infty$ if no crossing occurs. $T$ is a stopping time with respect to
$(\mathcal{F}_R)$ because $\{T \leq R\}$ is determined by $E_1,\dots,E_R$.

Fix $n \in \N$. The stopped process $(E_{T \wedge R})_{R \leq n}$ is a
non-negative supermartingale by \cref{lem:supermartingale} and the optional
stopping theorem for bounded stopping times, so
\[
  \E\bigl[E_{T \wedge n}\bigr] \leq \E[E_0] = 1 .
\]
On the event $\{T \leq n\}$ we have $E_{T \wedge n} = E_T \geq a$ by definition
of $T$. Since $E_{T \wedge n} \geq 0$ everywhere,
\[
  1 \geq \E\bigl[E_{T\wedge n}\bigr]
    \geq \E\bigl[E_{T \wedge n}\Ind{T \leq n}\bigr]
    \geq a\,\Prob(T \leq n),
\]
so $\Prob(T \leq n) \leq 1/a = \alpha$ for every $n$. The events
$\{T \leq n\}$ increase to $\{T < \infty\} = \{\sup_{R \geq 1} E_R \geq a\}$, so
by continuity from below
\[
  \Prob\Bigl(\sup_{R \geq 1} E_R \geq 1/\alpha\Bigr)
  = \lim_{n \to \infty} \Prob(T \leq n) \leq \alpha .
\]
Because the bound is on the supremum over the entire path, it holds
simultaneously for all stopping rules: any rule that issues a certificate does
so only on the event $\{T < \infty\}$, whose probability is at most $\alpha$
under $H_0$, whatever the rule.
\end{proof}

\subsection{Proof of \texorpdfstring{\cref{prop:sprt}}{Proposition (SPRT recovery)}}
\label{app:sprt}

\begin{proof}
With $\lambda^{\star} = (p_1-p_0)/\bigl(p_0(1-p_0)\bigr)$, evaluate the betting
factor at each outcome.

For $y = 1$:
\[
  1 + \lambda^{\star}(1 - p_0)
  = 1 + \frac{(p_1-p_0)(1-p_0)}{p_0(1-p_0)}
  = 1 + \frac{p_1-p_0}{p_0}
  = \frac{p_1}{p_0}.
\]

For $y = 0$:
\[
  1 - \lambda^{\star} p_0
  = 1 - \frac{(p_1-p_0)p_0}{p_0(1-p_0)}
  = 1 - \frac{p_1-p_0}{1-p_0}
  = \frac{(1-p_0)-(p_1-p_0)}{1-p_0}
  = \frac{1-p_1}{1-p_0}.
\]

Both agree with $(p_1/p_0)^{y}\bigl((1-p_1)/(1-p_0)\bigr)^{1-y}$, which is the
likelihood ratio of $\mathrm{Bern}(p_1)$ to $\mathrm{Bern}(p_0)$ at $y$.
Substituting into \cref{eq:eprocess} makes $E_R$ the product of per-observation
likelihood ratios, i.e.\ the SPRT statistic.

Admissibility: $\lambda^\star \geq 0$ since $p_1 > p_0$, and
$\lambda^{\star} < 1/p_0$ because
$(p_1-p_0)/(1-p_0) < 1$ whenever $p_1 < 1$.
\end{proof}

\subsection{Proof of \texorpdfstring{\cref{prop:mixture}}{Proposition (mixture)}}
\label{app:mixture}

\begin{proof}
\emph{Supermartingale.} Each $E^{\lambda}$ is a non-negative supermartingale by
\cref{lem:supermartingale}. For non-negative weights summing to one,
\[
  \E\bigl[E_R^{\mathrm{mix}} \mid \mathcal{F}_{R-1}\bigr]
  = \sum_{\lambda} \pi(\lambda)\,\E\bigl[E_R^{\lambda} \mid \mathcal{F}_{R-1}\bigr]
  \leq \sum_{\lambda} \pi(\lambda) E_{R-1}^{\lambda}
  = E_{R-1}^{\mathrm{mix}},
\]
where exchanging expectation and the finite sum is immediate. Non-negativity is
inherited, and $E_0^{\mathrm{mix}} = \sum_\lambda \pi(\lambda) = 1$. Hence
\cref{thm:ville} applies verbatim.

\emph{Regret.} All terms are non-negative, so for any fixed $\lambda$,
$E_R^{\mathrm{mix}} \geq \pi(\lambda) E_R^{\lambda}$. Taking logarithms,
$\log E_R^{\mathrm{mix}} \geq \log E_R^{\lambda} - \log(1/\pi(\lambda))$, and
maximising the right-hand side over $\lambda \in \Lambda$ gives the claim. The
bound is pathwise: it holds for every realisation, not merely in expectation.
\end{proof}

\section{Formula Catalogue}
\label{app:formulas}

The v1 framework was published with several formulas withheld under a patent
claim. That claim is withdrawn. This appendix is the complete disclosure: every
formula, its v1 equation number, where it is stated in this paper, and whether it
is implemented in the released library. Nothing in this table is redacted, and
the ``theory only'' entries are marked as such rather than left ambiguous.

\begin{table}[h]
\centering
\small
\caption{The F1--F12 catalogue. ``v1 ref'' cites the framework paper
\citep{bhardwaj2026abc}; ``here'' points into this paper; ``code'' names the
module in the released library or records that the item is theoretical.}
\label{tab:formulas}
\setlength{\tabcolsep}{4pt}
\begin{tabular}{@{}l>{\raggedright\arraybackslash}p{3.1cm}>{\raggedright\arraybackslash}p{2.5cm}l>{\raggedright\arraybackslash}p{3.4cm}@{}}
\toprule
& quantity & v1 ref & here & code \\
\midrule
F1  & composite drift $D(t)$        & Def.~3.12, eq.~7--9 & \cref{def:drift}   & \texttt{metrics/\allowbreak drift.py} \\
F2  & $(p,\delta,k)$-satisfaction   & Def.~3.7, eq.~3--4  & \cref{def:pdk}     & \texttt{certification/\allowbreak satisfaction.py} \\
F3  & OU drift dynamics             & Def.~4.1, eq.~14    & \cref{def:ou}      & theory only \\
F4  & Lyapunov stationary law       & Thm.~4.3, eq.~19,22 & \cref{thm:ou}      & theory only \\
F5  & compositional guarantee       & Thm.~4.11, eq.~28--29 & \cref{def:naive-bound} & \texttt{certification/\allowbreak composition.py} (bound only; C1--C5 unverified) \\
F6  & SPRT certification            & §7                  & \cref{def:sprt}    & \texttt{certification/\allowbreak sprt.py} \\
F7  & Hoeffding fixed-sample        & --- (baseline)      & \cref{def:hoeffding} & \texttt{certification/\allowbreak sprt.py} \\
F8  & reliability index $\Theta$    & Def.~3.20, eq.~13   & \cref{def:theta}   & \texttt{metrics/\allowbreak theta.py} \\
F9  & contract tuple                & Def.~3.1            & \cref{def:contract} & \texttt{models.py} \\
F10 & JSD distributional drift      & eq.~9               & \cref{def:jsd}     & \texttt{metrics/\allowbreak drift.py} \\
F11 & constraint evaluation scores  & Def.~3.6, eq.~1--2  & \cref{def:scores}  & \texttt{evaluator/\allowbreak engine.py} \\
F12 & per-action complexity         & Prop.~4.15          & \cref{prop:cost}   & measured, \texttt{benchmarks/} \\
\bottomrule
\end{tabular}
\end{table}

\subsection{Corrections to the v1 statements}
\label{app:corrections}

Three discrepancies between the v1 paper and its supporting documents are worth
recording, because a reader reconstructing the framework from either could adopt
the wrong form.

\paragraph{F2, condition (ii).}
Supporting documents state the soft guarantee as the deterministic bound
$\max_t |C_{\mathrm{soft}}(t) - 1| \leq \delta$. The v1 paper's eq.~4, restated
as \cref{eq:pdk-soft}, is a \emph{probabilistic} guarantee about recoverable
compliance. These are different conditions and the deterministic one is strictly
stronger. \Cref{def:pdk} follows the paper.

\paragraph{F5, the condition list.}
Supporting documents present ``C1--C5'' as a single list. The v1 paper uses
C1--C4 for the deterministic composition theorem (Def.~4.7, Thm.~4.9) and adds
C5, conditional independence, only for the probabilistic theorem (Thm.~4.11).
Collapsing them obscures which result needs which assumption --- and C5 is the
one this paper is about. \Cref{def:c1c4,def:c5} keep them separate.

\paragraph{F8, the fourth term.}
Supporting documents describe $\alpha_4$ as weighting a ``recovery success
rate''. The v1 paper's Def.~3.20 defines the fourth term as the stress resilience
index $S = \E[C(t) \mid \text{stressed}] / \E[C(t) \mid \text{baseline}]$
(eq.~12). Recovery success and stress resilience are distinct quantities.
\Cref{def:theta} follows the paper and notes the conflation.

\subsection{Scope of the disclosure}

F3 and F4 are stated and proved (\cref{def:ou}, \cref{thm:ou},
\cref{app:ou}) but are not implemented and not measured in this paper: no
experiment here estimates $\alpha$, $\gamma$, or $\sigma$. F5's bound is
implemented, but conditions C1--C5 are not machine-verified by the library --- a
caller composing two contracts is responsible for discharging them, and
\cref{sec:evaluation} exists because C5 in particular frequently cannot be
discharged. We flag both rather than let the catalogue imply that a listed
formula is an enforced one.

\section{Reproduction and Artifact}
\label{app:repro}

\subsection{What is released}

The library, the contracts, the mission generators, the deterministic scoring
code, the preregistration, the simulation benchmarks, and the analysis scripts
are released under AGPL-3.0. Every number in \cref{sec:evaluation} is
\emph{regenerated} by those scripts rather than transcribed by hand: each E1 and
E5 statistic is printed output of \texttt{scripts/e1\_final.py}, and each E3,
E4, and E6 table is printed output of the corresponding benchmark.

The per-mission logs are the one artifact not in the repository. They carry the
full model output for every mission in the corpus, and we release them on
request rather than by default. Everything needed to \emph{regenerate} them is
public --- the generators, the frozen sampling configuration of
\cref{sec:e1-setup}, the preregistration, and the scoring code --- so the
pipeline reproduces end to end without them. What they save a replicator is the
inference spend, not the method.

\subsection{Reproducing each experiment}

\begin{description}[leftmargin=2.4em,style=nextline,itemsep=0.25em]
  \item[E1, E5 --- dependence and cross-backend replication.]
    \texttt{scripts/e1\_final.py} reads the per-arm JSONL logs and emits every
    cell count, estimator, bootstrap interval, and arm contrast in
    \cref{tab:e1-series2,tab:e1-contrasts,tab:e1-control,tab:e5}. Its defaults
    are the published settings --- seed \texttt{20260813}, $B = 2000$, failure
    defined as $\lnot\,\texttt{hard\_ok}$, tables keyed on
    \texttt{component\_id} --- so a bare invocation reproduces the paper. Cell
    counts and point estimates reproduce exactly; percentile bootstrap
    endpoints are resampling-dependent and reproduce to within Monte Carlo
    error at this $B$, which does not move any reported verdict.
  \item[E2 --- floor lifting.]
    \texttt{benchmarks/} reconstructs the $m{=}4$ pass matrix and calls
    \texttt{moment\_cp\_box\_floor} and \texttt{moment\_lp\_all\_success\_bounds}
    at both moment families and both Bonferroni allocations.
  \item[E3 --- coverage collapse.]
    \Cref{tab:e3} is produced by exactly two invocations of
    \texttt{benchmarks/coverage\_collapse\_sim.py}:
    \texttt{-{}-witness adversarial} and \texttt{-{}-witness gaussian}. The
    script's defaults are the published settings ($200$ replications,
    $n_{\mathrm{boot}} = 500$, seed \texttt{20260812}), so a bare run
    reproduces the table; it prints its parameters on every run, so a
    reduced-fidelity result cannot be mistaken for the published one. The
    Gaussian moments, the LP minimiser and $\Delta$ are computed by
    deterministic quadrature and linear programming inside the same script,
    so every constant in \cref{ex:witness} and \cref{sec:e3} has a single
    source.
  \item[E4 --- anytime validity.]
    \texttt{benchmarks/eprocess\_type1\_sim.py}, defaulting to the paper
    settings ($p_0 = 0.8$, $\alpha = 0.05$, 8000 streams of 500
    missions, seed 42) and sweeping the admissible range of $\lambda$.
  \item[E6 --- ablation.]
    \texttt{benchmarks/deff\_ablation.py} measures lag-$k$ autocorrelation and
    design effects, then recomputes the certified floor at a conceded design
    effect.
\end{description}

\subsection{Two reproduction hazards}

Two ways of computing the wrong number from the logs produce plausible
output rather than an error. Both are recorded here.

\paragraph{Components are identified by \texttt{component\_id}, not \texttt{role}.}
Every model-backed node carries \texttt{role = "worker"}. Keying a co-failure
table on \texttt{role} silently compares an agent \emph{to itself} and returns a
degenerate table --- Jaccard exactly $1.0$ with both off-diagonal cells zero, in
every arm. The identifiers are \texttt{node\_a}/\texttt{node\_b} for
\texttt{series2}, \texttt{worker\_0..2} plus \texttt{aggregator} for
\texttt{quorum2of3}, and \texttt{branch\_a}/\texttt{branch\_b} plus
\texttt{merge} for \texttt{parallel2}. Deterministic nodes carry
\texttt{scored = false}.

\paragraph{An exception is not a data point.}
An earlier version of the E3 script read a field name that did not exist on the
result object and caught the resulting \texttt{AttributeError} in a broad
\texttt{except Exception}, scoring it as a coverage miss. It reported
``coverage $0.00$'' for runs in which the estimator never executed. The lesson
generalises beyond this script: in simulation code, catch only the exception
that models a real degenerate case, and let everything else fail loudly.

\paragraph{The failure field is \texttt{hard\_ok}.}
Component records have both \texttt{hard\_ok} and \texttt{soft\_ok}. All
dependence estimates in this paper are on the hard verdict. Reading a
differently-named field yields all-zero tables rather than an error.

\subsection{Preregistration}

\texttt{PREREGISTRATION.md} is committed in the repository and git-timestamped
before any confirmatory outcome was generated. It fixes the hypotheses, the five
conditions and their sample sizes, the frozen sampling parameters, the primary
estimator and bootstrap, the stopping rule, and the falsification criteria. Two
registered deviations are disclosed in this paper: the breadth arms under-ran
their registered $n$ (\cref{sec:e5}), and H2 is reported on a marginal-free
statistic chosen after observing the reversal (\cref{sec:threats}). The
registration was not posted to an external registry before submission, so its
timestamp rests on the repository history rather than on a third party; we
regard that as weaker than an external registration and state it plainly.

\subsection{Scale}

The full campaign is 30{,}820 scored missions across 12 arms, recorded in 13
execution logs. Each arm resumes from its own log, so an interrupted run never
repeats completed missions.

\section*{Availability and Licensing}
\addcontentsline{toc}{section}{Availability and Licensing}

The \textsc{AgentAssert} library, contracts, mission generators, scoring code,
preregistration, simulation benchmarks, and analysis scripts are released under
AGPL-3.0 at \url{https://github.com/qualixar/agentassert-abc}. The per-mission
logs are available on request; see \cref{app:repro}. The v1 framework
paper is \citet{bhardwaj2026abc}.

\section*{Author Contributions}
\addcontentsline{toc}{section}{Author Contributions}

V.P.B.\ designed the study, wrote the preregistration, implemented the library
and the experimental harness, ran the campaign, performed the analysis, and wrote
the paper. G.S.\ and A.P.B.\ contributed to contract design, reviewed the
experimental protocol, and reviewed the manuscript. All authors approved the
final version.

\section*{Use of AI Assistance}
\addcontentsline{toc}{section}{Use of AI Assistance}

An AI assistant was used for writing and for reviewing the manuscript. All
experiments, the code repository, the analysis, the testing, and the review were
carried out by the authors, who take full responsibility for the contents.

\section*{Conflicts of Interest}
\addcontentsline{toc}{section}{Conflicts of Interest}

\textsc{AgentAssert} is developed by Qualixar, with which V.P.B.\ is affiliated.
The evaluation measures the library's own certificates, so the authors are not
disinterested parties. Three mitigations are in place and we state them so a
reader can weigh them: the confirmatory hypotheses and analysis were
preregistered before outcomes existed; all scoring is by deterministic gold code
rather than by author judgement or an LLM judge; and the analysis scripts are
released, so that every reported number is the printed output of code a reader
can inspect rather than a figure transcribed by an author. The
patent claim referenced in the v1 paper has been withdrawn, and no patent
application covering the methods in this paper is pending.

\bibliographystyle{plainnat}
\bibliography{references}

\begin{thebibliography}{65}
\providecommand{\natexlab}[1]{#1}
\providecommand{\url}[1]{\texttt{#1}}
\expandafter\ifx\csname urlstyle\endcsname\relax
  \providecommand{\doi}[1]{doi: #1}\else
  \providecommand{\doi}{doi: \begingroup \urlstyle{rm}\Url}\fi

\bibitem[Bai et~al.(2022)Bai, Kadavath, Kundu, et~al.]{bai2022}
Yuntao Bai, Saurav Kadavath, Sandipan Kundu, et~al.
\newblock Constitutional {AI}: Harmlessness from {AI} feedback.
\newblock \emph{arXiv preprint arXiv:2212.08073}, 2022.

\bibitem[Barlow and Proschan(1975)]{barlow1975}
Richard~E. Barlow and Frank Proschan.
\newblock \emph{Statistical Theory of Reliability and Life Testing}.
\newblock Holt, Rinehart and Winston, 1975.

\bibitem[Barnett et~al.(2004)Barnett, Leino, and Schulte]{barnett2004}
Mike Barnett, K.~Rustan~M. Leino, and Wolfram Schulte.
\newblock The {Spec\#} programming system: An overview.
\newblock In \emph{CASSIS}, pages 49--69, 2004.

\bibitem[Bengio et~al.(2026)]{aisafety2026}
Yoshua Bengio et~al.
\newblock International {AI} safety report 2026.
\newblock \emph{arXiv preprint arXiv:2602.21012}, 2026.

\bibitem[Berdoz et~al.(2026)Berdoz, Rugli, and Wattenhofer]{tang2026agree}
Fr\'ed\'eric Berdoz, Leonardo Rugli, and Roger Wattenhofer.
\newblock Can {AI} agents agree?
\newblock \emph{arXiv preprint arXiv:2603.01213}, 2026.

\bibitem[Bertsimas and Popescu(2005)]{bertsimas2005}
Dimitris Bertsimas and Ioana Popescu.
\newblock Optimal inequalities in probability theory: A convex optimization approach.
\newblock \emph{SIAM Journal on Optimization}, 15\penalty0 (3):\penalty0 780--804, 2005.

\bibitem[Bhardwaj(2026)]{bhardwaj2026abc}
Varun~Pratap Bhardwaj.
\newblock Agent behavioral contracts: Formal specification and runtime enforcement for reliable autonomous {AI} agents.
\newblock \emph{arXiv preprint arXiv:2602.22302}, 2026.

\bibitem[Bonferroni(1936)]{bonferroni1936}
Carlo~E. Bonferroni.
\newblock Teoria statistica delle classi e calcolo delle probabilit\`a.
\newblock \emph{Pubblicazioni del R. Istituto Superiore di Scienze Economiche e Commerciali di Firenze}, 8:\penalty0 3--62, 1936.

\bibitem[Boole(1854)]{boole1854}
George Boole.
\newblock \emph{An Investigation of the Laws of Thought}.
\newblock Walton and Maberly, 1854.

\bibitem[Chase(2022)]{chase2022}
Harrison Chase.
\newblock {LangChain}.
\newblock \url{https://github.com/langchain-ai/langchain}, 2022.

\bibitem[Clarke et~al.(1999)Clarke, Grumberg, and Peled]{clarke1999}
Edmund~M. Clarke, Orna Grumberg, and Doron~A. Peled.
\newblock \emph{Model Checking}.
\newblock MIT Press, 1999.

\bibitem[Clopper and Pearson(1934)]{clopper1934}
C.~J. Clopper and E.~S. Pearson.
\newblock The use of confidence or fiducial limits illustrated in the case of the binomial.
\newblock \emph{Biometrika}, 26\penalty0 (4):\penalty0 404--413, 1934.

\bibitem[Cousot and Cousot(1977)]{cousot1977}
Patrick Cousot and Radhia Cousot.
\newblock Abstract interpretation: A unified lattice model for static analysis of programs by construction or approximation of fixpoints.
\newblock In \emph{POPL}, pages 238--252, 1977.

\bibitem[Du et~al.(2024)Du, Li, Torralba, Tenenbaum, and Mordatch]{du2023}
Yilun Du, Shuang Li, Antonio Torralba, Joshua~B. Tenenbaum, and Igor Mordatch.
\newblock Improving factuality and reasoning in language models through multiagent debate.
\newblock In \emph{ICML}, 2024.

\bibitem[Efron(1979)]{efron1979}
Bradley Efron.
\newblock Bootstrap methods: Another look at the jackknife.
\newblock \emph{The Annals of Statistics}, 7\penalty0 (1):\penalty0 1--26, 1979.

\bibitem[Endres and Schindelin(2003)]{endres2003}
Dominik~M. Endres and Johannes~E. Schindelin.
\newblock A new metric for probability distributions.
\newblock \emph{IEEE Transactions on Information Theory}, 49\penalty0 (7):\penalty0 1858--1860, 2003.

\bibitem[Ernst et~al.(2007)Ernst, Perkins, Guo, McCamant, Pacheco, Tschantz, and Xiao]{ernst2007}
Michael~D. Ernst, Jeff~H. Perkins, Philip~J. Guo, Stephen McCamant, Carlos Pacheco, Matthew~S. Tschantz, and Chen Xiao.
\newblock The {Daikon} system for dynamic detection of likely invariants.
\newblock \emph{Science of Computer Programming}, 69\penalty0 (1--3):\penalty0 35--45, 2007.

\bibitem[Fr\'echet(1951)]{frechet1951}
Maurice Fr\'echet.
\newblock Sur les tableaux de corr\'elation dont les marges sont donn\'ees.
\newblock \emph{Annales de l'Universit\'e de Lyon, Section A}, 14:\penalty0 53--77, 1951.

\bibitem[Gelman and Loken(2014)]{gelman2014}
Andrew Gelman and Eric Loken.
\newblock The statistical crisis in science.
\newblock \emph{American Scientist}, 102\penalty0 (6):\penalty0 460--465, 2014.

\bibitem[Gr\"unwald et~al.(2024)Gr\"unwald, de~Heide, and Koolen]{grunwald2024}
Peter Gr\"unwald, Rianne de~Heide, and Wouter Koolen.
\newblock Safe testing.
\newblock \emph{Journal of the Royal Statistical Society B}, 86\penalty0 (5):\penalty0 1091--1128, 2024.

\bibitem[Hailperin(1965)]{hailperin1965}
Theodore Hailperin.
\newblock Best possible inequalities for the probability of a logical function of events.
\newblock \emph{The American Mathematical Monthly}, 72\penalty0 (4):\penalty0 343--359, 1965.

\bibitem[Hammond et~al.(2025)]{multiagentrisks2025}
Lewis Hammond et~al.
\newblock Multi-agent risks from advanced {AI}.
\newblock \emph{arXiv preprint arXiv:2502.14143}, 2025.

\bibitem[Hoare(1969)]{hoare1969}
C.~A.~R. Hoare.
\newblock An axiomatic basis for computer programming.
\newblock \emph{Communications of the ACM}, 12\penalty0 (10):\penalty0 576--580, 1969.

\bibitem[Hoeffding(1940)]{hoeffding1940}
Wassily Hoeffding.
\newblock Ma{\ss}stabinvariante {K}orrelationstheorie.
\newblock \emph{Schriften des Mathematischen Instituts und des Instituts f\"ur Angewandte Mathematik der Universit\"at Berlin}, 5:\penalty0 179--233, 1940.

\bibitem[Hoeffding(1963)]{hoeffding1963}
Wassily Hoeffding.
\newblock Probability inequalities for sums of bounded random variables.
\newblock \emph{Journal of the American Statistical Association}, 58\penalty0 (301):\penalty0 13--30, 1963.

\bibitem[Hong et~al.(2024)Hong, Zhuge, Chen, et~al.]{hong2024}
Sirui Hong, Mingchen Zhuge, Jonathan Chen, et~al.
\newblock {MetaGPT}: Meta programming for a multi-agent collaborative framework.
\newblock In \emph{ICLR}, 2024.

\bibitem[Howard et~al.(2021)Howard, Ramdas, McAuliffe, and Sekhon]{howard2021}
Steven~R. Howard, Aaditya Ramdas, Jon McAuliffe, and Jasjeet Sekhon.
\newblock Time-uniform, nonparametric, nonasymptotic confidence sequences.
\newblock \emph{The Annals of Statistics}, 49\penalty0 (2):\penalty0 1055--1080, 2021.

\bibitem[Huang et~al.(2026)Huang, Li, Li, Kwon, Yu, and Zhang]{cagecal2026}
Jiatan Huang, Mingchen Li, Ziming Li, Sunjae Kwon, Hong Yu, and Chuxu Zhang.
\newblock Counterfactual graph for multi-agent {LLM} calibration.
\newblock \emph{arXiv preprint arXiv:2605.30653}, 2026.

\bibitem[Jaccard(1912)]{jaccard1912}
Paul Jaccard.
\newblock The distribution of the flora in the alpine zone.
\newblock \emph{New Phytologist}, 11\penalty0 (2):\penalty0 37--50, 1912.

\bibitem[Jimenez et~al.(2024)Jimenez, Yang, Wettig, Yao, Pei, Press, and Narasimhan]{jimenez2024}
Carlos~E. Jimenez, John Yang, Alexander Wettig, Shunyu Yao, Kexin Pei, Ofir Press, and Karthik Narasimhan.
\newblock {SWE-bench}: Can language models resolve real-world {GitHub} issues?
\newblock In \emph{ICLR}, 2024.

\bibitem[Kendall(1938)]{kendall1938}
Maurice~G. Kendall.
\newblock A new measure of rank correlation.
\newblock \emph{Biometrika}, 30\penalty0 (1--2):\penalty0 81--93, 1938.

\bibitem[Kish(1965)]{kish1965}
Leslie Kish.
\newblock \emph{Survey Sampling}.
\newblock John Wiley and Sons, New York, 1965.

\bibitem[Lamport(2002)]{lamport2002}
Leslie Lamport.
\newblock \emph{Specifying Systems: The {TLA+} Language and Tools for Hardware and Software Engineers}.
\newblock Addison-Wesley, 2002.

\bibitem[Leino(2010)]{leino2010}
K.~Rustan~M. Leino.
\newblock Dafny: An automatic program verifier for functional correctness.
\newblock In \emph{LPAR}, pages 348--370, 2010.

\bibitem[Liang et~al.(2024)Liang, He, Jiao, et~al.]{liang2023}
Tian Liang, Zhiwei He, Wenxiang Jiao, et~al.
\newblock Encouraging divergent thinking in large language models through multi-agent debate.
\newblock In \emph{EMNLP}, 2024.

\bibitem[Lin(1991)]{lin1991}
Jianhua Lin.
\newblock Divergence measures based on the {S}hannon entropy.
\newblock \emph{IEEE Transactions on Information Theory}, 37\penalty0 (1):\penalty0 145--151, 1991.

\bibitem[Liu et~al.(2024)Liu, Yu, Zhang, et~al.]{liu2024agentbench}
Xiao Liu, Hao Yu, Hanchen Zhang, et~al.
\newblock {AgentBench}: Evaluating {LLMs} as agents.
\newblock In \emph{ICLR}, 2024.

\bibitem[McDonnell et~al.(2026)McDonnell, Singh, Pham, Havlik, and O'Hare]{triage2026}
Shay~Seiya McDonnell, Avantika Singh, Quoc-Viet Pham, Vratislav Havlik, and Gregory M.~P. O'Hare.
\newblock Harnessing disagreement: Detecting correlated agreement blindness in multi-agent triage.
\newblock \emph{arXiv preprint arXiv:2607.19899}, 2026.
\newblock Accepted, PAAMS 2026.

\bibitem[Meyer(1992)]{meyer1992}
Bertrand Meyer.
\newblock Applying ``design by contract''.
\newblock \emph{Computer}, 25\penalty0 (10):\penalty0 40--51, 1992.

\bibitem[Nelsen(2006)]{nelsen2006}
Roger~B. Nelsen.
\newblock \emph{An Introduction to Copulas}.
\newblock Springer, 2nd edition, 2006.

\bibitem[Nosek et~al.(2018)Nosek, Ebersole, DeHaven, and Mellor]{nosek2018}
Brian~A. Nosek, Charles~R. Ebersole, Alexander~C. DeHaven, and David~T. Mellor.
\newblock The preregistration revolution.
\newblock \emph{Proceedings of the National Academy of Sciences}, 115\penalty0 (11):\penalty0 2600--2606, 2018.

\bibitem[Ouyang et~al.(2022)Ouyang, Wu, Jiang, et~al.]{ouyang2022}
Long Ouyang, Jeffrey Wu, Xu~Jiang, et~al.
\newblock Training language models to follow instructions with human feedback.
\newblock In \emph{NeurIPS}, 2022.

\bibitem[Park et~al.(2023)Park, O'Brien, Cai, Morris, Liang, and Bernstein]{park2023}
Joon~Sung Park, Joseph~C. O'Brien, Carrie~J. Cai, Meredith~Ringel Morris, Percy Liang, and Michael~S. Bernstein.
\newblock Generative agents: Interactive simulacra of human behavior.
\newblock In \emph{UIST}, 2023.

\bibitem[Qian et~al.(2024)Qian, Liu, Liu, et~al.]{qian2024}
Chen Qian, Wei Liu, Hongzhang Liu, et~al.
\newblock {ChatDev}: Communicative agents for software development.
\newblock In \emph{ACL}, 2024.

\bibitem[Qiao et~al.(2026)Qiao, Tong, Lim, Liu, and Pang]{verifymas2026}
Hezhe Qiao, Hanghang Tong, Ee-Peng Lim, Bing Liu, and Guansong Pang.
\newblock {VerifyMAS}: Hypothesis verification for failure attribution in {LLM} multi-agent systems.
\newblock \emph{arXiv preprint arXiv:2605.17467}, 2026.

\bibitem[Qin et~al.(2026)Qin, Luan, See, Boukhers, Yang, and Li]{govcap2026}
Xue Qin, Simin Luan, John See, Zeyd Boukhers, Cong Yang, and Zhijun Li.
\newblock Governed capability evolution: Lifecycle-time compatibility checking and rollback for {AI}-component-based systems.
\newblock \emph{arXiv preprint arXiv:2604.08059}, 2026.

\bibitem[Rafi et~al.(2026)Rafi, Ahasanuzzaman, Kim, Wang, and Chen]{falat2026}
Md~Nakhla Rafi, Md~Ahasanuzzaman, Dong~Jae Kim, Zhijie Wang, and Tse-Hsun Chen.
\newblock {FALAT}: Tracing failures in {LLM} agent trajectories via dependency-guided search.
\newblock \emph{arXiv preprint arXiv:2606.00765}, 2026.

\bibitem[Ramdas et~al.(2023)Ramdas, Grünwald, Vovk, and Shafer]{ramdas2023}
Aaditya Ramdas, Peter Grünwald, Vladimir Vovk, and Glenn Shafer.
\newblock Game-theoretic statistics and safe anytime-valid inference.
\newblock \emph{Statistical Science}, 38\penalty0 (4):\penalty0 576--601, 2023.

\bibitem[Rebedea et~al.(2023)Rebedea, Dinu, Sreedhar, Parisien, and Cohen]{rebedea2023}
Traian Rebedea, Razvan Dinu, Makesh Sreedhar, Christopher Parisien, and Jonathan Cohen.
\newblock {NeMo} guardrails: A toolkit for controllable and safe {LLM} applications with programmable rails.
\newblock In \emph{EMNLP System Demonstrations}, 2023.

\bibitem[Robbins(1970)]{robbins1970}
Herbert Robbins.
\newblock Statistical methods related to the law of the iterated logarithm.
\newblock \emph{The Annals of Mathematical Statistics}, 41\penalty0 (5):\penalty0 1397--1409, 1970.

\bibitem[Shafer(2021)]{shafer2021}
Glenn Shafer.
\newblock Testing by betting: A strategy for statistical and scientific communication.
\newblock \emph{Journal of the Royal Statistical Society A}, 184\penalty0 (2):\penalty0 407--431, 2021.

\bibitem[Shafer and Vovk(2019)]{shafer2019}
Glenn Shafer and Vladimir Vovk.
\newblock \emph{Game-Theoretic Foundations for Probability and Finance}.
\newblock Wiley, 2019.

\bibitem[Shinn et~al.(2023)Shinn, Cassano, Berman, Gopinath, Narasimhan, and Yao]{shinn2023}
Noah Shinn, Federico Cassano, Edward Berman, Ashwin Gopinath, Karthik Narasimhan, and Shunyu Yao.
\newblock {Reflexion}: Language agents with verbal reinforcement learning.
\newblock In \emph{NeurIPS}, 2023.

\bibitem[Simmons et~al.(2011)Simmons, Nelson, and Simonsohn]{simmons2011}
Joseph~P. Simmons, Leif~D. Nelson, and Uri Simonsohn.
\newblock False-positive psychology: Undisclosed flexibility in data collection and analysis allows presenting anything as significant.
\newblock \emph{Psychological Science}, 22\penalty0 (11):\penalty0 1359--1366, 2011.

\bibitem[Sklar(1959)]{sklar1959}
Abe Sklar.
\newblock Fonctions de r\'epartition \`a $n$ dimensions et leurs marges.
\newblock \emph{Publications de l'Institut de Statistique de l'Universit\'e de Paris}, 8:\penalty0 229--231, 1959.

\bibitem[Slepian(1962)]{slepian1962}
David Slepian.
\newblock The one-sided barrier problem for {G}aussian noise.
\newblock \emph{Bell System Technical Journal}, 41\penalty0 (2):\penalty0 463--501, 1962.

\bibitem[Ville(1939)]{ville1939}
Jean Ville.
\newblock \emph{\'Etude critique de la notion de collectif}.
\newblock Gauthier-Villars, Paris, 1939.

\bibitem[Wald(1945)]{wald1945}
Abraham Wald.
\newblock Sequential tests of statistical hypotheses.
\newblock \emph{The Annals of Mathematical Statistics}, 16\penalty0 (2):\penalty0 117--186, 1945.

\bibitem[Wang et~al.(2023)Wang, Wei, Schuurmans, et~al.]{wang2023}
Xuezhi Wang, Jason Wei, Dale Schuurmans, et~al.
\newblock Self-consistency improves chain of thought reasoning in language models.
\newblock In \emph{ICLR}, 2023.

\bibitem[Waudby-Smith and Ramdas(2024)]{waudbysmith2024}
Ian Waudby-Smith and Aaditya Ramdas.
\newblock Estimating means of bounded random variables by betting.
\newblock \emph{Journal of the Royal Statistical Society B}, 86\penalty0 (1):\penalty0 1--27, 2024.

\bibitem[Wu et~al.(2024)Wu, Bansal, Zhang, et~al.]{wu2024autogen}
Qingyun Wu, Gagan Bansal, Jieyu Zhang, et~al.
\newblock {AutoGen}: Enabling next-gen {LLM} applications via multi-agent conversation.
\newblock In \emph{COLM}, 2024.

\bibitem[Yao et~al.(2023)Yao, Zhao, Yu, Du, Shafran, Narasimhan, and Cao]{yao2023}
Shunyu Yao, Jeffrey Zhao, Dian Yu, Nan Du, Izhak Shafran, Karthik Narasimhan, and Yuan Cao.
\newblock {ReAct}: Synergizing reasoning and acting in language models.
\newblock In \emph{ICLR}, 2023.

\bibitem[Yule(1900)]{yule1900}
G.~Udny Yule.
\newblock On the association of attributes in statistics: With illustrations from the material of the childhood society.
\newblock \emph{Philosophical Transactions of the Royal Society A}, 194:\penalty0 257--319, 1900.

\bibitem[Zheng et~al.(2025)Zheng, Chen, Yin, Zhang, Zeng, and Tian]{zhang2025byzantine}
Lifan Zheng, Jiawei Chen, Qinghong Yin, Jingyuan Zhang, Xinyi Zeng, and Yu~Tian.
\newblock Rethinking the reliability of multi-agent system: A perspective from {Byzantine} fault tolerance.
\newblock \emph{arXiv preprint arXiv:2511.10400}, 2025.

\bibitem[Zhou et~al.(2024)Zhou, Xu, Zhu, et~al.]{zhou2024}
Shuyan Zhou, Frank~F. Xu, Hao Zhu, et~al.
\newblock {WebArena}: A realistic web environment for building autonomous agents.
\newblock In \emph{ICLR}, 2024.

\end{thebibliography}

\end{document}